\newcommand{\cmark}{\ding{51}}%
\newcommand{\xmark}{\ding{55}}%
\def\eqref#1{equation~\ref{#1}}
\def\ceil#1{\lceil #1 \rceil}
\def\1{\bm{1}}
\DeclareMathAlphabet{\mathsfit}{\encodingdefault}{\sfdefault}{m}{sl}
\SetMathAlphabet{\mathsfit}{bold}{\encodingdefault}{\sfdefault}{bx}{n}
\newcommand{\cA}{\mathcal{A}}\newcommand{\cB}{\mathcal{B}}
\newcommand{\cF}{\mathcal{F}}
\newcommand{\cH}{\mathcal{H}}
\newcommand{\cM}{\mathcal{M}}\newcommand{\cN}{\mathcal{N}}\newcommand{\cO}{\mathcal{O}}
\newcommand{\cR}{\mathcal{R}}
\newcommand{\cS}{\mathcal{S}}
\newcommand{\cW}{\mathcal{W}}\newcommand{\cX}{\mathcal{X}}
\DeclareMathOperator*{\argmax}{arg\,max}
\DeclareMathOperator*{\argmin}{arg\,min}
\theoremstyle{plain}
\newtheorem{theorem}{Theorem}[section]
\newtheorem{lemma}[theorem]{Lemma}
\newtheorem{corollary}[theorem]{Corollary}
\theoremstyle{definition}
\newtheorem{definition}[theorem]{Definition}
\theoremstyle{remark}
\newtheorem{remark}[theorem]{Remark}
\theoremstyle{fact}
\newtheorem{fact}[theorem]{Fact}
\newtheorem{question}[theorem]{Question}
\newcommand{\note}[1]{{\color{black!55}#1}}
\def\mg{{\gamma}}
\def\CirOne\textcircled{\raisebox{-0.9pt}{1}}
\def\CirTwo\textcircled{\raisebox{-0.9pt}{2}}
\def\CirThree\textcircled{\raisebox{-0.9pt}{3}}
\def\ajlgd{{\mathcal{A}_{\textsf{JLGD}}}}
\def\angd{{\mathcal{A}_{\textsf{NGD}}}}
\def\apvt{{\mathcal{A}_{\textsf{PrivTune}}}}
\def\arta{\mathcal{A}_{\textsf{Iter}}}
\def\dS{{\mathcal{S}}}
\def\bigO{{\mathcal{O}}}
\def\bigOt{{\Tilde{\mathcal{O}}}}
\newcommand{\popuzo}[1]{\mathcal{R}_{D} (#1)}
\newcommand{\empramp}[2]{\Tilde{\mathcal{R}}^{#2}_{S} (#1)}
\newcommand{\emphg}[2]{\Tilde{L}^{#2}_{S} (#1)}
\newcommand{\Proj}[2]{\mathrm{Proj}_{#1} (#2)}
\def\*#1{\mathbf{#1}}
\def\tVC{{\mathrm{VC}(\mathcal{H}_k)\log(2n) + \log(\beta/4)}}
\begin{document}

\title{Adapting to Linear Separable Subsets with Large-Margin \\ in Differentially Private Learning}
\author[1]{Erchi Wang}
\author[2]{Yuqing Zhu}
\author[1]{Yu-Xiang Wang}
\affil[1]{Halıcıoğlu Data Science Institute, UC San Diego}
\affil[2]{LinkedIn}
\affil[ ]{\texttt{\{erw011, yuxiangw\}@ucsd.edu},\;
        \texttt{\{yuqzhu\}@linkedin.com}}

\date{}

\maketitle

\begin{abstract}
  This paper studies the problem of differentially private empirical risk minimization (DP-ERM) for binary linear classification. 
  We obtain an efficient $(\varepsilon,\delta)$-DP algorithm with an empirical zero-one risk bound of $\tilde{O}\left(\frac{1}{\gamma^2\varepsilon n} + \frac{|S_{\mathrm{out}}|}{\gamma n}\right)$ where $n$ is the number of data points, $S_{\mathrm{out}}$ is an arbitrary subset of data one can remove and $\gamma$ is the margin of linear separation of the remaining data points (after $S_{\mathrm{out}}$ is removed). Here, $\tilde{O}(\cdot)$ hides only logarithmic terms. In the agnostic case, we improve the existing results when the number of outliers is small. Our algorithm is highly adaptive because it does not require knowing the margin parameter $\gamma$ or outlier subset $S_{\mathrm{out}}$. We also derive a utility bound for the advanced private hyperparameter tuning algorithm.
  
\end{abstract}

\section{Introduction}

We investigate differentially private empirical risk minimization (DP-ERM) \citep{cms11, bst14} under the setting of learning large-margin halfspaces. In classification problems, algorithms that create decision boundaries with larger separations between classes, a.k.a. large margins, tend to have stronger generalization performance \citep{va98, cp11}. This principle has been leveraged in many classical machine learning algorithms, such as the Perceptron \citep{ros58, nov62}, AdaBoost \citep{fs97}, and Support Vector Machine \citep{va98, cv95}. It is natural to ask whether we can design a \emph{data-adaptive DP algorithm} to benefit from a large-margin condition.
This is challenging because differential privacy requires the algorithm's output to be ``close enough'' when two datasets differ by one data point \citep{dmns06, FK07, da14}. A single point change can cause the margin to shift drastically, from a positive value to zero or the reverse, as demonstrated in 
Figure~\ref{fig: margin_demo}. Despite the theoretical challenge, empirically, a larger margin is believed to be the reason why pre-trained features help private learners to work better \citep{de22}.

\begin{figure}[h]
\centering
\includegraphics[scale=.35]{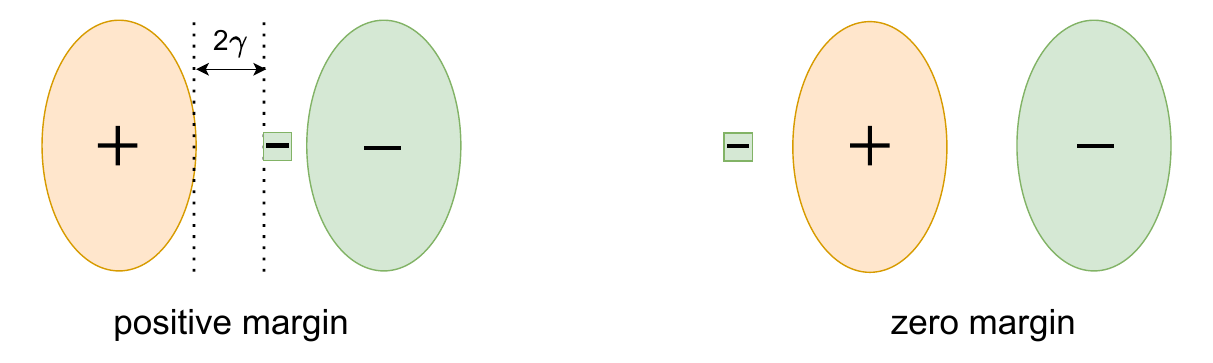}
\vspace{-0.5em}
\caption{\small{Margin is unstable after changing one point (marked by the green square), thus hard to design a data-dependent DP-mechanism using standard techniques (e.g., smoothed sensitivity \citep{nissim2007smooth} or propose-test-release \citep{cj09}).}}
\label{fig: margin_demo}
\vspace{-1em}
\end{figure} 

To validate this hypothesis, we evaluated the margin of SVM classifiers trained on the CIFAR-10 dataset using pre-trained features from Vision Transformer (ViT) \citep{dosovitskiy2020image} and ResNet-50 \citep{he2016deep}. Interestingly, as reported in Figure~\ref{fig: margin_removal}, while the margin remains at zero for the whole dataset, if we allow removing a few ``trouble makers'' (misclassified or other points near the decision boundary), it becomes clear that ViT-based features achieve a larger margin and increase more quickly than ResNet-50-based features as removing more outliers.%

\begin{figure}[h]
\centering
\vspace{-0.33cm}
\includegraphics[height=0.46\textwidth,width=0.63\textwidth]{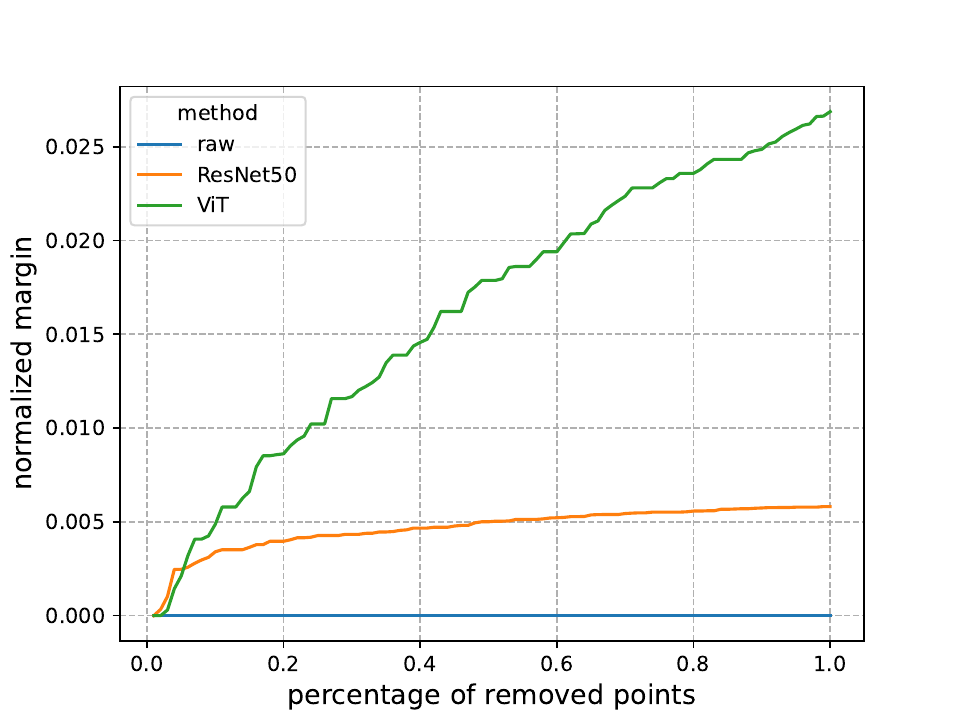}
\caption{\small{The number of removed points (in percentage of $n$) vs normalized margin. Classes 1 and 9 from the CIFAR10 training set are used.
As more points are removed, the margin increases. We include more implementation details in Appendix~\ref{apx: removal_mg_experiment}}. }
\label{fig: margin_removal}
\end{figure} 

\begin{table*}[t]
    \begin{center}
    \caption{\small{Summary of the results on the population zero-one loss (with high probability). 
    \emph{Realizable case} means that the data is linearly separable with \emph{geometric margin} at least $\gamma$. In the \emph{agnostic case}, $\gamma$ is the data-independent \emph{confidence margin} parameter in Rows 2 and 3. In Row 4, $\gamma$ denotes the geometric margin of a data subset after $S_{\mathrm{out}}$ is removed.
    For a clean comparison, we assume $\|\*x\|, \|\*w\|$ to be $\cO(1)$, the privacy loss $\varepsilon\leq1$, and omit the dependence on factors of order $\mathrm{polylog}(n,\varepsilon,1/\delta,1/\beta)$; $\emphg{\cdot}{\gamma}$ ($\empramp{\cdot}{\gamma}$) denotes average of empirical $\gamma$-hinge loss (empirical $\gamma$-ramp loss).} }
    \label{demo-table}
    \centering
        \begin{threeparttable}
        \resizebox{\textwidth}{!}
        {
        \renewcommand{\arraystretch}{2.5} 
            \begin{tabular}{ c |c| c| c  } 
                Source &  Realizable case & Agnostic case & polynomial-time?
                \\
                \hline
                \citet[Thm.~6, Thm.~11]{nuz20} 
                & $\displaystyle \frac{1}{n\gamma^2\varepsilon}$ \, \textcolor{gray}{known $\gamma$}
                & NA  
                & \cmark \\
                \hline
                \citet[Thm.~3.1]{bms22}
                & $\displaystyle
                    \frac{1}{n \gamma^2 \varepsilon} $   
                & $\displaystyle  \frac{1}{n \gamma^2 \varepsilon} + 
                \min_{w \in \mathcal{B}^d (1)} 
                        \Bigg( 
                            \empramp{w}{\gamma} 
                            +  \sqrt{
                            \left( \frac{1}{n^2 \gamma^2} + \frac{1}{n} \right) \cdot \empramp{w}{\gamma} }
                        \Bigg)$                
                & \xmark \\
                \hline
                \citet[Thm.~3.2]{bms22}
                &  $\displaystyle  \frac{1}{n^{\nicefrac{1}{2}}\gamma \varepsilon^{\nicefrac{1}{2}}}$
                &  $
                \begin{aligned}
                \displaystyle \frac{1}{n^{\nicefrac{1}{2}}\gamma \varepsilon^{\nicefrac{1}{2}}}
                + 
                \min_{w \in \mathcal{B}^d{(1)}} \emphg{w}{\gamma}
                \end{aligned}
                $
                & \cmark \\
                \hline
                Theorem~\ref{thm: master}
                & $\displaystyle  \frac{1}{n\gamma^2\varepsilon}$
                & $\displaystyle  \min_{\substack{S_{\mathrm{out}}\subset S  \\ \gamma:= \mathrm{margin}(S \setminus S_{\mathrm{out}})}}
                \left( \frac{|S_{\mathrm{out}}|}{n\gamma} +  \frac{1}{n\gamma^2\varepsilon} \right)$
                & \cmark \\
                \hline
            \end{tabular}
        }
        \end{threeparttable}
    \end{center}
\end{table*}

After approximately $0.1\%$ of points are removed, the remaining data becomes linearly separable for both ViT and ResNet-50-based features. This phenomenon also relates to neural collapse theory \citep{papyan2020prevalence, wzsw24}, which states that the last layer of a deep neural network trained on $K$-class classification task converges to $K$ distinct points. The geometric margin (Eq.~\ref{eqn: geo_mg}) does not capture this phenomenon. Since the CIFAR-10 dataset is not linearly separable, the geometric margin is zero. This inspires us to ask:

\textit{Can we design an efficient differentially private algorithm for agnostic learning halfspaces while adapting to linear separable subsets with large margins?}

We give an affirmative answer to this question and summarize our contribution here.

\subsection{Our Contribution}
\textbf{An efficient algorithm adapting to large margins}\quad We propose an efficient algorithm adapting to large margin subsets differential privately, without the need for any tuning parameters nor assumptions of realizability, as stated in Algo.~\ref{alg: master}.

\textbf{Inlier-outlier analysis of gradient descent}\quad We propose margin inlier/outlier (Definition~\ref{def: mg_inlier}), as a generalization of geometric margin. We obtain data-adaptive upper bounds on both empirical and population risk, which depend on the number of margin outliers (Theorem~\ref{thm: master}). This analysis technique allows us to achieve a $\sqrt{n}$ improvement in the population risk bound compared to Theorem 3.2 in \citep{bms22}. Moreover, the term $\nicefrac{|S_{\mathrm{out}}|}{\gamma n}$ naturally adapts to the problem's complexity, allowing our results to extend to the agnostic case, where directly applying Theorem 6 from \citep{nuz20} is not feasible. Finally, this analysis technique enables us to extend to an unbounded parameter space. This avoids the assumption on the bounded domain as stated in \citep{nuz20, bms22}. 

\subsection{Related Work}
\textbf{Prior work on DP large-margin learning.}\quad To the best of our knowledge, \citet{nuz20} is the first work to achieve a dimension-free excess population risk for differentially private learning of large-margin halfspaces. Under the assumptions of linear separability and known geometric margin, they are able to achieve a fast rate of $\bigOt\left(\frac{1}{n\gamma^2\varepsilon}\right)$ using an efficient approximate DP algorithm. However, in practice, the linear separability of the data and the geometric margin value are often unknown without directly accessing the data. To address this, \citet{bms22} uses the confidence margin \citep{mrt18}, a data-independent quantity, and achieves an $\Tilde{O}\left(\frac{1}{\gamma\sqrt{n}} + \frac{1}{\gamma^2 n \varepsilon}\right)$ excess empirical risk on surrogate loss in the agnostic case.

Pure-DP algorithms are also discussed in \citet{nuz20} and \citet{bms22}. However, they suffer from computational inefficiency due to the use of exponential mechanisms on candidate sets of size $\bigO(\exp(\nicefrac{1}{\gamma}))$. For completeness, we include these results for both the agnostic and realizable cases and present a comparison with our result in Table~\ref{demo-table}.

\citet{bcs20} and \citet{gkmn21} study private learning of large-margin halfspaces with an emphasis on robustness. Both two works make specific assumptions about realizability and noise oracles. Moreover, proposed algorithms require taking the geometric margin as input, i.e., \citet[Algo.~4]{bcs20} and \citet[Algo.~1]{gkmn21}.

For earlier works with $\mathrm{poly}(d)$ dependence in population risk, we refer the readers to Section 1.2 of \citet{nuz20} for a comprehensive survey.

\textbf{Private hyperparameter tuning.}\quad 
To achieve the desired results, the margin parameter also needs to be selected. \citet{bms22} addresses this by privately selecting the optimal confidence margin using the generalized exponential mechanism \citep{SoyaAdam2016Lipschitz}. However, their approach relies on a score function that incorporates the minimized empirical risk over the hypothesis space, which is generally hard to compute \citep[Lemma~F.3]{bms22}. 

In addition to classical differentially private selection mechanisms such as the exponential mechanism \citep{mt07}, the report-noisy-max \citep{da14}, and the sparse vector technique \citep{dnrrgv09, zw20}, there have been notable advancements in private hyperparameter tuning. This research direction began with \citep{lt18} and was further refined by \citep{ps22}, which utilized R\'enyi differential privacy, and by \citep{kk24}, which incorporated subsampling techniques. More recently, \citep{krw24} further improves the practicality by providing tighter privacy accounting through the use of privacy profiles. These advanced methods allow the final privacy budget to grow logarithmically with the number of repetitions, as opposed to the (sub)linear growth observed with (advanced) naive composition \citep{da14}. 

We examine two hyperparameter tuning methods: a simple brute-force approach (Algo.~\ref{alg: rta}) and an advanced private tuning method (Algo.~\ref{alg: priv_tune}). Their utility and computational efficiency are compared in Section~\ref{ssec: hyper_compare}.

\section{Preliminary}

\textbf{Symbols and notations.}\quad
We use boldface letters (e.g., $\mathbf{x}$) to denote vectors. Calligraphic letters are used as follows: $\cF$ represents the function class, $\cH$ denotes the classifier class, $\mathcal{X}$ represents the data universe, and $\cA$ denotes a random algorithm. The notation $\mathbb{E}_{\cA}[\cdot]$ indicates the expectation taken over the randomness of algorithms. The binary operation $\cdot \wedge \cdot$ denotes taking the minimum between the two inputs. Throughout this paper, $\langle \cdot, \cdot \rangle$ represents the inner product, and $\|\cdot\|$ denotes the induced $L_2$ norm. The set $\mathcal{B}^d(r) := \{\mathbf{x} \in \mathbb{R}^d \mid \|\mathbf{x}\| \leq r\}$ is defined as the $L_2$ ball in $\mathbb{R}^d$ with radius $r$. The sign function is denoted by $\mathrm{sign}(\cdot)$, and the indicator function is represented by $\mathbf{1}(\cdot)$. For a set $A \subset \mathbb{R}^d$, we use $|A|$ to denote its cardinality. The $L_2$ projection operator onto $A$ is denoted as $\Proj{A}{\mathbf{x}} := \argmin\limits_{\mathbf{v} \in A} \|\mathbf{v} - \mathbf{x}\|$. The power set of \( A \) is denoted by \( 2^A \). The probability generating function of a random variable \( K \) is represented by \( \mathrm{PGF}_K(\cdot) \).  Throughout this paper, the notation \( \bigO \) is used to suppress universal constants, while \( \bigOt \) hides \( \mathrm{polylog}(n, 1/\beta, 1/\delta) \) factors.

\textbf{Differential Privacy.}\quad We call two datasets $X, X^\prime \in \mathcal{X}^n$ neighboring datasets if they differ up to one element. 
\begin{definition}[Differential privacy \citep{dmns06}]
    A randomized algorithm $\mathcal{M}: \mathcal{X}^n \rightarrow \Omega$ is $(\varepsilon, \delta)$-DP (differential private) for $\varepsilon, \delta \in (0,1)$ if for any neighbouring datasets $X, X^\prime \in \mathcal{X}^n$ and any measurable subset $O \subseteq \Omega$, $\mathcal{M}$ satisfies:
    \begin{equation*}
        \mathbb{P}(\mathcal{M}(X)\in O) \leq e^{\varepsilon}\mathbb{P}(\mathcal{M}(X^\prime)\in O) + \delta
    \end{equation*}
\end{definition}

\begin{definition}[Gaussian Differential Privacy (GDP) \citep{drs19}]
    We say a mechanism $\mathcal{M}$ satisfies $\mu$-GDP if for any neighbouring dataset $X, X^\prime \in \cX^n$,
    \begin{equation*}
        H_{e^{\varepsilon}} (\mathcal{M}(X) || \mathcal{M}(X^\prime)) \leq H_{e^{\varepsilon}} (\mathcal{N}(0,1) || \mathcal{N}(\mu,1)), \;\forall \varepsilon \in \mathbb{R},
    \end{equation*}
    where $\cN(0,1)$ stands for standard normal distribution and $H_{e^{\varepsilon}}(\cdot || \cdot)$ denotes the hockey-stick divergence between two probability distributions \citep{Sason2016}.
\end{definition}
\begin{remark}
While $(\varepsilon, \delta)$-DP offers a direct description of privacy leakage, 
it's messy and loose when composing multiple mechanisms together \citep[Chapter~6]{na21}. 
Instead, GDP not only has a clean form for composition: 
composing $\{\mathcal{M}_i\}_{i=1}^k$ mechanisms with each mechanism being $\mu_k$-GDP yields $\Big(\sqrt{\sum_{i=1}^k \mu_i^2}\Big)$-GDP, but offers a tight characterization for Gaussian mechanism: Gaussian mechanism with L2 sensitivity $\Delta$ and noise parameter $\sigma$ satisfies $\nicefrac{\Delta}{\sigma}$-GDP.
Thus, throughout this paper, we use GDP and its composition when possible, though the final result is still stated in $(\varepsilon, \delta)$-DP.    
\end{remark}

\textbf{Geometric margin.}\quad 
We first define the margin of dataset $S$ with respect to a linear classifier $h_{\*w}:(\*x,y) \mapsto \mathrm{sign}(y\langle \*w, \*x \rangle)$ to be:
\begin{equation*}
    \mathrm{Margin} (\*w; S) = \max\left\{ \min\limits_{(\*x, y) \in S} \frac{y \langle \*w, \*x \rangle }{\|\*w\|}, 0\right\}
\end{equation*}
We say dataset $S$ is linear separable if there exists some linear classifier $\*w$ with $\mg (\*w; S)>0$. The \textit{geometric margin} for dataset $S$ is defined to be:
\begin{equation}\label{eqn: geo_mg}
    \gamma (S) := \max_{\*w \in \mathbb{R}^d} \mathrm{Margin} (\*w ; \mathcal{S})
\end{equation}
As shown in Figure~\ref{fig: margin_removal}, a dataset may initially have a zero margin. However, after removing certain "outliers," the remaining dataset can exhibit a positive margin. We formalize this concept and introduce the definitions of margin inliers and outliers in Definition~\ref{def: mg_inlier}.

\section{Settings}
\textbf{Loss function.}\quad We use $\ell_c$ to represent hinge loss with confidence margin parameter $c$:
\begin{equation*}
    \ell_{c}(\*w;(\*x,y)) = \max\{0, 1-\nicefrac{y\langle \*w, \*x \rangle}{c}\}
\end{equation*}
$\tilde{L}_{c}(\*w;S)$ \big($\hat{L}_{c}(\*w;S)$\big) represents the averaged (summed) empirical hinge loss for estimator $\*w$ over the dataset $S$. 
\begin{remark}
Throughout this paper, the confidence margin is only used as a parameter for hinge loss. It is unrelated to the data-dependent quantities we defined, such as margin inliers/outliers (Definition~\ref{def: mg_inlier}). 
\end{remark}
In our optimization algorithm, we use hinge loss. It is important to note that varying confidence margin parameters can result in different loss values, even with the same estimator $\*w$ and dataset $S$. To ensure a fair comparison, we express utility upper bounds using zero-one loss:
\begin{equation*}
    \ell_{0-1}(\*w;(\*x,y)) = \*1 \{y\langle \*w, \*x \rangle < 0\}
\end{equation*}
For an estimator $\*w$, we denote its averaged empirical zero-one loss on dataset \( S \) as \( \tilde{\cR}_{S}(\*w) \) and its summed version as \( \hat{\cR}_{S}(\*w) \). Furthermore, \( \cR_{D}(\*w) \) represents the population-level zero-one risk for the data distribution \( D \).

\textbf{Assumptions.}\quad We make the following assumptions: the training dataset \( S = \{(\mathbf{x}_1, y_1), \dots, (\mathbf{x}_n, y_n)\} \) is sampled i.i.d. from an unknown distribution \( D \) defined on \( \mathcal{B}^d(1) \times \{\pm 1\} \). We consider the function class \( \mathcal{F} = \{\langle \cdot, \mathbf{w} \rangle \mid \mathbf{w} \in \mathbb{R}^d\} \) and the hypothesis class of classifiers \( \mathcal{H} = \{\mathbf{x} \mapsto \mathrm{sign}(f(\mathbf{x})) \mid f \in \mathcal{F}\} \). The assumption that \( \mathbf{x} \) and \( \mathbf{w} \) lie in the unit ball is made without loss of generality. In Section~\ref{sssec: not_unit_norm}, we extend our results to the more general case where \( \mathbf{x} \in \mathcal{B}^d(b) \) and $\*w \in \mathbb{R}^d$. 

\textbf{Problem Setup.}\quad We address the problem of agnostic proper learning of halfspaces with differential privacy. Given a dataset \( S \) and a privacy budget \( \varepsilon, \delta \), our goal is to design an algorithm that takes \( S \), \( \varepsilon \), and \( \delta \) as input and outputs a classifier $\*w_{\mathrm{out}}$ from the hypothesis class $\cH$, ensuring a small (empirical) population zero-one risk ($\Tilde{\cR}_S (\*w_{\mathrm{out}})$) $\cR_D (\*w_{\mathrm{out}})$.

\section{Main Result}

We state our adaptive margin bound as follows:
\begin{theorem}\label{thm: master}
    There exists an \emph{efficient} algorithm $\mathcal{M}^*$ (Algo.~\ref{alg: master}), for any input dataset $S$, privacy budgets $\varepsilon,\delta$ satisfying $\delta \in (0,1)$ and $\varepsilon \in (0,\,\, 8\log(\nicefrac{1}{\delta}))$ such that:\\
    (1) $\mathcal{M}^*$ is $(\varepsilon, \delta)$-DP. 
    (2) With high probability, for any $S_{\mathrm{out}} \in 2^S$ with $\gamma:=\gamma (S \setminus S_{\mathrm{out}}) > 0$, simultaneously:
    \begin{equation*}
        \begin{aligned}
            &\Tilde{\cR}_S (\mathcal{M}^*(S, \varepsilon, \delta)) \leq \bigOt \left(\frac{1}{n \gamma^2 \min\{\varepsilon,1\}} + \frac{|S_{\mathrm{out}}|}{\gamma n}\right) \wedge 1  \\
            &\mathcal{R}_{D} (\mathcal{M}^*(S, \varepsilon, \delta)) \leq \bigOt \left(\frac{1}{n \gamma^2 \min\{\varepsilon,1\}} + \frac{|S_{\mathrm{out}}|}{\gamma n}\right) \wedge 1  
        \end{aligned}    
    \end{equation*}
\end{theorem}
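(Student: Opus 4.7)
The plan is to assemble $\mathcal{M}^{*}$ as a two-level procedure: for each candidate confidence-margin value $c$ in a short geometric grid $C := \{2^{-i} : i = 0,1,\dots,\lceil \log_2 n \rceil\}$, a base DP learner $\mathcal{A}_c$ -- noisy projected gradient descent on $\tilde{L}_c(\cdot; S)$ over the unit $L_2$ ball, preceded by a data-independent JL projection onto $k \asymp \log(n)/c^{2}$ coordinates -- produces a candidate $\mathbf{w}_c$. A DP selection step over $\{\mathbf{w}_c\}_{c \in C}$ scored by the empirical zero-one loss then outputs the final $\hat{\mathbf{w}}$. Because $\ell_c$ is convex and $1/c$-Lipschitz, the standard GDP accounting gives $\mathcal{A}_c$ as $\mu_c$-GDP, and its expected empirical-hinge excess over any reference $\mathbf{w}^{\star} \in \mathcal{B}^d(1)$ is $\bigOt\!\bigl(\sqrt{k}/(c\, n\, \mu_c)\bigr) = \bigOt\!\bigl(1/(c^{2} n \mu_c)\bigr)$; crucially, the JL dimension depends only on $c$ (not on any unknown $\gamma$ or $S_{\mathrm{out}}$) and preserves every pairwise inner product, hence the geometric margin of every data subset, up to a constant factor.

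The heart of the proof is the \emph{inlier/outlier reference} argument. For any $S_{\mathrm{out}} \subseteq S$ with $\gamma := \gamma(S \setminus S_{\mathrm{out}}) > 0$ and any $c \le \gamma$, I would instantiate the reference $\mathbf{w}^{\star} := (c/\gamma)\,\mathbf{u}^{\star}$, where $\mathbf{u}^{\star}$ is any unit-norm separator of $S \setminus S_{\mathrm{out}}$ with geometric margin $\gamma$. A direct check gives $\|\mathbf{w}^{\star}\| \le 1$; each inlier satisfies $y\langle \mathbf{w}^{\star},\mathbf{x}\rangle \ge c$ and hence $\ell_c(\mathbf{w}^{\star};z)=0$, while each outlier contributes $\ell_c(\mathbf{w}^{\star}; z) \le 1 + \|\mathbf{w}^{\star}\|/c = 1 + 1/\gamma \le 2/\gamma$. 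Summing gives $\tilde{L}_c(\mathbf{w}^{\star}; S) \le 2|S_{\mathrm{out}}|/(\gamma n)$. Combined with the excess bound above and the pointwise inequality $\ell_{0-1} \le \ell_c$,
\[
\tilde{\mathcal{R}}_S(\mathbf{w}_c) \;\le\; \bigOt\!\left(\frac{|S_{\mathrm{out}}|}{\gamma n} \;+\; \frac{1}{c^{2}\, n\, \mu_c}\right).
\]
For any admissible $S_{\mathrm{out}}$ with $\gamma \ge 1/n$, the grid $C$ contains some $c^{\star} \in [\gamma/2,\gamma]$, and substituting $c^{\star}$ recovers the target empirical rate; when $\gamma < 1/n$ the bound exceeds $1$ and is handled for free by the $\wedge\,1$ clamp.

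To assemble the full algorithm I would use GDP composition to budget the $|C| = O(\log n)$ base runs together with a DP selection step on the empirical zero-one scores (sensitivity $1/n$), implemented either via report-noisy-max or via the advanced private tuning of \citet{lt18, ps22, krw24}; translating the total budget through $\mu \asymp \varepsilon/\sqrt{\log(1/\delta)}$ yields the claimed $(\varepsilon, \delta)$-DP. A single union bound over $C$ makes the empirical statement simultaneous in $c$, and hence in $S_{\mathrm{out}}$ (via the choice of $c^{\star}$); the selection error $\bigOt(1/(n\varepsilon))$ is absorbed into the main term. For the population claim, I would invoke a fast-rate margin-based generalization inequality (local Rademacher / Bousquet-type, as used in \citet{nuz20}) for classifiers of effective dimension $k \asymp 1/\gamma^{2}$ with small empirical zero-one loss, which adds at most an $\bigOt(1/(\gamma^{2} n))$ gap, again absorbed into the main rate.

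The main obstacle is making the base learner's analysis genuinely dimension-free. Naive DP-GD would give $\bigOt(\sqrt{d}/(c\, n\, \mu_c))$; eliminating the $\sqrt{d}$ requires that the JL dimension scale as $1/c^{2}$ while simultaneously (a) preserving the margin of the unknown inlier set $S \setminus S_{\mathrm{out}}$ to a constant factor and (b) not inflating the outlier hinge loss. Handling (a)--(b) \emph{uniformly} over all subsets $S_{\mathrm{out}}$ -- which is what makes the empirical bound hold adaptively without oracle access to $\gamma$ -- is the most delicate point; once this is established, the remaining ingredients (composition, DP selection, fast-rate generalization) are bookkeeping from tools already introduced in the preliminaries.
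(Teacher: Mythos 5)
Your proof plan matches the paper's overall architecture closely: a logarithmic margin grid, a per-grid-point data-oblivious JL projection of dimension $\asymp 1/c^2$, noisy gradient descent on the projected hinge loss, and a DP selection step over the $O(\log n)$ candidates. The inlier/outlier reference argument is also essentially the paper's: you instantiate a small-norm separator achieving zero hinge loss on the inliers and incurring at most $\bigO(|S_{\mathrm{out}}|/\gamma)$ on the outliers, combine with the excess-risk bound, and pick $c^* \in [\gamma/2,\gamma]$ from the grid. However, there are two concrete gaps.

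First, your justification of margin preservation under JL is wrong for the dimension you claim. Preserving all pairwise inner products $\langle x_i,x_j\rangle$ up to additive error $e/3$ only preserves a subset's margin up to additive error $\bigO(e/\gamma)$: writing the max-margin unit separator as $w^* = \sum_j\alpha_j y_j x_j$ with $\sum_j\alpha_j = 1/\gamma$, the inner-product errors accumulate to $(e/3)\sum_j\alpha_j = e/(3\gamma)$. To keep the preserved margin $\ge\gamma/2$ this way you would need $e\asymp\gamma^2$, hence $k\asymp 1/\gamma^4$ — a quadratically worse projection dimension than the $1/\gamma^2$ you (and the paper) target. The paper avoids this by exploiting data-obliviousness more directly (Lemma~\ref{lem: margin_preserve}, proved via Lemma~\ref{lem: subset_margin_preserve}): for each grid point $\gamma$, it applies the JL lemma to the $n+1$ points $S\cup\{w_{S'}\}$ where $w_{S'}$ is the max-margin separator of the \emph{one} inlier set it will use as reference, getting additive error $e/3 \sim\gamma/3$ in $\langle w_{S'}, x_i\rangle$ directly. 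This is a per-grid-point, per-fixed-subset guarantee — not a uniform guarantee over all $2^n$ subsets — and that is precisely what "handling (a)--(b) uniformly" reduces to once you note that, for each candidate $\gamma$, the minimum over $S_{\mathrm{out}}$ is achieved by a single subset whose separator you can bake into the JL pointset.

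Second, your population-risk argument does not close: if the selector minimizes the (noisy) empirical zero-one loss, it may output a classifier built from a very small $c$ and hence a very large projection dimension $k_c \asymp 1/c^2$, in which case the VC term $\mathrm{VC}(\mathcal{H}_{k_c})\log n / n$ in any fast-rate generalization bound is vacuous. You cannot apply a generalization bound post hoc to the selected hypothesis with $k\asymp 1/\gamma^2$ unless the selection mechanism itself controls $k$. The paper fixes this by using a \emph{penalized} score for the population-risk version of the theorem — $\hat{\cR}_S(\cdot)$ plus the VC complexity term (Eq.~\ref{eqn: pen_cri}) — so the selection is doing structural risk minimization across the grid, and Lemma~\ref{lem: relative_dev_bound} then transfers the penalized empirical comparison to a population bound. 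Your plan as written omits this penalty, so the population claim would not follow.

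Everything else in your outline (GDP composition over $O(\log n)$ base runs, the grid-coverage argument, the $\wedge 1$ clamp for $\gamma < 1/n$, and the choice between brute-force and advanced private tuning) is consistent with the paper.
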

The proof builds on Theorem~\ref{thm: hyper_tuning} and Theorem~\ref{thm: popu_error}, with the details deferred to Appendix~\ref{apx: pf_thm_erm} and Appendix~\ref{apx: pf_thm_popu}.

Let's make a few observations. In the realizable case, $\mathcal{M}^*$ naturally adapts to the margin without requiring prior knowledge of its value, in contrast to the approach proposed by \citet[Theorem 6\&11]{nuz20}. In the agnostic setting, \( S_{\mathrm{out}} \) can be interpreted as outliers. When \( |S_{\mathrm{out}}| \) is sufficiently small \(\big( \approx o(\sqrt{n}) \big)\), our results achieve a \(\sqrt{n}\) improvement over \citet[Theorem~3.2]{bms22}. For a detailed comparison, see Table~\ref{demo-table}.

The algorithm $\cM^*$ in Theorem~\ref{thm: master} does not need any tuning parameters. It is highly adaptive as it can compete with the best $S_{\mathrm{out}}^* \in 2^S$ that minimizes the bound without having to explicitly search for $S_{\mathrm{out}}^*$. 
It also does not incur the typical statistical costs of $\log(2^{n})$ of adaptivity to $2^S$, which would render the bound vacuous.
In addition, $\cM^*$ is also computationally efficient. We will see in the next section that $\mathcal{M}^*$ has a modular design, and it only requires invoking an off-the-shelf solver for the convex DP-ERM problem a few times.

\section{Algorithms}

In this section,  we provide details for our algorithms. The algorithm \(\cM^*\) (Algo.~\ref{alg: master}) is designed as a composition of two components: the private hyperparameter tuning algorithm \(\arta\) (Algo.~\ref{alg: rta}) and the noisy gradient descent algorithm with Johnson-Lindenstrauss projection \(\ajlgd\) (Algo.~\ref{alg: jl_dp_gd}).

\subsection{Base Algorithm: JL-Noisy Gradient Descent}
To obtain a dimension-free rate, we apply the Johnson-Lindenstrauss (JL) projection to reduce the dimensionality while preserving the linear separability of projected data. The ``adequate'' projection dimension for maintaining separability is determined by the data margin. Informally:
\begin{lemma}[Informal version of Lemma~\ref{lem: subset_margin_preserve}]\label{lem: informal_margin_preserve}
    Let dataset $S = \{(\*x_i,y_i)\}_{i=1}^n \subset \mathbb{R}^d \times \{\pm 1\}$ satisfying $\gamma(S) > 0 $, choosing $k=\Tilde{\bigO}(1/\gamma^2(S))$ in the JL projection ensures that the projected dataset $\Phi S := \{(\Phi \*x_i,y_i)\}_{i=1}^n$ satisfies $\gamma(\Phi S)>\frac{\gamma(S)}{2}$ with high probability.
\end{lemma}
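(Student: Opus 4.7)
\textbf{Proof plan for Lemma~\ref{lem: informal_margin_preserve}.}
The plan is to exhibit a single concrete unit classifier in the projected space whose margin on $\Phi S$ is at least $\gamma(S)/2$; the natural candidate is the image under $\Phi$ of the max-margin classifier in the original space. The whole argument then reduces to one application of the Johnson--Lindenstrauss lemma to at most $n+1$ fixed vectors.

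\emph{Step 1 (Setup).} Let $\*w^*\in\mathbb{R}^d$ with $\|\*w^*\|=1$ be a max-margin classifier for $S$, so that $y_i\langle \*w^*,\*x_i\rangle\geq \gamma(S)$ for every $i\in[n]$; since $\*x_i\in\mathcal{B}^d(1)$, all of $\*w^*,\*x_1,\dots,\*x_n$ sit in the unit ball. The JL matrix $\Phi$ is drawn independently of $S$ (as it is in Algo.~\ref{alg: jl_dp_gd}), so these $n+1$ vectors are fixed relative to $\Phi$.

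\emph{Step 2 (JL inner-product preservation).} Instantiate $\Phi\in\mathbb{R}^{k\times d}$ as a standard JL map (for example, entries i.i.d.\ $\mathcal{N}(0,1/k)$). The JL lemma, applied to the $\bigO(n)$ pairwise norms $\|\*w^*\pm\*x_i\|,\|\*x_i\pm\*x_j\|,\|\*w^*\|$ and combined with the polarization identity, yields that for any target accuracy $\epsilon\in(0,1)$ there exists $k=\Theta(\log(n/\beta)/\epsilon^2)$ such that the event
\[
\mathcal{E}_\epsilon \;:=\; \Big\{\,\big|\langle \Phi\*u,\Phi\*v\rangle - \langle \*u,\*v\rangle\big|\leq \epsilon\ \text{ for all } \*u,\*v\in\{\*w^*,\*x_1,\dots,\*x_n\}\,\Big\}
\]
holds with probability at least $1-\beta$.

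\emph{Step 3 (Margin after projection).} On $\mathcal{E}_\epsilon$, using $y_i\in\{\pm 1\}$, for every $i$
\[
y_i\langle \Phi\*w^*,\Phi\*x_i\rangle \;\geq\; y_i\langle \*w^*,\*x_i\rangle-\epsilon \;\geq\; \gamma(S)-\epsilon,
\qquad
\|\Phi\*w^*\|\;\leq\;\sqrt{1+\epsilon},
\]
so $\Phi\*w^*$ certifies a margin of at least $(\gamma(S)-\epsilon)/\sqrt{1+\epsilon}$ on $\Phi S$. Taking $\epsilon=c\,\gamma(S)$ for a small enough absolute constant $c$ (e.g.\ $c=1/10$) makes this quantity strictly exceed $\gamma(S)/2$, hence $\gamma(\Phi S)>\gamma(S)/2$. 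The resulting projection dimension is $k=\Theta\big(\log(n/\beta)/\gamma^2(S)\big)=\bigOt(1/\gamma^2(S))$, which matches the claim.

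\emph{Where the difficulty lies.} There is no real obstacle: the proof is one JL application plus an elementary algebraic inequality. The only things to track carefully are (i) the constants, so that the margin strictly exceeds $\gamma(S)/2$ rather than merely being comparable, and (ii) that $\Phi$ is chosen independently of the data, which is crucial for both the JL high-probability event and for downstream use inside the DP analysis. The formal version (Lemma~\ref{lem: subset_margin_preserve}) will additionally need the statement uniformly over subsets of $S$, but this can be absorbed by enlarging the implicit set to $\{\*w^*_{S'},\*x_1,\dots,\*x_n\}$ for each subset of interest at an extra $\log$-factor cost in $k$.
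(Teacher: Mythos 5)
Your proposal is correct and takes essentially the same route as the paper's proof of the formal version (Lemma~\ref{lem: subset_margin_preserve}): use the max-margin unit separator as the certificate, apply JL inner-product and norm preservation to the $n+1$ fixed vectors $\{\*w^*,\*x_1,\dots,\*x_n\}$ with $k=\bigO(\log(n/\beta)/\gamma^2)$, and lower bound the projected margin by $(\gamma-\epsilon)/\sqrt{1+\epsilon}$ with $\epsilon\propto\gamma$. The only differences are cosmetic (absolute versus relative distortion and the choice of constant, the paper instantiating $e=\gamma$ to get the $\gamma/3$ guarantee it actually uses downstream).
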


In Algo.~\ref{alg: jl_dp_gd}, the data is first projected into a lower-dimensional space using a JL matrix $\Phi \in \mathbb{R}^{k \times d}$. Next, we further clip the $L_2$ norm of data using projection to bound the sensitivity. After running noisy gradient descent for $T$ iterations, we can choose to output either the last-iterate estimator $\*w_\mathrm{out}$ or the averaged estimator $\bar{\*w}_{\mathrm{out}}=\frac{1}{T-1}\sum_{t=0}^{T-1} \*w_t$ (described in $\angd$ (Algo.~\ref{alg: dp_gd})). These two choices correspond to utility guarantees in expectation form or high probability form, as stated in the Theorem~\ref{thm: hyper_tuning}.

We note that using JL projection to preprocess data in private learning of halfspaces has been explored in previous work such as \citet{nuz20, bms22}. Compared to \citet[Algo.~1]{nuz20}, we do not assume that the projection dimension $k$ is calculated from the actual data margin $\gamma (S)$, which we provide more details in Algo.~\ref{alg: master}. Compared to \citet[Algo.~2]{bms22}, our projection dimension is chosen to be $\Tilde{\bigO}(1/\gamma^2)$ instead of $\Tilde{\bigO}(n\varepsilon)$. In addition, we use a full-batch gradient descent algorithm rather than SCO-type methods, which avoids the $1/\sqrt{n}$ term in the convergence rate.

\begin{algorithm2e}[tbh]
\DontPrintSemicolon
\setcounter{AlgoLine}{0}
    \nl \textbf{Input:}{ Dataset $S=\{\*x_i, y_i\}_{i=1}^n$ with $\|\*x\| \leq v$, JL projection matrix $\Phi\in\mathbb{R}^{d \times k}$, \\
    \qquad\quad\quad Hinge loss parameter $c$, GDP budget $\mu$}\\
    \nl $\Phi S:=\{(\Proj{\mathcal{B}^k(2v)}{\Phi \*x_i}, y_i)\}_{i=1}^n$\\
    \nl $\Tilde{\*w} = \angd(\ell_c, \Phi S, \mu)$ \tcp*{Algo.~\ref{alg: dp_gd}}
    \nl \KwOut{$\Phi^\top \Tilde{\*w}$} 
    \caption{$\ajlgd(\Phi, c, S, \mu)$}
    \label{alg: jl_dp_gd}
\end{algorithm2e}

\subsection{Construction of Main Algorithm $\mathcal{M}^*$}\label{sssec: m_star}
Our main algorithm $\cM^*$ (Algo.~\ref{alg: master}) contains three steps:

\textbf{Step 1. (Line 2 of Algo.~\ref{alg: master})} We begin by discretizing the range of possible margin values to create a set of margin parameters, denoted as $\Gamma$. This set is constructed as a logarithmic grid over the interval $[0,1]$. The analysis of the approximation error introduced by this discretization is provided in Theorem~\ref{lem: grid_appx}.

\textbf{Step 2. (Line 4-7 of Algo.~\ref{alg: master})}  For each margin parameter $\gamma$ in the set $\Gamma$, we construct corresponding projection matrices $\Phi_{\gamma}$. The projection dimension $k_\gamma$ is set to $\bigOt(1/\gamma^2)$ to ensure margin preservation, as demonstrated in Lemma~\ref{lem: margin_preserve}. Importantly, the construction of each $\Phi_\gamma$ is independent of the data, meaning it does not consume any additional privacy budget.

\begin{algorithm2e}[tbh]
\setcounter{AlgoLine}{0}
    \nl \textbf{Input:}{
        Base mechanism $\cM_{\mathrm{base}}$, hyperparameter set $\Theta$, dataset $S$, GDP budget $\mu$ }\\
    \nl \textbf{Choose:} Criterion function $U$ with sensitivity $\Delta$\\
    \nl \For{$\theta \in \Theta$}{
    \nl     $m_{\theta} = \cM_{\mathrm{base}}(\theta ; S, \frac{\mu}{\sqrt{2|\Theta|}})$ \\
    \nl     $\mathbf{\xi} \sim \mathcal{N}(0, \frac{2|\Theta|\Delta^2}{\mu^2})$ \\
    \nl     $u_\theta = U(m_\theta ;S) + \xi$  \\
    }
    \nl $\theta_{\mathrm{out}}=\argmin\limits_{\theta \in \Theta} u_\theta$ \\
    \nl \textbf{Output:} $(m_{\theta_\mathrm{out}}, \theta_{\mathrm{out}})$
    \caption{$\arta(\cM, \Theta, S, \mu)$}
    \label{alg: rta}
\end{algorithm2e}

\textbf{Step 3. (Line 8 of Algo.~\ref{alg: master})} We evaluate all margin configurations in \( \Gamma \) and identify the margin parameter that minimizes the zero-one risks. Given that the size of the margin grid is \( \ceil{\log_2(n)}+1 \), we utilize a brute force approach that iterates through all configurations, as described in Algo.~\ref{alg: rta}. To minimize empirical risk, we use the average empirical zero-one loss as the scoring function. We employ a penalized scoring function to minimize population risk, defined in Eq.~\ref{eqn: pen_cri}.

The input to Algo.~\ref{alg: master} includes only the dataset $S$ and the privacy parameters $\varepsilon$ and $\delta$. Since the margin grid $\Gamma$ is constructed in a data-independent manner, no prior knowledge of the data margin is required.
\begin{algorithm2e}[tbh]
    \setcounter{AlgoLine}{0}
    \nl \textbf{Input:}
    dataset $S=\{\*x_i, y_i\}_{i=1}^n$, Privacy budget $\varepsilon, \delta$
    
    \nl \textbf{Set:}{
        Margin grid $\Gamma=\{\frac{1}{n}, \frac{2}{n}, \frac{4}{n},...,\frac{2^{\lfloor\log_2 n\rfloor}}{n}, 1\}$, GDP budget $\mu=\frac{\varepsilon}{2\sqrt{2\log(1/\delta)}}$, \\
        \quad\quad failure probability for JL projection $\beta$\\
    }    
    \nl Initialize hyperparameter set $\Theta=\{\phi\}$\tcp*{empty set}
    \nl \For{$\gamma \in \Gamma$}{ 
    \nl    $k_\gamma = \bigO\left(\frac{1}{\gamma^2}\log(\frac{|\Gamma|(n+2)(n+1)}{\beta})\right)$ \\
    \nl    $\Phi_{\gamma} \sim ( \mathrm{Rad} (\frac{1}{2}) / \sqrt{k_{\gamma}})^{k_{\gamma}\times d}$\\
    \nl    $\Theta = \Theta \cup \{(\gamma, \Phi_{\gamma})\}$
    }    
    \nl  {\small $(\Tilde{\*w}_{\mathrm{out}}, \gamma_{\mathrm{out}}, \Phi_{\gamma_{\mathrm{out}}})=\arta(\ajlgd(\cdot), \Theta, S, \mu)$} \\
    \nl \KwOut{$(\Tilde{\*w}_{\mathrm{out}}, \gamma_{\mathrm{out}}, \Phi_{\gamma_{\mathrm{out}}})$} 
    \caption{DP Adaptive Margin $\cM^* (S, \varepsilon, \delta)$ }
    \label{alg: master}
\end{algorithm2e}

For computational efficiency, Algo.~\ref{alg: master} is constructed by running Algo.~\ref{alg: jl_dp_gd} for \( \ceil{\log_2(n)} + 1 \) times, making it a polynomial-time algorithm because Algo.~\ref{alg: jl_dp_gd} itself runs in polynomial time. Additionally, as shown in Section~\ref{sssec: in-out}, our convergence analysis (Eq.~\ref{eqn: general_tech}) is compatible with any black-box optimization method, allowing the replacement of full-batch gradient descent with more efficient DP-ERM methods, such as DP-SGD, to improve computational efficiency further.

\section{Proof Sketch}

We provide privacy and utility analysis of our algorithms. To begin with, we define the margin inliers ($\cS_{\mathrm{in}}$) and the margin outliers ($\cS_{\mathrm{out}}$), which help extend the definition of the geometric margin. 

The motivation for defining $\mathcal{S}_{\mathrm{out}}(\gamma)$ arises from the observation that after removing a few ``outliers'' denoted as $S_\mathrm{out}$, the remaining dataset $\gamma(S \setminus S_\mathrm{out})$ becomes $\gamma$-separable. To avoid the combinatorial explosion associated with selecting $S_{\mathrm{out}}(\gamma)$, we take a ``dual view.'' Rather than directly defining the margin outliers, we first define the margin inliers, the subsets of $S$ that are $\gamma$-separable. The margin outliers are then simply the complement of the margin inliers.

\begin{definition}[Margin inliers/outliers]\label{def: mg_inlier}
    We define the margin inliers / outliers of the dataset $S$ with respect to the margin value $\gamma\in [0,1]$ as follows:
    \vspace{-0.2cm}
    \begin{equation*}
        \begin{aligned}
            &\dS_{\mathrm{in}}(\gamma):= \{S^\prime \subset S \mid \mg(S^\prime) \geq \gamma\} \\  
            &\dS_{\mathrm{out}}(\gamma):=\{S \setminus S^{\prime} \mid S^{\prime} \in \dS_{\mathrm{in}}(\gamma) \}
        \end{aligned}
    \end{equation*}
    \vspace{-0.6cm}
\end{definition}
This characterization enables us to divide the dataset into two subsets and analyze the optimization error for each separately, as explained in the following section.

\subsection{Inlier-outlier Analysis of Noisy Gradient Descent}\label{sssec: in-out}
We provide an overview of our inlier-outlier analysis procedure. For a fixed \(\gamma \in [0,1]\), we can appropriately decrease the hinge loss parameter \(c\) so that the minimum empirical \(c\)-hinge loss on \(S_{\mathrm{in}}(\gamma)\) is reduced to zero:
\begin{equation}\label{eqn: general_tech}
    \begin{aligned}
        \hat{L}_{c}(\*w;S) &\leq \hat{L}_{c} (\*w^*;S) + \mathrm{EER}\,\,\,\footnotemark\\
        &\leq \underbrace{\hat{L}_{c}(\mathring{\*w}_{\mathrm{in}}; S_{\mathrm{in}}(\gamma))}_{\text{(a) zero if $c \leq \gamma$}} 
        + \underbrace{\hat{L}_{c}(\mathring{\*w}_{\mathrm{in}}; S_{\mathrm{out}}(\gamma))}_{(b) \leq \|\*x\|\cdot |S_{\mathrm{out}}(\gamma)|/c} + \mathrm{EER} \\
        &\leq \bigO(|S_{\mathrm{out}}(\gamma)|/c) + \mathrm{EER}
    \end{aligned}
\end{equation}
\footnotetext{EER stands for Excess Empirical Risk (Definition~\ref{defn: excess_emp_risk}); $\mathring{\*w}_{\mathrm{in}}$ denotes the normalized max-margin separator for $S_{\mathrm{in}}(\gamma)$; $\*w^*=\argmin_{\*w \in \mathbb{R}^d}\hat{L}_c (\*w;S)$}
It is worth noting that \( \*w \) can be generated by any optimization algorithm, offering the flexibility to replace our full-batch gradient descent (Algo.~\ref{alg: dp_gd}) with alternative methods.

As shown in Eq.~\ref{eqn: general_tech}, there is a trade-off between achieving a larger margin and removing fewer data points. As the size of $S_{\mathrm{out}}(\gamma)$ increases, the margin $\gamma(S \setminus S_{\mathrm{out}})$ also increases. However, removing more points makes the remaining dataset less representative of the original distribution, leading to a high error on $S_{\mathrm{out}}(\gamma)$. In an extreme case, for any non-degenerate binary classification problem, at most \(n-2\) points can be removed to make the remaining data linearly separable. However, adapting to such a margin is not meaningful. Since Eq.~\ref{eqn: general_tech} holds for any \(S_{\mathrm{in}} \in \dS_{\mathrm{in}}(\gamma)\), we aim to minimize \( |S_{\mathrm{out}}(\gamma)| \), ensuring that only the smallest number of ``outliers'' are removed. 

In the next two lemmas, we show that the analysis procedure of Eq.~\ref{eqn: general_tech} remains valid for random projection based gradient descent. We begin by presenting a margin preservation lemma, leveraging the data-oblivious property of Johnson-Lindenstrauss projections \citep{larsen2017optimality}.

\begin{lemma}[Margin preservation after random projection]\label{lem: margin_preserve}
    Let $S = \{(\*x_i, y_i)\}_{i=1}^n \subset \mathcal{B}^d(1) \times \{\pm 1\}$ and $\gamma \in [0,1]$. 
    Construct a JL projection matrix $\Phi \in \mathbb{R}^{k \times d}$ with entries i.i.d. as $\frac{1}{\sqrt{k}}\mathrm{Rad}(\frac{1}{2})$ where $k= \bigO( \frac{\log((n+1)(n+2)/\beta)}{\gamma^2})$. For any $S_{\mathrm{in}} \in \dS_{\mathrm{in}}(\gamma)$, we have\footnote{$\mathrm{Rad}(\frac{1}{2})$ denotes the Rademacher distribution}:
    \begin{equation*}
        \mathbb{P}_{\Phi} \left( \gamma(\Phi S_{\mathrm{in}}) \geq \frac{\gamma}{3}\right) \geq 1-\beta    
    \end{equation*}
\end{lemma}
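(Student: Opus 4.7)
The plan is to apply Johnson–Lindenstrauss (JL) concentration to an augmented $(n+2)$-point set and then carry the max-margin witness for $S_{\mathrm{in}}$ through the projection.

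First I would fix an arbitrary $S_{\mathrm{in}}\in\dS_{\mathrm{in}}(\gamma)$; by Definition~\ref{def: mg_inlier} and \eqref{eqn: geo_mg} there is a unit vector $\*w^\star\in\mathbb{R}^d$ with $y\langle\*w^\star,\*x\rangle\geq\gamma$ for every $(\*x,y)\in S_{\mathrm{in}}$. Augment the data with the witness vector and the origin to form the $(n+2)$-element set
\[
V := \{\*0,\ \*w^\star,\ \*x_1,\ldots,\*x_n\}\subset\mathcal{B}^d(1).
\]
For the Rademacher JL construction, each fixed $v\in\mathbb{R}^d$ satisfies $\bigl|\|\Phi v\|^2-\|v\|^2\bigr|\leq\epsilon\|v\|^2$ except with probability $\leq 2\exp(-c_0 k\epsilon^2)$ for an absolute constant $c_0>0$. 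Set $\epsilon:=\gamma/C$ for a small absolute constant $C$ (e.g., $C=6$) and $k=\Theta\!\bigl(\log((n+1)(n+2)/\beta)/\gamma^2\bigr)$; a union bound over the $\binom{n+2}{2}=(n+1)(n+2)/2$ pairwise differences $\{a-b:\,a,b\in V\}$ makes the $(1\pm\epsilon)$ distortion hold simultaneously for every such difference with probability at least $1-\beta$.

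Next I would convert norm preservation into inner-product preservation. Specializing to pairs involving $\*0$ gives $\|\Phi\*w^\star\|^2\leq 1+\epsilon$ and $\|\Phi\*x_i\|^2\leq(1+\epsilon)\|\*x_i\|^2$. Using the polarization identity and linearity of $\Phi$,
\[
\langle\Phi\*w^\star,\Phi\*x_i\rangle=\tfrac{1}{2}\bigl(\|\Phi\*w^\star\|^2+\|\Phi\*x_i\|^2-\|\Phi(\*w^\star-\*x_i)\|^2\bigr),
\]
so combining the three $(1\pm\epsilon)$ bounds and the triangle-inequality estimate $\|\*w^\star-\*x_i\|^2\leq 4$ yields $|\langle\Phi\*w^\star,\Phi\*x_i\rangle-\langle\*w^\star,\*x_i\rangle|\leq 3\epsilon$. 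Finally, propose $\Phi\*w^\star$ as the margin witness in the projected space: for every $(\Phi\*x_i,y_i)\in\Phi S_{\mathrm{in}}$,
\[
\frac{y_i\langle\Phi\*w^\star,\Phi\*x_i\rangle}{\|\Phi\*w^\star\|}\ \geq\ \frac{\gamma-3\epsilon}{\sqrt{1+\epsilon}}\ \geq\ \frac{\gamma}{3},
\]
where the last inequality uses $\epsilon=\gamma/C$ with $C=6$ and $\gamma\in[0,1]$. This lower-bounds $\gamma(\Phi S_{\mathrm{in}})$ by $\gamma/3$ as claimed.

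The only mildly subtle point is that $\*w^\star$ is not one of the data points, so a naive JL applied to the $n$ data points alone does not suffice: $V$ must be enlarged by both $\*w^\star$ and $\*0$ (the latter so that pairwise-distance preservation also delivers individual norm preservation), which is exactly what produces the $(n+1)(n+2)$ factor inside the log in the prescribed $k$. Everything else is standard subgaussian concentration plus bookkeeping of constants to absorb the multiplicative norm distortion and the additive inner-product error into the clean $\gamma/3$ bound.
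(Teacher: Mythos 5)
Your proposal is correct and follows essentially the same route as the paper: fix the max-margin unit witness for $S_{\mathrm{in}}$, augment the point set with it, apply a JL union bound at distortion $\Theta(\gamma)$ (which is exactly what produces the $(n+1)(n+2)$ factor inside the logarithm), and push the witness through $\Phi$ to conclude $\gamma(\Phi S_{\mathrm{in}}) \geq (\gamma - O(\gamma))/\sqrt{1+O(\gamma)} \geq \gamma/3$. The only difference is presentational: the paper invokes the norm and inner-product preservation statement (Lemma~\ref{lem: bas_jl}, i.e., Lemma A.3 of \citet{bms22}) as a black box inside Lemma~\ref{lem: subset_margin_preserve}, whereas you re-derive that preservation from the distributional JL bound via polarization after adding the origin to the augmented set.
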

\vspace{-0.3cm}
The proof is included in Appendix~\ref{apx: pf_margin_preserv}. Since the margin $\gamma$ is preserved for the largest margin inlier set in $\cS_{\mathrm{in}}(\gamma)$, we can formulate the following utility lemma for Algo.~\ref{alg: jl_dp_gd}:
\begin{lemma}\label{lem: erm_fix_margin}
    Given GDP budget $\mu >0$, failure probability $\beta \in (0,1)$, and $\gamma \in (0,1)$, running $\ajlgd$ (Algo.~\ref{alg: jl_dp_gd}) with $\Phi$ from Lemma~\ref{lem: margin_preserve} and hinge loss parameter $\gamma/3$ is $\mu$-GDP. Further, w.p. at least $1-2\beta$, the last-iterate estimator $\Phi^\top \*w\in\mathbb{R}^d$ satisfies:
    \begin{equation*}
            \Tilde{L}_{\gamma/3} (\Phi^\top \*w;S)
            \leq \min_{S_{\mathrm{out}} \in \dS_{\mathrm{out}}(\gamma)} 
             \bigOt \left( \frac{1}{\gamma^2 \mu n} 
            + \frac{|S_{\mathrm{out}}|}{\gamma n} \right)    
    \end{equation*}
\end{lemma}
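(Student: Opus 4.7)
The JL matrix $\Phi$ is drawn from a data-independent distribution and may be treated as public randomness. Given $\Phi$, the map $\*x \mapsto \Proj{\mathcal{B}^k(2v)}{\Phi \*x}$ is a deterministic pre-processing step on each input. The only data-dependent call inside $\ajlgd$ is to $\angd$, which by construction is $\mu$-GDP on a convex objective with bounded per-sample gradient. Multiplying its output by the fixed matrix $\Phi^\top$ is post-processing, so $\ajlgd(\Phi, \gamma/3, S, \mu)$ is $\mu$-GDP.

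\textbf{Utility via the inlier--outlier decomposition.} Fix an arbitrary $S_{\mathrm{out}} \in \dS_{\mathrm{out}}(\gamma)$, set $S_{\mathrm{in}} = S \setminus S_{\mathrm{out}} \in \dS_{\mathrm{in}}(\gamma)$, and let $\mathring{\*w}_{\mathrm{in}} \in \mathbb{R}^k$ be the unit-norm max-margin separator of the projected inliers $\Phi S_{\mathrm{in}}$. By Lemma~\ref{lem: margin_preserve} applied to $S_{\mathrm{in}}$, with probability at least $1-\beta$ over $\Phi$ the projected inlier margin is at least $\gamma/3$, so with hinge-loss parameter $c = \gamma/3$ we get $\hat{L}_{\gamma/3}(\mathring{\*w}_{\mathrm{in}}; \Phi S_{\mathrm{in}}) = 0$. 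On the projected outliers the $\mathcal{B}^k(2v)$-clipping gives $\|\Phi \*x\| \le 2v$, so each hinge summand is at most $1 + \|\mathring{\*w}_{\mathrm{in}}\| \cdot 2v/(\gamma/3) = O(1/\gamma)$. Substituting into the template of Eq.~\eqref{eqn: general_tech}, with $\Tilde{\*w}$ the last iterate of $\angd$ on $\Phi S$,
\[
\hat{L}_{\gamma/3}(\Tilde{\*w}; \Phi S) \;\le\; \underbrace{\hat{L}_{\gamma/3}(\mathring{\*w}_{\mathrm{in}}; \Phi S_{\mathrm{in}})}_{=\,0} \;+\; \underbrace{\hat{L}_{\gamma/3}(\mathring{\*w}_{\mathrm{in}}; \Phi S_{\mathrm{out}})}_{=\,O(|S_{\mathrm{out}}|/\gamma)} \;+\; \mathrm{EER}.
\]

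\textbf{EER bound and conclusion.} The $\gamma/3$-hinge loss is convex and $O(1/\gamma)$-Lipschitz on the clipped set, so per-sample gradient sensitivity is $O(1/\gamma)$. I invoke a high-probability last-iterate excess-risk guarantee for noisy GD at GDP level $\mu$, with competitor $\mathring{\*w}_{\mathrm{in}}$ of norm $1$ and ambient dimension $k_\gamma = \bigOt(1/\gamma^2)$: with probability at least $1-\beta$,
\[
\mathrm{EER} \;\le\; \bigOt\!\left( \frac{(1/\gamma)\,\|\mathring{\*w}_{\mathrm{in}}\|\,\sqrt{k_\gamma}}{n\mu}\right) \;=\; \bigOt\!\left(\frac{1}{\gamma^2 n \mu}\right).
\]
A union bound over the JL event and the noisy-GD event ($\le 2\beta$ failure), division by $n$, and the identity $\Tilde{L}_{\gamma/3}(\Phi^\top \Tilde{\*w}; S) = \Tilde{L}_{\gamma/3}(\Tilde{\*w}; \Phi S)$ (since $\langle \Phi^\top \Tilde{\*w}, \*x\rangle = \langle \Tilde{\*w}, \Phi \*x\rangle$) give the bound for the fixed $S_{\mathrm{out}}$. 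Because $\Tilde{\*w}$ is produced without reference to any particular choice of $S_{\mathrm{out}}$, the minimum over $\dS_{\mathrm{out}}(\gamma)$ is free.

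\textbf{Main obstacle.} The technical heart is the high-probability, last-iterate EER bound for noisy GD on an \emph{unbounded} parameter space whose rate scales with the competitor's norm $\|\mathring{\*w}_{\mathrm{in}}\|$ rather than a preset projection radius. Standard DP-ERM analyses assume iterates are projected to a known ball, whereas here I must write the regret inequality directly against $\mathring{\*w}_{\mathrm{in}}$ and control the noise cross-terms $\langle \xi_t, \*w_t - \mathring{\*w}_{\mathrm{in}}\rangle$ via sub-Gaussian Azuma-type concentration. Strengthening average-iterate in-expectation bounds to last-iterate, high-probability statements is precisely what produces the $\mathrm{polylog}(n, 1/\beta)$ factors hidden by $\bigOt(\cdot)$, and getting the book-keeping right is where I expect to spend most of the effort.
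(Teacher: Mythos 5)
Your proposal is correct and follows essentially the same route as the paper's own proof (Lemma~\ref{lem: whp_erm_margin} in Appendix~\ref{apx: pf_hinge_whp}): margin preservation for the projected inliers, the inlier--outlier decomposition with the normalized max-margin separator $\mathring{\*w}_{\mathrm{in}}$ as the reference point, the $O(1/\gamma)$ per-outlier hinge bound, and a high-probability last-iterate noisy-GD excess-risk guarantee with dimension $k_\gamma=\bigOt(1/\gamma^2)$ --- the latter is exactly Lemma~\ref{lem: whp_erm_last_iterate} (Theorem~3.5 of \citet{lz24}), which the paper also invokes as a black box, so the ``main obstacle'' you flag requires no new analysis here. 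The only blemish is a bookkeeping slip: the EER term you display, $\bigOt\bigl(\tfrac{(1/\gamma)\|\mathring{\*w}_{\mathrm{in}}\|\sqrt{k_\gamma}}{n\mu}\bigr)$, is already the averaged excess risk and should not be divided by $n$ again (only the summed inlier/outlier terms are), but the final bound you state is the correct one.
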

\vspace{-0.3cm}
Lemma~\ref{lem: erm_fix_margin} indicates that Algo.~\ref{alg: jl_dp_gd} adapts to the smallest margin outlier set for a given margin level. We defer the proof to Appendix~\ref{apx: pf_hinge_whp}.

Utility bounds for noisy gradient methods often include a \( \sqrt{d} \) term \citep{bst14}. In contrast, the bound provided in Lemma~\ref{lem: erm_fix_margin} is dimension-independent. Compared to existing utility bounds for private gradient descent methods that leverage JL projections, such as \citet[Lemma 3.1]{bms22} and results from \citet[Appendix A.4]{abgmu22}, our Lemma~\ref{lem: erm_fix_margin} avoids a strict \( \bigOt(n^{-1/2}) \) dependence.

\subsection{Adapting to the Optimal Margin}\label{sssec: tune_mg}
Lemma~\ref{lem: erm_fix_margin} suggests that a larger $\gamma$ and a smaller $|S_\mathrm{out}|$ lead to a tighter upper bound on the empirical risk. However, increasing the margin $\gamma$ simultaneously causes the size of $S_{\mathrm{out}}$ to grow. To minimize this upper bound, it is necessary to determine the optimal $\gamma \in [0,1]$.

We approach this as a hyperparameter tuning task. First, we discretize the margin range to create a set of candidate hyperparameters. Then, we evaluate each hyperparameter privately and select the one that minimizes the empirical zero-one risk.

Specifically, we construct a doubling grid over $[0,1]$, defined as $\{\nicefrac{1}{n}, \nicefrac{2}{n}, \nicefrac{4}{n}, \dots, \nicefrac{2^{\lfloor\log_2 n\rfloor}}{n}, 1\}$, and then apply the $\arta$(Algo.~\ref{alg: rta}) using the empirical zero-one loss as the score function. By leveraging the monotonicity of margin outliers, the doubling grid guarantees an acceptable approximation guarantee, as demonstrated in the following lemma:
\begin{lemma}\label{lem: grid_appx}
Given dataset $S \in (\cB^d(1) \times \{\pm 1\})^n$, $\varepsilon>0$, and a doubling set constructed as 
$\Gamma=\{\nicefrac{1}{n}, \nicefrac{2}{n}, \nicefrac{4}{n},..., \nicefrac{2^{\lfloor\log_2 n\rfloor}}{n}, 1\}$, we have:
    \begin{equation*}
        \begin{aligned}
             \min\limits_{\substack{\gamma \in \Gamma\\S_{\mathrm{out}}\in\cS_{\mathrm{out}}(\gamma)}}&\left(\frac{|S_{\mathrm{out}}|}{n \gamma}  + \frac{1}{n \gamma^2 \varepsilon} \right)\wedge 1 \leq  \min\limits_{\substack{S_{\mathrm{out}} \subset S \\ \gamma:=\gamma(S \setminus S_{\mathrm{out}})>0}} \bigO \left(\frac{|S_{\mathrm{out}}|}{n \gamma}  + \frac{1}{n \gamma^2 \varepsilon} \right) \wedge 1
        \end{aligned}
    \end{equation*}
\end{lemma}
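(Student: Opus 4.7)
\textbf{Proof plan for Lemma~\ref{lem: grid_appx}.}

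The plan is to establish the inequality by showing that restricting $\gamma$ to the doubling grid $\Gamma$ incurs at most a constant multiplicative loss. For any valid $(S_{\mathrm{out}}^*, \gamma^*)$ appearing in the RHS minimization (so $\gamma^* = \gamma(S \setminus S_{\mathrm{out}}^*) > 0$), I will exhibit a companion pair $(S_{\mathrm{out}}^*, \gamma')$ feasible on the LHS whose objective is at most $4\times$ the RHS objective at $(S_{\mathrm{out}}^*, \gamma^*)$. Taking the infimum then yields the claim.

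First, I reduce to the interesting regime $\gamma^* \in [1/n, 1]$. The upper bound $\gamma^* \leq 1$ follows from $\|\*x\|\leq 1$. If $\gamma^* < 1/n$, then $\frac{1}{n(\gamma^*)^2 \varepsilon} > \frac{n}{\varepsilon} \geq 1$ in the paper's privacy regime (by Theorem~\ref{thm: master} we have $\varepsilon \leq 8\log(1/\delta)$, and in particular $\varepsilon \leq n$), so the RHS expression before the $\wedge 1$ cap is at least $1$, and the RHS after capping equals $1$. Since the LHS is always $\leq 1$, the inequality holds trivially in this boundary case.

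Second, for $\gamma^* \in [1/n, 1]$, the doubling structure of $\Gamma = \{1/n, 2/n, 4/n, \ldots, 2^{\lfloor\log_2 n\rfloor}/n, 1\}$ guarantees a grid point $\gamma' \in \Gamma$ with $\gamma^*/2 \leq \gamma' \leq \gamma^*$; concretely $\gamma' = 2^{\lfloor \log_2(n\gamma^*) \rfloor}/n$ when $\gamma^* < 1$ (so that $\gamma' \geq \gamma^*/2$), and the ratio between the last two grid points $2^{\lfloor\log_2 n\rfloor}/n$ and $1$ is also at most $2$. Crucially, the monotonicity of margin inlier sets (Definition~\ref{def: mg_inlier}) implies: since $\gamma(S\setminus S_{\mathrm{out}}^*) = \gamma^* \geq \gamma'$, we have $S \setminus S_{\mathrm{out}}^* \in \cS_{\mathrm{in}}(\gamma')$, hence $S_{\mathrm{out}}^* \in \cS_{\mathrm{out}}(\gamma')$, so $(S_{\mathrm{out}}^*, \gamma')$ is feasible for the LHS minimization.

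Finally, using $\gamma' \in [\gamma^*/2, \gamma^*]$, direct substitution gives
\begin{equation*}
\frac{|S_{\mathrm{out}}^*|}{n\gamma'} + \frac{1}{n(\gamma')^2 \varepsilon}
\;\leq\; \frac{2|S_{\mathrm{out}}^*|}{n\gamma^*} + \frac{4}{n(\gamma^*)^2 \varepsilon}
\;\leq\; 4\left(\frac{|S_{\mathrm{out}}^*|}{n\gamma^*} + \frac{1}{n(\gamma^*)^2 \varepsilon}\right).
\end{equation*}
Infimizing both sides over the RHS pairs and applying the elementary fact $\min(4x, 1) \leq 4\min(x, 1)$ for $x \geq 0$ gives the desired bound with universal constant $4$, absorbed into the $\bigO(\cdot)$ notation. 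There is no substantial obstacle in this argument; the only subtle point is handling the boundary regime $\gamma^* < 1/n$, which is resolved by the $\wedge 1$ cap rather than by finding a grid approximant. The crux is the monotonicity observation $\gamma' \leq \gamma^* \Rightarrow \cS_{\mathrm{in}}(\gamma^*) \subseteq \cS_{\mathrm{in}}(\gamma')$, which lets us reuse the same outlier set $S_{\mathrm{out}}^*$ at the coarser grid point without any combinatorial search.
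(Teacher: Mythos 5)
Your proposal is correct and follows essentially the same route as the paper's proof: select the grid point $\gamma'\in[\gamma^*/2,\gamma^*]$, use the monotonicity $\cS_{\mathrm{out}}(\gamma^*)\subseteq\cS_{\mathrm{out}}(\gamma')$ to reuse the same outlier set, incur a factor $4$, and dispose of the regime $\gamma^*<1/n$ via the $\wedge 1$ cap under $\varepsilon\leq n$ (which the paper likewise assumes, stated as a W.L.O.G. rather than derived from the privacy regime).
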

We now state the empirical risk bound for Algo.~\ref{alg: master}:
\begin{theorem}[Empirical risk bound for $\cM^*$]\label{thm: hyper_tuning}
    Running $\cM^*$ with $\Tilde{\cR}_S$ satisfies $(\varepsilon, \delta)$-DP, for $\delta \in (0,1)$ and $\varepsilon \in (0,\,\, 8\log(1/\delta))$. In addition,\\
    (1) For the averaged estimator $\bar{\*w}_{\mathrm{out}} \in \mathbb{R}^d$, we have utility guarantee in expectation:
    \begin{equation*}
        \small
        \begin{aligned}
            \mathbb{E}[\Tilde{\cR}_{S}(\bar{\*w}_{\mathrm{out}})]
            \leq & \min_{\substack{S_{\mathrm{out}} \subset S \\ \gamma:=\gamma(S \setminus S_{\mathrm{out}})>0}} 
                \bigOt \left(
                \frac{1}{\gamma^2 n \varepsilon}   
                + \frac{|S_{\mathrm{out}}|}{\gamma n} + \frac{1}{n^2}\right) \wedge 1
        \end{aligned}
    \end{equation*} 

    (2) For the last-iterate estimator $\*w_{\mathrm{out}}\in\mathbb{R}^d$, w.p. at least $1-3/n^2$: 
    \begin{equation*}
        \begin{aligned}
            \Tilde{\cR}_{S}(\*w_{\mathrm{out}}) 
            \leq &\min_{\substack{S_{\mathrm{out}} \subset S \\ \gamma:=\gamma(S \setminus S_{\mathrm{out}})>0}} 
                \bigOt \left(
                \frac{1}{\gamma^2 n \varepsilon}   
                + \frac{|S_{\mathrm{out}}|}{\gamma n} + \frac{1}{n}\right) \wedge 1
        \end{aligned}
    \end{equation*}
\end{theorem}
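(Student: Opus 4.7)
\textbf{Proof plan for Theorem~\ref{thm: hyper_tuning}.}
My plan is to decompose the argument into (i) a clean GDP-composition accounting for privacy and (ii) a utility argument that chains Lemma~\ref{lem: erm_fix_margin} (fixed-margin guarantee), a standard noisy-argmin slack bound, and Lemma~\ref{lem: grid_appx} (the doubling grid covers every candidate margin up to a constant). Since $\cM^*$ is just an invocation of $\arta$ with base mechanism $\ajlgd$, hyperparameter set $\Theta$ of size $|\Theta|=\lceil\log_2 n\rceil+1$, criterion $U=\Tilde{\cR}_S$ of $L_2$-sensitivity $\Delta=1/n$, and overall GDP budget $\mu=\varepsilon/(2\sqrt{2\log(1/\delta)})$, the proof reduces to controlling these two modular pieces.

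\emph{Privacy.} For each $\theta\in\Theta$, Lemma~\ref{lem: erm_fix_margin} gives that $\ajlgd(\cdot;S,\mu/\sqrt{2|\Theta|})$ is $\mu/\sqrt{2|\Theta|}$-GDP, and GDP composition of $|\Theta|$ independent runs produces a $\mu/\sqrt{2}$-GDP release of $\{m_\theta\}_{\theta\in\Theta}$. In parallel, the Gaussian noise $\xi_\theta\sim\cN(0,2|\Theta|\Delta^2/\mu^2)$ attached to each $U(m_\theta;S)$ makes each noisy score $\mu/\sqrt{2|\Theta|}$-GDP, and composing these $|\Theta|$ noisy scores gives another $\mu/\sqrt{2}$-GDP release. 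A final GDP composition of the two pieces yields $\mu$-GDP for the joint release; post-processing by $\argmin$ preserves this, and the choice of $\mu$ converts $\mu$-GDP to $(\varepsilon,\delta)$-DP whenever $\varepsilon\in(0,8\log(1/\delta))$.

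\emph{Utility.} For a fixed $\gamma\in\Gamma$, Lemma~\ref{lem: erm_fix_margin} applied with budget $\mu_1=\mu/\sqrt{2|\Theta|}$ and the bound $\ell_{\gamma/3}\ge\ell_{0\text{-}1}$ yields, with high probability,
\[
\Tilde{\cR}_S(\Phi_\gamma^\top\Tilde{\*w}_\gamma)\;\le\;\Tilde{L}_{\gamma/3}(\Phi_\gamma^\top\Tilde{\*w}_\gamma;S)\;\le\;\min_{S_{\mathrm{out}}\in\dS_{\mathrm{out}}(\gamma)}\bigOt\!\left(\tfrac{1}{\gamma^{2}\mu_1 n}+\tfrac{|S_{\mathrm{out}}|}{\gamma n}\right),
\]
with $1/\mu_1$ absorbing $\sqrt{\log n\cdot\log(1/\delta)}/\varepsilon$ into $\bigOt$. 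Each scoring noise $\xi_\theta$ has standard deviation $\bigO(\sqrt{|\Theta|}/(n\mu))=\bigOt(1/(n\varepsilon))$, so a Gaussian-maximum tail bound over $|\Theta|$ candidates gives $\max_\theta|\xi_\theta|=\bigOt(1/(n\varepsilon))$ w.h.p. The standard noisy-argmin inequality then implies $\Tilde{\cR}_S(\*w_{\mathrm{out}})\le\min_{\gamma\in\Gamma}\Tilde{\cR}_S(\Phi_\gamma^\top\Tilde{\*w}_\gamma)+2\max_\theta|\xi_\theta|$, and Lemma~\ref{lem: grid_appx} replaces the grid minimum by the minimum over all $(\gamma,S_{\mathrm{out}})$ with $\gamma=\gamma(S\setminus S_{\mathrm{out}})>0$, delivering claim (2) after truncating by $1$ and absorbing the $1/n$ from the failure event. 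For the averaged-estimator expectation bound (1), I would replace the high-probability fixed-$\gamma$ guarantee by the in-expectation averaged-iterate convergence rate for $\angd$ on convex Lipschitz losses (which has the same leading form), pick failure probabilities $\bigO(1/n^2)$ for the JL event and the Gaussian-maximum tail so that the contributions from failure events (which are at most $1$) sum to $\bigO(1/n^2)$, and chain identically.

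\emph{Main obstacle.} The most delicate step is reconciling that the scoring function is the empirical zero-one risk while Lemma~\ref{lem: erm_fix_margin} only controls the hinge loss: the inequality $\Tilde{\cR}_S\le\Tilde{L}_{\gamma/3}$ is one-sided, so one must verify that this one-sided upper bound is enough after the Gaussian perturbation in the argmin, i.e., that for the unknown ``good'' $\gamma^*$, the score $u_{\gamma^*}$ still enjoys the hinge-based upper bound plus noise, and hence the selected $\gamma_{\mathrm{out}}$ cannot have a zero-one score much larger than this upper bound. Once this is organised correctly, the remaining algebra — combining the noisy-argmin slack, the hinge-to-zero-one inequality, the fixed-$\gamma$ rate, and Lemma~\ref{lem: grid_appx} — is routine but must be chained with care to land the exact form of both bounds in the theorem.
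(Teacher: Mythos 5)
Your proposal follows essentially the same route as the paper: privacy via GDP composition inside $\arta$ (the paper packages this as Lemma~\ref{lem: priv_rta}), utility by chaining the fixed-margin guarantee for $\ajlgd$ (Corollary~\ref{cor: exp_conv_jlgd} / Lemma~\ref{lem: erm_fix_margin}), the noisy-argmin slack bound (Lemma~\ref{lem: exp_rta} / Lemma~\ref{lem: whp_rta}), and the doubling-grid approximation (Lemma~\ref{lem: grid_appx}), with failure events absorbed at the $1/n^2$ scale exactly as you describe. The ``main obstacle'' you flag is in fact a non-issue that the argmin-slack inequality handles automatically — since $U_{\theta_{\mathrm{out}}}\le\min_\theta U_\theta+2\max_\theta|\xi_\theta|$ holds for any deterministic $\{U_\theta\}$, plugging in $\min_\theta U_\theta\le U_{\gamma^*}\le\Tilde{L}_{\gamma^*/3}$ is all that is needed, and your resolution of it is correct.
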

We defer the proof details of Lemma~\ref{lem: grid_appx} and Theorem~\ref{thm: hyper_tuning} to Appendix~\ref{apx: pf_grid_appx} and Appendix~\ref{apx: pf_thm_erm}.

\subsection{Proof for the Population Risk}\label{sssec: popu_risk}
In this section, we show a population risk bound for $\cM^*$. We begin by presenting an upper bound on the population risk, derived by applying the AM-GM inequality to refine \citet[Lemma A.1]{bms22}:
\begin{lemma}\label{lem: relative_dev_bound}
    Let $S$ be a training set containing $n$ data points sampled i.i.d. from the distribution $D$, and let the hypothesis class defined as $\mathcal{H}_k = \{\*x \mapsto \mathrm{sign}(\langle \*x,\*w \rangle) \mid \*x \in \mathbb{R}^k\}$. 
    For any $\*w_k \in \mathcal{H}_k$, the following holds w.p. at least $1-\beta$ over the randomness of sampling:
    \begin{equation*}\label{eqn: score}
        \begin{aligned}
            \popuzo{\*w_k} \leq & 2\Tilde{\cR}_S (\*w_k) + \frac{5(\mathrm{VC}(\cH_k)\log(2n)) + \log(\beta/4)}{n}
        \end{aligned}
    \end{equation*}
\end{lemma}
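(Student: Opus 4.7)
My plan is to derive this bound from the classical Vapnik--Chervonenkis relative deviation inequality, followed by a single AM-GM step; this is precisely the refinement over \citet[Lemma~A.1]{bms22} alluded to immediately above the statement.

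First, I would invoke the VC relative deviation bound for binary-valued hypothesis classes: for $S \sim D^n$ drawn i.i.d., with probability at least $1 - \beta$, every $h \in \mathcal{H}_k$ satisfies
\[
\mathcal{R}_D(h) - \Tilde{\mathcal{R}}_S(h) \;\leq\; 2\sqrt{\mathcal{R}_D(h) \cdot \alpha}, \qquad \alpha := \frac{\mathrm{VC}(\mathcal{H}_k)\log(2n) + \log(4/\beta)}{n}.
\]
This follows from the standard double-sample symmetrization argument together with Sauer's lemma, which bounds the growth function by $(2n)^{\mathrm{VC}(\mathcal{H}_k)}$; it is essentially the statement of \citet[Lemma~A.1]{bms22} that the present lemma refines. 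Since $\mathcal{H}_k$ is the class of linear threshold functions in $\mathbb{R}^k$, we moreover have $\mathrm{VC}(\mathcal{H}_k) = k+1$, so the complexity term scales linearly in the (reduced) projection dimension.

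Second, I apply AM-GM in the form $2ab \leq \tfrac{a^2}{2} + 2b^2$ with $a := \sqrt{\mathcal{R}_D(h)}$ and $b := \sqrt{\alpha}$, which gives $2\sqrt{\mathcal{R}_D(h) \cdot \alpha} \leq \tfrac{1}{2}\mathcal{R}_D(h) + 2\alpha$. Substituting this back into the relative deviation bound and rearranging yields $\mathcal{R}_D(h) \leq 2\Tilde{\mathcal{R}}_S(h) + 4\alpha$, and after absorbing universal constants into the numerator this matches the claimed form.

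There is no substantive obstacle: both ingredients are entirely classical. The only real design choice is the splitting constant in AM-GM, which directly controls the leading coefficient in front of $\Tilde{\mathcal{R}}_S(h)$; the choice above is what produces the factor $2$ appearing in the lemma. The point of this refinement, as opposed to the raw Lemma~A.1, is that a purely additive bound of the form ``$\mathcal{R}_D \leq 2\Tilde{\mathcal{R}}_S + \text{complexity}$'' composes directly with the empirical risk guarantee of Theorem~\ref{thm: hyper_tuning} to yield the population risk statement in Theorem~\ref{thm: master} without leaving a residual $\sqrt{\mathcal{R}_D}$ term to be controlled self-referentially.
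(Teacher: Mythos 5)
Your proof is correct and reaches the stated bound, in fact with the slightly better constant $4$ in place of $5$, which of course implies the lemma as written. The route differs from the paper's in a small but genuine way. The paper starts from the relative deviation bound of \citet[Lemma~A.1]{bms22} (restated as Lemma~\ref{lem: bas_relative_dev}), which already has the \emph{empirical} risk inside the square root, namely $\mathcal{R}_D(h) \leq \Tilde{\cR}_S(h) + 2\sqrt{\Tilde{\cR}_S(h)\,\alpha} + 4\alpha$ with $\alpha := \frac{\mathrm{VC}(\cH_k)\log(2n)+\log(4/\beta)}{n}$, and applies the symmetric AM-GM step $2\sqrt{\Tilde{\cR}_S(h)\,\alpha} \leq \Tilde{\cR}_S(h) + \alpha$, yielding $2\Tilde{\cR}_S(h) + 5\alpha$ with no rearrangement. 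You instead invoke the Vapnik-style relative deviation inequality with the \emph{population} risk inside the square root, use the asymmetric split $2\sqrt{\mathcal{R}_D(h)\,\alpha} \leq \tfrac{1}{2}\mathcal{R}_D(h) + 2\alpha$, and then absorb $\tfrac{1}{2}\mathcal{R}_D(h)$ into the left-hand side. Both starting inequalities are classical, and the empirical-risk form the paper imports is itself usually derived from the population-risk form you cite, so the two arguments are essentially parallel; yours buys a marginally smaller additive constant at the cost of one extra self-bounding rearrangement (which works precisely because your split keeps the coefficient of $\mathcal{R}_D(h)$ strictly below one), while the paper's is a one-line consequence of a lemma it already uses elsewhere. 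One inessential slip: for the homogeneous halfspaces $\*x \mapsto \mathrm{sign}(\langle \*x, \*w\rangle)$ considered here (no bias term), $\mathrm{VC}(\cH_k) = k$, not $k+1$; this does not affect the proof, since the lemma is stated directly in terms of $\mathrm{VC}(\cH_k)$.
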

Thus, instead of directly using the empirical zero-one loss for selection, as in the previous section, where the goal was to minimize empirical risk, we use the penalized zero-one loss for selection to minimize population loss:
\begin{equation}\label{eqn: pen_cri}
\hat{\cR}_S(\*w_{\mathrm{out}}) + \frac{5}{2}(\tVC)
\end{equation}
The remainder of the proof for hyperparameter selection follows from a union bound over private hyperparameter selection. Finally, we provide the population risk guarantee for Algo.~\ref{alg: master}. 
\begin{theorem}\label{thm: popu_error}
Under the same conditions on $\varepsilon$, $\delta$, $\Gamma$, and $n$ as in Theorem~\ref{thm: hyper_tuning},
Algo.~\ref{alg: master}, using the score defined in Eq.~\ref{eqn: score}, satisfies $(\varepsilon, \delta)$-DP. W.h.p. the output last-iterate estimator $\*w_{\mathrm{out}} \in \mathbb{R}^d$ satisfies:
\begin{equation*}
\small
    \begin{aligned}
        \mathcal{\cR}_{D}(\*w_{\mathrm{out}}) \leq &\min_{\substack{S_{\mathrm{out}} \subset S \\ \gamma:=\gamma(S \setminus S_{\mathrm{out}})>0}} \bigOt \left(\frac{1}{\gamma^2 n \min\{1,\varepsilon\}} + \frac{|S_{\mathrm{out}}|}{\gamma n}\right) 
    \end{aligned}
\end{equation*}
\end{theorem}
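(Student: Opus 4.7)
The plan is to adapt the proof of Theorem~\ref{thm: hyper_tuning} in two places: swap the criterion fed into $\arta$ for the penalized score from Eq.~\ref{eqn: pen_cri}, and insert a step that converts the resulting empirical bound into a population one via Lemma~\ref{lem: relative_dev_bound}. Privacy is essentially free. Replacing $\hat{\cR}_S(\cdot)$ by $U(\cdot;S)=\hat{\cR}_S(\cdot)+\tfrac{5}{2}(\tVC)$ leaves the criterion's $L_1$-sensitivity unchanged at $\Delta=1$, since the VC-penalty is data-independent. Thus the same Gaussian-mechanism composition used for Theorem~\ref{thm: hyper_tuning} -- the $|\Gamma|$ calls to $\ajlgd$ at GDP budget $\mu/\sqrt{2|\Gamma|}$ composed with the $|\Gamma|$ Gaussian scoring queries -- still yields overall $\mu$-GDP with $\mu=\varepsilon/(2\sqrt{2\log(1/\delta)})$, which converts to $(\varepsilon,\delta)$-DP in the stated range of $\varepsilon$.

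The utility argument starts with a uniform-convergence step. I would apply Lemma~\ref{lem: relative_dev_bound} and union-bound over the $|\Gamma|=\lceil\log_2 n\rceil+1$ classifiers $\{\*w_\gamma:\gamma\in\Gamma\}$ produced by $\ajlgd$; using $\mathrm{VC}(\cH_{k_\gamma})=k_\gamma$ after JL projection, this gives, with probability at least $1-\beta$, $\popuzo{\*w_\gamma}\le\tfrac{2}{n}U(\*w_\gamma;S)$ for every $\gamma\in\Gamma$ simultaneously. It then suffices to upper bound $U(\*w_{\gamma_{\mathrm{out}}};S)$.

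The score of the selected classifier is controlled in two pieces. For any fixed $\gamma\in\Gamma$, Lemma~\ref{lem: erm_fix_margin} applied with GDP budget $\mu/\sqrt{2|\Gamma|}$ and hinge parameter $\gamma/3$ gives $\hat{\cR}_S(\*w_\gamma)\le n\Tilde{L}_{\gamma/3}(\*w_\gamma;S)\le \bigOt(1/(\gamma^2\mu)+|S_{\mathrm{out}}|/\gamma)$ for any $S_{\mathrm{out}}\in\dS_{\mathrm{out}}(\gamma)$, while the penalty contributes only $\bigOt(k_\gamma)=\bigOt(1/\gamma^2)$, which is absorbed into the first term whenever $\mu\le 1$. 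For the $\argmin$ inside $\arta$, a standard max-of-Gaussians tail bound shows that the selected score exceeds the best on-grid score by at most $\bigOt(\sqrt{|\Gamma|}/\mu)=\bigOt(1/\mu)$ w.h.p. Combining the two pieces and dividing by $n/2$ yields $\popuzo{\*w_{\mathrm{out}}}\le \bigOt(1/(\gamma^2 n\mu)+|S_{\mathrm{out}}|/(\gamma n))$ for every $\gamma\in\Gamma$ and every $S_{\mathrm{out}}\in\dS_{\mathrm{out}}(\gamma)$.

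Finally, Lemma~\ref{lem: grid_appx} extends the minimum from the doubling grid $\Gamma$ to every $S_{\mathrm{out}}\subset S$ whose complement has positive margin, at the cost of a universal constant. Translating $\mu$ back to $\varepsilon$ via $\mu\asymp\varepsilon/\sqrt{\log(1/\delta)}$, and noting that when $\varepsilon\ge 1$ we may cap $\mu$ at a constant so the VC-penalty becomes the rate-limiting term, produces the $\min\{1,\varepsilon\}$ in the denominator. The main technical subtlety is keeping the selection-noise cost $\bigOt(\sqrt{|\Gamma|}/\mu)$ dominated by the leading $1/(\gamma^2 n\mu)$ term, which is exactly why $\Gamma$ is taken as a doubling (logarithmic-size) grid rather than a finer one; everything else is bookkeeping over a finite union bound.
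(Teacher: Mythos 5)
Your proposal is correct and follows the paper's own argument in Appendix~\ref{apx: pf_thm_popu} essentially step for step: uniform convergence via Lemma~\ref{lem: relative_dev_bound}, noisy-argmin control via Lemma~\ref{lem: whp_rta}, per-$\gamma$ empirical bounds via Lemma~\ref{lem: erm_fix_margin}, absorption of the $\bigOt(1/\gamma^2)$ VC penalty and $1/\varepsilon$ DP error into $1/\min\{1,\varepsilon\}$, and grid-to-continuum extension via Lemma~\ref{lem: grid_appx}. One wording slip worth fixing: the union bound should be over the $|\Gamma|$ hypothesis classes $\{\cH_{k_\gamma}\}_{\gamma\in\Gamma}$ so that the relative-deviation bound holds simultaneously for every $\*w$ in each class (with the penalty carrying $\log(4|\Gamma|/\beta)$), not over the $|\Gamma|$ data-dependent classifiers themselves; since $\*w_\gamma$ depends on $S$, a pointwise bound would not apply. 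You already invoke $\mathrm{VC}(\cH_{k_\gamma})=k_\gamma$, so the intent is right, but the phrasing should match.
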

When $|S_{\mathrm{out}}| = o(\sqrt{n})$, the population risk in Theorem~\ref{thm: popu_error} is on the order of $\o(1/\sqrt{n})$, improving upon Theorem 3.2 in \citet{bms22}, where the bound is $\bigOt(1/\sqrt{n})$. In the realizable case, when data has margin $\gamma$ with probability 1, Theorem~\ref{thm: popu_error} also recovers the $\bigOt(1/n)$ rate. We defer the proof details to Appendix~\ref{apx: pf_thm_popu}.

\begin{table*}[htb]
    \begin{center}
    \caption{\small{Results on the population zero-one loss (with high probability). Capital letters $X, W$ denote data space and parameter space, respectively. $\gamma$ is the data-independent \emph{confidence margin} parameter in Row 1 and 2. In Row 3, $\gamma$ denotes the geometric margin of a data subset after $S_{\mathrm{out}}$ is removed. W.L.O.G., we assume the privacy loss $\varepsilon\leq1$. All results are stated in the agnostic case.}}
    \label{tab: general_norm_res}
    \centering
        \begin{threeparttable}
        \resizebox{\textwidth}{!}
        {
        \renewcommand{\arraystretch}{2.5} 
            \begin{tabular}{  c| c| c  } 
                Source & Constraints & Result \\
                \hline
                \citet[Thm.~3.1]{bms22}
                & $\|\*x\|\leq b,\quad\|\*w\|\leq C$
                & \vtop{\hbox{\strut $ \displaystyle \bigOt \Big( \frac{1}{n} + \frac{C^2 b^2}{\gamma^2 n} + \frac{C^2 b^2}{\gamma^2 n\varepsilon}\Big)$}\hbox{\strut $ \displaystyle +\min\limits_{\*w \in \cB^d(C)} \left(\Tilde{\cR}^{\gamma}_{S}(\*w)+ \bigOt \left( \sqrt{\Tilde{\cR}^{\gamma}_{S}(\*w) \left( \frac{C^2 b^2}{n\gamma^2} + \frac{1}{n} \right)} \right) \right)$}} 
                \\
                \hline
                \citet[Thm.~3.2]{bms22}
                & $\|\*x\|\leq b,\quad\|\*w\|\leq C$ 
                &  $\displaystyle \bigOt \Big( \frac{1}{n^{\nicefrac{1}{2}}} + \frac{C b}{\gamma n^{\nicefrac{1}{2}}} + \frac{C b}{\gamma n^{\nicefrac{1}{2}}\varepsilon^{\nicefrac{1}{2}}}\Big)+\min\limits_{\*w \in \cB^d(C)} \Tilde{L}_{\gamma}(\*w;S)$ \\
                \hline
                Theorem~\ref{thm: popu_unbound}
                & $\|\*x\| \leq b$ 
                & $\displaystyle \min\limits_{ \substack{S_{\mathrm{out}}\subset S \\ \gamma:= \mathrm{margin}(S \setminus S_{\mathrm{out}})}}
                \bigOt \left( \frac{b^2}{n\gamma^2\varepsilon} +\frac{b|S_{\mathrm{out}}|}{n\gamma} \right)$\\
                \hline
            \end{tabular}
        }
        \end{threeparttable}
    \end{center}
\end{table*}

\section{Advanced Hyperparameter Tuning}\label{ssec: hyper_compare}

In practice, practitioners often tune multiple hyperparameters, with each of which follows an exponential grid. Iterating over all parameter configurations (as in Algo.~\ref{alg: rta}) would significantly degrade the privacy guarantee. This motivates us to consider an advanced private hyperparameter tuning approach (\citep{ps22}, outlined in Algo.~\ref{alg: priv_tune}), where the final privacy budget grows only logarithmically with the number of repetitions. Compared to Algo.~\ref{alg: rta}, Algo.~\ref{alg: priv_tune} determines number of repetitions from a specific distribution \( Q \), implemented here as a geometric distribution ($\mathrm{TNB}_{1,r}$, Appendix~\ref{apx: tnb}).
\begin{algorithm2e}[tbh]
    \setcounter{AlgoLine}{0}
    \nl \KwIn{
        Base mechanism $\cM$, hyperparameter set $\Theta$, run time distribution $Q$, dataset $S$, \\
        \qquad \quad GDP budget $\mu$ 
    }
    \nl \textbf{Choose:} Criterion function $U$ with sensitivity $\Delta$
    
    \nl Sample number of runs $K \sim Q$ \\
    
    \nl \For{$t = 1,..., K$}{
    \nl     $\theta_t \sim \mathrm{Uniform}(\Theta)$\\
    \nl     $m_{t} = \cM(\theta_t ; S, \frac{\mu}{\sqrt{2}})$ \\
    \nl     $\mathbf{\xi}_t \sim \mathcal{N}(0, \frac{2\Delta^2}{\mu^2})$ \\
    \nl     $u_t = U(m_t;S) + \xi_t$  \\
    }
    \nl $t_{\mathrm{out}}=\argmin_{t \in \{1,...,K\}} u_t$ \\
    \nl \KwOut{$(m_{t_\mathrm{out}}, \theta_{t_\mathrm{out}})$}
    \caption{$\apvt(\cM, \Theta, Q, S, \mu)$}
    \label{alg: priv_tune}
\end{algorithm2e}

To incorporate Algo.~\ref{alg: priv_tune} into our main algorithm, one can simply replace \( \arta \) with \( \apvt \) in line 8 of Algo.~\ref{alg: master} (see Appendix~\ref{apx: alt_alg} for the full statement). The following lemma presents the privacy and utility guarantees, with proofs deferred to Appendix~\ref{apx: pf_adv_hp_tune}.
\begin{theorem}\label{thm: adv_hyper_tuning}
Running Algorithm~\ref{alg: master} with \(\apvt\) as the hyperparameter selector, margin hyperparameter set $\Theta \subset [0,1]$, \(Q = \mathrm{TNB}_{1, \frac{1}{|\Theta|(n^2-1)}}\), and a GDP budget \(\mu = \frac{\varepsilon}{6\sqrt{2\log(\nicefrac{|\Theta|(n^2-1)}{\delta})}}\) ensures \((\varepsilon + \delta, \delta)\)-DP, for any \(\delta \in (0,1)\) and \(\varepsilon \in (\delta, 24\log(\nicefrac{|\Theta|(n^2-1)}{\delta}))\). For the averaged estimator \(\bar{\*w}_{\mathrm{out}} \in \mathbb{R}^d\), we achieve a utility guarantee in expectation:
    \begin{equation*}
        \small
        \begin{aligned}
            \mathbb{E}[\Tilde{\cR}_{S}(\bar{\*w}_{\mathrm{out}})]
            \leq & \min_{\substack{\gamma \in \Theta \\ S_{\mathrm{out}} \subset \cS_{\mathrm{out}}(\gamma)}} 
                \bigO \left(
                \frac{\log(|\Theta|n/\delta)}{\gamma^2 n \varepsilon}   
                + \frac{|S_{\mathrm{out}}|}{\gamma n} + \frac{1}{n^2}\right)
        \end{aligned}
    \end{equation*} 
\end{theorem}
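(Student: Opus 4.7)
The plan is to split the argument into a privacy analysis that invokes the advanced private tuning theorem and a utility analysis that exploits the fact that the $\mathrm{TNB}$ distribution samples each hyperparameter enough times. For privacy, first show that each iteration of the inner loop of $\apvt$ is $\mu$-GDP: the call $m_t = \ajlgd(\theta_t;S,\mu/\sqrt{2})$ is $\mu/\sqrt{2}$-GDP by Lemma~\ref{lem: erm_fix_margin}, and the noisy criterion $u_t = U(m_t;S) + \xi_t$ with $\xi_t \sim \mathcal{N}(0,2\Delta^2/\mu^2)$ is also $\mu/\sqrt{2}$-GDP since $U(\cdot;S)$ is the averaged empirical zero-one loss whose per-sample sensitivity is $\Delta = 1/n$; Gaussian composition then yields $\mu$-GDP per iteration. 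Plugging this into Papernot and Steinke's advanced tuning theorem with $Q = \mathrm{TNB}_{1, 1/(|\Theta|(n^2-1))}$ and converting the resulting R\'enyi-DP bound to approximate DP gives a final privacy loss of the order $\mu \sqrt{\log(|\Theta|(n^2-1)/\delta)}$ up to an additive $\delta$ term. Substituting the prescribed $\mu = \varepsilon/(6\sqrt{2\log(|\Theta|(n^2-1)/\delta)})$ then yields $(\varepsilon+\delta,\delta)$-DP, and the restriction $\varepsilon \le 24\log(|\Theta|(n^2-1)/\delta)$ keeps the GDP-to-approximate-DP conversion in its valid regime.

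For utility, I would condition on the good event $G$ that (i) $K$ lies within a constant factor of its mean $\Theta(|\Theta|n^2)$ and (ii) every $\theta \in \Theta$ appears at least $\Omega(n^2)$ times among the uniformly sampled $\{\theta_t\}_{t=1}^K$. Since the probability generating function of $\mathrm{TNB}_{1,r}$ is closed-form (Appendix~\ref{apx: tnb}) and the hyperparameter draws are i.i.d.\ uniform on $\Theta$, a Chernoff / coupon-collector argument shows $\mathbb{P}(G) \ge 1-\bigO(1/n^2)$. Conditional on $G$, fix the pair $(\gamma^*, S^*_{\mathrm{out}})$ achieving the $\min$ on the right-hand side of the claim. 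Out of the $\Omega(n^2)$ runs that use $\gamma^*$, applying Lemma~\ref{lem: erm_fix_margin} with $\beta = 1/n^2$ and a union bound implies that, except on an event of probability $\bigO(1/n^2)$, at least one such copy $m_{t^*}$ achieves $\Tilde{L}_{\gamma^*/3}(m_{t^*};S) = \bigOt(1/(\gamma^{*2} n \mu) + |S^*_{\mathrm{out}}|/(\gamma^* n))$.

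It remains to bound the overhead from the noisy selection. Each $\xi_t$ is Gaussian with standard deviation $\sqrt{2}/(n\mu)$, so a max-of-$K$-Gaussians bound gives $\max_t |\xi_t| = \bigO(\sqrt{\log K}/(n\mu)) = \bigO(\sqrt{\log(|\Theta|n/\delta)}/(n\mu))$ with probability $1-\bigO(1/n^2)$. Hence $\Tilde{\cR}_S(\bar{\*w}_{\mathrm{out}}) \le \Tilde{L}_{\gamma^*/3}(m_{t^*};S) + \bigO(\sqrt{\log(|\Theta|n/\delta)}/(n\mu))$, and substituting the chosen $\mu$ turns the selection overhead into $\bigO(\log(|\Theta|n/\delta)/(n\varepsilon))$, which is absorbed into the dominant $\log(|\Theta|n/\delta)/(\gamma^{*2} n \varepsilon)$ term. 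Finally, converting the $\bigO(1/n^2)$-probability failures into expectation via the $[0,1]$ boundedness of the zero-one loss contributes the additive $\bigO(1/n^2)$ slack in the claim.

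The main obstacle I anticipate is the joint tail control of $K$ and of the uniform hyperparameter draws: $K$ must be large enough that every $\theta \in \Theta$ is drawn $\Omega(n^2)$ times, yet small enough that the $\sqrt{\log K}$ in the selection-noise bound stays logarithmic in $|\Theta|n/\delta$. Both directions require using the closed-form $\mathrm{PGF}_K$ of $\mathrm{TNB}_{1,1/(|\Theta|(n^2-1))}$ rather than a crude moment bound, and the coupling between them must be handled without a naive union bound that would multiply the final rate by $|\Theta|$.
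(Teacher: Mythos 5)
Your privacy argument is essentially the paper's: per-iteration $\mu$-GDP by composing the $\mu/\sqrt{2}$-GDP base call with the $\mu/\sqrt{2}$-GDP noisy score, then the GDP version of Papernot--Steinke (Lemma~\ref{lem: antii_gdp}, Corollary~\ref{cor: gdp_hyper_simp}, Theorem~\ref{thm: gdp_hyper}) with $r=\frac{1}{|\Theta|(n^2-1)}$, and the stated range of $\varepsilon$ is exactly the high-privacy regime needed for that conversion. That half is fine.

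The utility half has a genuine gap: your good event $G$ cannot hold with probability $1-\bigO(1/n^2)$. Here $K\sim\mathrm{TNB}_{1,r}$ is geometric with mean $|\Theta|(n^2-1)$, and a geometric variable has \emph{constant} mass below any constant fraction of its mean: $\mathbb{P}\bigl(K< c\,\mathbb{E}[K]\bigr)\approx 1-e^{-c}$, which is a constant for constant $c$. Hence the event that $K$ is within a constant factor of its mean (and a fortiori that every $\theta\in\Theta$ is drawn $\Omega(n^2)$ times) fails with constant probability, and converting that failure into expectation via the $[0,1]$ bound on the zero-one loss injects an additive constant into $\mathbb{E}[\Tilde{\cR}_S(\bar{\*w}_{\mathrm{out}})]$, destroying the claimed rate; no Chernoff or coupon-collector refinement can rescue it, because the obstruction is the lower tail of $K$ itself. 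The repair is to demand much less, which is what the paper does: you only need the optimal $\gamma^*$ to be sampled \emph{at least once}, and with $r=\frac{1}{|\Theta|(n^2-1)}$ this holds with probability $1-\mathrm{PGF}_K(1-1/|\Theta|)\geq 1-1/n^2$ (Lemma~\ref{lem: TNB_1}); a single run with $\gamma^*$ already carries the convergence guarantee of Lemma~\ref{lem: erm_fix_margin}/Corollary~\ref{cor: exp_conv_jlgd}, so the $\Omega(n^2)$ repetitions per hyperparameter are unnecessary. Likewise, the paper does not need any concentration of $K$ for the selection overhead: the maximum of a random number $K$ of Gaussians is controlled \emph{in expectation} via Jensen, $\mathbb{E}[\sigma\sqrt{2\log K}]\leq \sigma\sqrt{2\log \mathbb{E}[K]}$ (Lemma~\ref{lem: max_gau_tnc}, used inside Lemma~\ref{lem: select_in_expectation}), which directly yields the $\bigO\bigl(\sqrt{\log(|\Theta|n^2)}/(n\mu)\bigr)$ term without conditioning on upper or lower tails of $K$. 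If you restate your argument with the event ``$\gamma^*$ appears at least once'' in place of $G$ and replace your high-probability max-of-$K$-Gaussians step by the expectation bound (or a crude $K\leq \mathbb{E}[K]\cdot 2\log n$ upper-tail bound, whose failure probability genuinely is $\bigO(1/n^2)$), you recover the paper's proof of Lemma~\ref{lem: hyper_raw} and hence Theorem~\ref{thm: adv_hyper_tuning}.
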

As observed, the final utility bound scales only logarithmically with the size of the hyperparameter set \( \Theta \). In contrast, for the brute force method \( \arta \), the utility bound grows as \( |\Theta|^{1/2} \) (Eqn.~\ref{eqn: grow}). We note that our utility bound for private hyperparameter tuning extends beyond linear classification (Lemma~\ref{lem: select_in_expectation}). To further demonstrate the utility improvement, we examine a general hyperparameter tuning problem with \( K \) hyperparameters, each associated with a grid of size \( m \), yielding \( m^K \) total configurations. In this setting, the utility upper bound for \( \arta \) scales as $m^{\cO(K)}$, whereas for $\apvt$, it reduces to $\cO(Km)$.

\subsection{Private Hyperparameter Tuning on a Small Set}\label{sssec: tuning small}
In our setting, Algo.~\ref{alg: rta} and Algo.~\ref{alg: priv_tune} yield upper bounds with the same dependence on $|\Theta|$. This is because the size of the hyperparameter set is only \( \log(n) \) and is dominated by other $\mathrm{poly}(n)$ factors from random projection. In terms of runtime, \( \arta \) requires just \( \bigO(\log(n)) \) repetitions, whereas \( \apvt \) has an expected runtime of \( \bigO(n^2 \log(n)) \) repetitions. Since \( \apvt \) employs uniform random selection (line 5 in Algo.~\ref{alg: priv_tune}), it needs more repetitions to encounter the optimal hyperparameter \( \theta^* \). To ensure that the probability of failing to select \( \theta^* \) remains below \( \beta \), i.e., \( \mathbb{P}_{Q}(\theta^* \text{ not selected}) \leq \beta \), the expected number of repetitions cannot be too small.
\begin{lemma}\label{lem: TNB_1}
    Suppose $Q\sim\mathrm{TNB}_{1,r}$ and fauilure probability is set to be $\beta$. If $\mathbb{E}[Q] = 1/r \geq \nicefrac{\beta}{(1-\beta)(|\Theta|-1)}$, Algo.~\ref{alg: priv_tune} ensures $\mathbb{P}_{Q}(\theta^* \text{ not selected}) \leq \beta$
\end{lemma}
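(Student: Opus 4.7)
The plan is to compute the non-selection probability directly via the probability generating function (PGF) of $Q$ and then solve the resulting algebraic inequality for $1/r$.

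First, I would condition on the random draw $K\sim Q$. Since line 5 of Algorithm~\ref{alg: priv_tune} samples each $\theta_t$ independently and uniformly from $\Theta$, the probability that the optimal hyperparameter $\theta^*$ is never drawn in $K$ runs is $(1-1/|\Theta|)^K$. Marginalizing over $K$ yields
\begin{equation*}
\mathbb{P}_Q\bigl(\theta^*\text{ not selected}\bigr) \;=\; \mathbb{E}_{K\sim Q}\!\left[\left(1-\tfrac{1}{|\Theta|}\right)^{K}\right] \;=\; \mathrm{PGF}_K\!\left(1-\tfrac{1}{|\Theta|}\right),
\end{equation*}
which is exactly the notation introduced in the Preliminary.

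Second, I would plug in the closed form for the PGF of $\mathrm{TNB}_{1,r}$. Using the truncated-geometric PMF $\mathbb{P}(K=k)=r(1-r)^{k-1}$ for $k\geq 1$ (which gives the stated mean $\mathbb{E}[Q]=1/r$ and is recalled in Appendix~\ref{apx: tnb}), a one-line summation of a geometric series yields $\mathrm{PGF}_K(t)=rt/(1-(1-r)t)$. Substituting $t=1-1/|\Theta|$ collapses the non-selection event into a single scalar expression in $r$, $\beta$, and $|\Theta|$.

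Third, I would enforce this expression to be at most $\beta$ and rearrange. Both sides are positive, so cross-multiplication is valid and produces a linear inequality between $r$, $\beta$, and $|\Theta|-1$; isolating $1/r$ recovers the threshold in the statement, and monotonicity of the PGF in $r$ shows the threshold is sufficient (not just necessary). The only real obstacle is bookkeeping: making sure the parameterization of $\mathrm{TNB}_{1,r}$ matches the normalization $\mathbb{E}[Q]=1/r$ so that the PGF is identified correctly, after which the rest is elementary algebra and requires no probabilistic machinery beyond independence of the hyperparameter draws.
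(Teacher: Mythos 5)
Your proof is correct and follows essentially the same route as the paper: both reduce the non-selection probability to $\mathrm{PGF}_K\!\left(1-\frac{1}{|\Theta|}\right)$ for $K\sim\mathrm{TNB}_{1,r}$ (the geometric distribution with mean $1/r$), plug in the closed form $\frac{r(1-1/|\Theta|)}{1-(1-r)(1-1/|\Theta|)}$, and rearrange the constraint $\leq\beta$ into a linear condition on $r$. One small caveat: the algebra gives $r\leq\frac{\beta}{(1-\beta)(|\Theta|-1)}$, i.e.\ $1/r\geq\frac{(1-\beta)(|\Theta|-1)}{\beta}$, which matches the paper's own derivation and subsequent discussion, so the fraction appearing in the lemma statement is a typo and your claim that the calculation ``recovers the threshold in the statement'' should be read with that correction.
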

We defer the proof to Appendix~\ref{apx: pf_tnb_1}. To achieve polynomial decay in the failure probability, i.e., \(\beta = \cO(n^{-\alpha})\), the expected number of repetitions \(\mathbb{E}[Q]\) should be larger than \((|\Theta| - 1)(n^{\alpha} - 1)\), which grows polynomially with \(n\). One might wonder whether truncated negative binomial distributions with different parameters could achieve \( \cO(\log(n)) \) repetitions while preserving the same utility guarantee of Theorem~\ref{thm: adv_hyper_tuning}. In Appendix~\ref{apx: dist_discussion}, we further explore this topic and show that using truncated negative binomial distributions may not be feasible.

\section{Discussion}
\subsection{More General Case}\label{sssec: not_unit_norm}
We extend the risk guarantees to any bounded data space, i.e. \(\|\*x\| \leq b\), showing that this generalization introduces an additional dependency on \(b\), scaling proportionally to \(1/\gamma\). A comparison of population risk results is presented in Table~\ref{tab: general_norm_res}. We defer proofs to Appendix~\ref{apx: beyond_unit}. Compared with the results in \citep{bms22}, our bound applies even when the parameter space is unbounded. This is because of the reference point in our inlier-outlier convergence analysis of Algo.~\ref{alg: jl_dp_gd} is the normalized max-margin separator rather than the empirical risk minimizer. (section~\ref{sssec: in-out})

\subsection{Further Improvements and Future Directions}\label{ssec: open}
An open question is whether the dependence on \(\gamma\) in \(\frac{|S_{\mathrm{out}}|}{\gamma n}\) can be eliminated for privately proper learning of halfspaces:
\begin{question}
    Given an unknown distribution $D$ and $n$ i.i.d. samples drawn from it, does there exist an efficient DP algorithm $\cA$ such that with high probability, the following holds:
    \begin{equation*}
    \small
    \begin{aligned}
        \Tilde{R}_D (\cA (S)) \leq \min\limits_{\substack{S_{\mathrm{out}} \subset S \\ \gamma:=\gamma(S \setminus S_{\mathrm{out}})>0}} \bigOt \left( \frac{1}{\mathrm{poly}(n,\varepsilon,\gamma)} + \frac{|S_{\mathrm{out}}|}{n}\right)
    \end{aligned}
    \end{equation*}
\end{question}
\vspace{-0.3cm}
The proper agnostic learning of halfspaces in non-private settings is NP-hard \citep{vp09, vit06, da16}. Therefore, achieving this improvement is unlikely without making certain assumptions about the noise model. 

One might consider the Massart halfspace model \citep{dgt19, NEURIPS20205f8b73c0, chandrasekaran2024learning, diakonikolas2024a}. However, since our algorithms rely on convex ERM, directly applying our approach does not achieve a bound of \(\frac{|S_{\mathrm{out}}|}{n}\), as stated by the lower bound in \citet[Theorem~3.1]{dgt19}. More recently, \citep{chandrasekaran2024learning, diakonikolas2024a} introduce efficient algorithms that attain an excess population risk of \(\bigOt \left(n^{-1/2}\gamma^{-2}\right)\). Exploring whether similar guarantees can be extended to the differentially private setting remains an interesting direction for future research.

\section*{Acknowledgments}
The work is partially supported by NSF Award \#2048091.
The authors would like to thank Vasilis Kontonis, Lydia Zakynthinou, and Shiwei Zeng for helpful discussions and for bringing related work to their attention. E.W. would also like to thank Yingyu Lin for helpful suggestions on writing.

\bibliography{main}

\begin{thebibliography}{45}
\providecommand{\natexlab}[1]{#1}
\providecommand{\url}[1]{\texttt{#1}}
\expandafter\ifx\csname urlstyle\endcsname\relax
  \providecommand{\doi}[1]{doi: #1}\else
  \providecommand{\doi}{doi: \begingroup \urlstyle{rm}\Url}\fi

\bibitem[Arora et~al.(2022)Arora, Bassily, Guzm{\'a}n, Menart, and Ullah]{abgmu22}
Arora, R., Bassily, R., Guzm{\'a}n, C., Menart, M., and Ullah, E.
\newblock Differentially private generalized linear models revisited.
\newblock \emph{Advances in Neural Information Processing Systems}, 35:\penalty0 22505--22517, 2022.

\bibitem[Bassily et~al.(2014)Bassily, Smith, and Thakurta]{bst14}
Bassily, R., Smith, A., and Thakurta, A.
\newblock Private empirical risk minimization: Efficient algorithms and tight error bounds.
\newblock In \emph{2014 IEEE 55th annual symposium on foundations of computer science}, pp.\  464--473. IEEE, 2014.

\bibitem[Bassily et~al.(2022)Bassily, Mohri, and Suresh]{bms22}
Bassily, R., Mohri, M., and Suresh, A.~T.
\newblock Differentially private learning with margin guarantees.
\newblock \emph{Advances in Neural Information Processing Systems}, 35:\penalty0 32127--32141, 2022.

\bibitem[Bun et~al.(2020)Bun, Carmosino, and Sorrell]{bcs20}
Bun, M., Carmosino, M.~L., and Sorrell, J.
\newblock Efficient, noise-tolerant, and private learning via boosting.
\newblock In \emph{Proceedings of Thirty Third Conference on Learning Theory}, volume 125 of \emph{Proceedings of Machine Learning Research}, pp.\  1031--1077. PMLR, 09--12 Jul 2020.

\bibitem[Chandrasekaran et~al.(2024)Chandrasekaran, Kontonis, Stavropoulos, and Tian]{chandrasekaran2024learning}
Chandrasekaran, G., Kontonis, V., Stavropoulos, K., and Tian, K.
\newblock Learning noisy halfspaces with a margin: Massart is no harder than random.
\newblock In \emph{The Thirty-eighth Annual Conference on Neural Information Processing Systems}, 2024.

\bibitem[Chaudhuri et~al.(2011)Chaudhuri, Monteleoni, and Sarwate]{cms11}
Chaudhuri, K., Monteleoni, C., and Sarwate, A.~D.
\newblock Differentially private empirical risk minimization.
\newblock \emph{Journal of Machine Learning Research}, 12\penalty0 (3), 2011.

\bibitem[Chen et~al.(2020)Chen, Koehler, Moitra, and Yau]{NEURIPS20205f8b73c0}
Chen, S., Koehler, F., Moitra, A., and Yau, M.
\newblock Classification under misspecification: Halfspaces, generalized linear models, and evolvability.
\newblock In Larochelle, H., Ranzato, M., Hadsell, R., Balcan, M., and Lin, H. (eds.), \emph{Advances in Neural Information Processing Systems}, volume~33, pp.\  8391--8403. Curran Associates, Inc., 2020.

\bibitem[Cortes \& Vapnik(1995)Cortes and Vapnik]{cv95}
Cortes, C. and Vapnik, V.
\newblock Support-vector networks.
\newblock \emph{Machine Learning}, 20:\penalty0 273--297, 1995.

\bibitem[Daniely(2016)]{da16}
Daniely, A.
\newblock Complexity theoretic limitations on learning halfspaces.
\newblock In \emph{Proceedings of the Forty-Eighth Annual ACM Symposium on Theory of Computing}, STOC '16, pp.\  105–117, 2016.

\bibitem[De et~al.(2022)De, Berrada, Hayes, Smith, and Balle]{de22}
De, S., Berrada, L., Hayes, J., Smith, S.~L., and Balle, B.
\newblock Unlocking high-accuracy differentially private image classification through scale.
\newblock \emph{arXiv preprint arXiv:2204.13650}, 2022.

\bibitem[Diakonikolas \& Zarifis(2024)Diakonikolas and Zarifis]{diakonikolas2024a}
Diakonikolas, I. and Zarifis, N.
\newblock A near-optimal algorithm for learning margin halfspaces with massart noise.
\newblock In \emph{The Thirty-eighth Annual Conference on Neural Information Processing Systems}, 2024.

\bibitem[Diakonikolas et~al.(2019)Diakonikolas, Gouleakis, and Tzamos]{dgt19}
Diakonikolas, I., Gouleakis, T., and Tzamos, C.
\newblock Distribution-independent pac learning of halfspaces with massart noise.
\newblock \emph{Advances in Neural Information Processing Systems}, 32, 2019.

\bibitem[Dong et~al.(2022)Dong, Roth, and Su]{drs19}
Dong, J., Roth, A., and Su, W.~J.
\newblock {Gaussian Differential Privacy}.
\newblock \emph{Journal of the Royal Statistical Society Series B: Statistical Methodology}, 84\penalty0 (1):\penalty0 3--37, 02 2022.

\bibitem[Dosovitskiy et~al.(2021)Dosovitskiy, Beyer, Kolesnikov, Weissenborn, Zhai, Unterthiner, Dehghani, Minderer, Heigold, Gelly, Uszkoreit, and Houlsby]{dosovitskiy2020image}
Dosovitskiy, A., Beyer, L., Kolesnikov, A., Weissenborn, D., Zhai, X., Unterthiner, T., Dehghani, M., Minderer, M., Heigold, G., Gelly, S., Uszkoreit, J., and Houlsby, N.
\newblock An image is worth 16x16 words: Transformers for image recognition at scale.
\newblock In \emph{International Conference on Learning Representations}, 2021.

\bibitem[Dwork \& Lei(2009)Dwork and Lei]{cj09}
Dwork, C. and Lei, J.
\newblock Differential privacy and robust statistics.
\newblock In \emph{Proceedings of the forty-first annual ACM symposium on Theory of computing}, pp.\  371--380, 2009.

\bibitem[Dwork et~al.(2006)Dwork, McSherry, Nissim, and Smith]{dmns06}
Dwork, C., McSherry, F., Nissim, K., and Smith, A.
\newblock Calibrating noise to sensitivity in private data analysis.
\newblock In \emph{Proceedings of the Third Conference on Theory of Cryptography}, TCC'06, pp.\  265–284, 2006.

\bibitem[Dwork et~al.(2009)Dwork, Naor, Reingold, Rothblum, and Vadhan]{dnrrgv09}
Dwork, C., Naor, M., Reingold, O., Rothblum, G.~N., and Vadhan, S.
\newblock On the complexity of differentially private data release: efficient algorithms and hardness results.
\newblock In \emph{Proceedings of the Forty-First Annual ACM Symposium on Theory of Computing}, STOC '09, pp.\  381–390, 2009.

\bibitem[Dwork et~al.(2014)Dwork, Roth, et~al.]{da14}
Dwork, C., Roth, A., et~al.
\newblock The algorithmic foundations of differential privacy.
\newblock \emph{Foundations and Trends{\textregistered} in Theoretical Computer Science}, 9\penalty0 (3--4):\penalty0 211--407, 2014.

\bibitem[Feldman et~al.(2006)Feldman, Gopalan, Khot, and Ponnuswami]{vit06}
Feldman, V., Gopalan, P., Khot, S., and Ponnuswami, A.~K.
\newblock New results for learning noisy parities and halfspaces.
\newblock In \emph{2006 47th Annual IEEE Symposium on Foundations of Computer Science (FOCS'06)}, 2006.

\bibitem[Freund \& Schapire(1997)Freund and Schapire]{fs97}
Freund, Y. and Schapire, R.~E.
\newblock A decision-theoretic generalization of on-line learning and an application to boosting.
\newblock \emph{Journal of Computer and System Sciences}, 55\penalty0 (1):\penalty0 119--139, 1997.

\bibitem[Ghazi et~al.(2021)Ghazi, Kumar, Manurangsi, and Nguyen]{gkmn21}
Ghazi, B., Kumar, R., Manurangsi, P., and Nguyen, T.
\newblock Robust and private learning of halfspaces.
\newblock In \emph{International Conference on Artificial Intelligence and Statistics}, pp.\  1603--1611. PMLR, 2021.

\bibitem[Guruswami \& Raghavendra(2009)Guruswami and Raghavendra]{vp09}
Guruswami, V. and Raghavendra, P.
\newblock Hardness of learning halfspaces with noise.
\newblock \emph{SIAM Journal on Computing}, 39\penalty0 (2):\penalty0 742--765, 2009.

\bibitem[He et~al.(2016)He, Zhang, Ren, and Sun]{he2016deep}
He, K., Zhang, X., Ren, S., and Sun, J.
\newblock Deep residual learning for image recognition.
\newblock In \emph{Proceedings of the IEEE conference on computer vision and pattern recognition}, pp.\  770--778, 2016.

\bibitem[Johnson et~al.(1986)Johnson, Lindenstrauss, and Schechtman]{johnson1986extensions}
Johnson, W.~B., Lindenstrauss, J., and Schechtman, G.
\newblock Extensions of lipschitz maps into banach spaces.
\newblock \emph{Israel Journal of Mathematics}, 54\penalty0 (2):\penalty0 129--138, 1986.

\bibitem[Koskela \& Kulkarni(2024)Koskela and Kulkarni]{kk24}
Koskela, A. and Kulkarni, T.~D.
\newblock Practical differentially private hyperparameter tuning with subsampling.
\newblock \emph{Advances in Neural Information Processing Systems}, 36, 2024.

\bibitem[Koskela et~al.(2024)Koskela, Redberg, and Wang]{krw24}
Koskela, A., Redberg, R.~E., and Wang, Y.-X.
\newblock Privacy profiles for private selection.
\newblock In \emph{Forty-first International Conference on Machine Learning}, 2024.

\bibitem[Larsen \& Nelson(2017)Larsen and Nelson]{larsen2017optimality}
Larsen, K.~G. and Nelson, J.
\newblock Optimality of the johnson-lindenstrauss lemma.
\newblock In \emph{2017 IEEE 58th Annual Symposium on Foundations of Computer Science (FOCS)}, pp.\  633--638. IEEE, 2017.

\bibitem[Liu \& Talwar(2019)Liu and Talwar]{lt18}
Liu, J. and Talwar, K.
\newblock Private selection from private candidates.
\newblock In \emph{Proceedings of the 51st Annual ACM SIGACT Symposium on Theory of Computing}, pp.\  298--309, 2019.

\bibitem[Liu \& Zhou(2024)Liu and Zhou]{lz24}
Liu, Z. and Zhou, Z.
\newblock Revisiting the last-iterate convergence of stochastic gradient methods.
\newblock In \emph{The Twelfth International Conference on Learning Representations}, 2024.

\bibitem[McSherry \& Talwar(2007{\natexlab{a}})McSherry and Talwar]{FK07}
McSherry, F. and Talwar, K.
\newblock Mechanism design via differential privacy.
\newblock In \emph{48th Annual IEEE Symposium on Foundations of Computer Science (FOCS'07)}, pp.\  94--103, 2007{\natexlab{a}}.

\bibitem[McSherry \& Talwar(2007{\natexlab{b}})McSherry and Talwar]{mt07}
McSherry, F. and Talwar, K.
\newblock Mechanism design via differential privacy.
\newblock In \emph{48th Annual IEEE Symposium on Foundations of Computer Science (FOCS'07)}, pp.\  94--103, 2007{\natexlab{b}}.

\bibitem[Mohri et~al.(2018)Mohri, Rostamizadeh, and Talwalkar]{mrt18}
Mohri, M., Rostamizadeh, A., and Talwalkar, A.
\newblock \emph{Foundations of machine learning}.
\newblock MIT press, 2018.

\bibitem[Near \& Abuah(2021)Near and Abuah]{na21}
Near, J.~P. and Abuah, C.
\newblock \emph{Programming Differential Privacy}, volume~1.
\newblock 2021.

\bibitem[Nguy\~{\^{e}}n et~al.(2020)Nguy\~{\^{e}}n, Ullman, and Zakynthinou]{nuz20}
Nguy\~{\^{e}}n, H.~L., Ullman, J., and Zakynthinou, L.
\newblock Efficient {P}rivate {A}lgorithms for {L}earning {L}arge-{M}argin {H}alfspaces.
\newblock In Kontorovich, A. and Neu, G. (eds.), \emph{In the 31st International Conference on Algorithmic Learning Theory}, volume 117 of \emph{Proceedings of Machine Learning Research}, pp.\  704--724. PMLR, 08 Feb--11 Feb 2020.

\bibitem[Nissim et~al.(2007)Nissim, Raskhodnikova, and Smith]{nissim2007smooth}
Nissim, K., Raskhodnikova, S., and Smith, A.
\newblock Smooth sensitivity and sampling in private data analysis.
\newblock In \emph{Proceedings of the thirty-ninth annual ACM symposium on Theory of computing}, pp.\  75--84, 2007.

\bibitem[Novikoff(1962)]{nov62}
Novikoff, A.
\newblock Proceedings of the symposium on the mathematical theory of automata.
\newblock pp.\  615--622, 1962.

\bibitem[Panagiotakopoulos \& Tsampouka(2011)Panagiotakopoulos and Tsampouka]{cp11}
Panagiotakopoulos, C. and Tsampouka, P.
\newblock The perceptron with dynamic margin.
\newblock In \emph{Algorithmic Learning Theory: 22nd International Conference}, pp.\  204--218, 2011.

\bibitem[Papernot \& Steinke(2022)Papernot and Steinke]{ps22}
Papernot, N. and Steinke, T.
\newblock Hyperparameter tuning with renyi differential privacy.
\newblock In \emph{International Conference on Learning Representations}, 2022.

\bibitem[Papyan et~al.(2020)Papyan, Han, and Donoho]{papyan2020prevalence}
Papyan, V., Han, X., and Donoho, D.~L.
\newblock Prevalence of neural collapse during the terminal phase of deep learning training.
\newblock \emph{Proceedings of the National Academy of Sciences}, 117\penalty0 (40):\penalty0 24652--24663, 2020.

\bibitem[Raskhodnikova \& Smith(2016)Raskhodnikova and Smith]{SoyaAdam2016Lipschitz}
Raskhodnikova, S. and Smith, A.
\newblock Lipschitz extensions for node-private graph statistics and the generalized exponential mechanism.
\newblock In \emph{2016 IEEE 57th Annual Symposium on Foundations of Computer Science (FOCS)}, pp.\  495--504, 2016.

\bibitem[Rosenblatt(1958)]{ros58}
Rosenblatt, F.
\newblock The perceptron: A probabilistic model for information storage and organization in the brain.
\newblock \emph{Psychological Review}, 65:\penalty0 386--408, 1958.

\bibitem[Sason \& Verdu(2016)Sason and Verdu]{Sason2016}
Sason, I. and Verdu, S.
\newblock f-divergence inequalities.
\newblock \emph{IEEE Transactions on Information Theory}, 62\penalty0 (11):\penalty0 5973–6006, November 2016.
\newblock ISSN 1557-9654.

\bibitem[Vapnik(1998)]{va98}
Vapnik, V.
\newblock Statistical learning theory.
\newblock 1998.

\bibitem[Wang et~al.(2024)Wang, Zhu, Su, and Wang]{wzsw24}
Wang, C., Zhu, Y., Su, W.~J., and Wang, Y.-X.
\newblock Neural collapse meets differential privacy: Curious behaviors of {N}oisy{GD} with near-perfect representation learning.
\newblock In \emph{Proceedings of the 41st International Conference on Machine Learning}, volume 235 of \emph{Proceedings of Machine Learning Research}, pp.\  52334--52360. PMLR, 21--27 Jul 2024.

\bibitem[Zhu \& Wang(2020)Zhu and Wang]{zw20}
Zhu, Y. and Wang, Y.-X.
\newblock Improving sparse vector technique with renyi differential privacy.
\newblock In Larochelle, H., Ranzato, M., Hadsell, R., Balcan, M., and Lin, H. (eds.), \emph{Advances in Neural Information Processing Systems}, volume~33, pp.\  20249--20258, 2020.

\end{thebibliography}
\bibliographystyle{icml2025}

\newpage
\appendix
\onecolumn

\section{Extended preliminary and auxiliary lemmas}\label{sec: dsd}

\subsection{Truncated Negative Binomial Distribution}\label{apx: tnb}
\begin{definition}[Truncated Negative Binomial Distribution (\citep{ps22})]
Let $r\in(0,1)$, $\eta \in (-1, \infty)$, and $K \sim \mathrm{TNB}_{\eta, r}$ with support on $ \in \{1,2,3,...\}$
\begin{itemize}
    \item If $\eta \neq 0$:
    \begin{equation*}
        \mathbb{P} (K = k) = \frac{(1-r)^{k}}{r^{-\eta}-1} \cdot \prod_{l=0}^{k-1} \left( \frac{l + \eta}{l+1} \right),\qquad \mathbb{E}[K] = \frac{\eta(1-r)}{r(1-r^{\eta})}, \qquad \mathrm{PGF}(x) = \frac{(1 - (1-r)x)^{-\eta}-1}{r^{-\eta} - 1}
    \end{equation*}
    
    \item If $\eta=0$:
    \begin{equation*}
        \mathbb{P} (K = k) = \frac{(1-r)^{k}}{k \cdot \log(1/r)},
        \qquad\mathbb{E}[K] = \frac{1/r-1}{\log(1/r)},
        \qquad \mathrm{PGF}(x) = \frac{\log(1 - (1-r)x)}{\log(r)}
    \end{equation*}
\end{itemize}
\end{definition}

\begin{remark}
    When $\eta=1$, $\mathrm{TNB}_{1,r}$ corresponds to geometric distribution. When $\eta=0$, $\mathrm{TNB}_{0,r}$ corresponds to logarithmic distribution.
\end{remark}

\begin{fact}[Cumulative distribution function for $\mathrm{TNB}_{1,r}$]\label{fac: cdf_tnb}
For $t \geq 1$, $\mathbb{P}(\mathrm{TNB}_{1,r} \leq t) = 1- (1-r)^t$
\end{fact}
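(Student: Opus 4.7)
The plan is to verify the identity by reducing $\mathrm{TNB}_{1,r}$ to an ordinary geometric distribution on $\{1,2,3,\ldots\}$ and then summing a finite geometric series.

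First, I would substitute $\eta = 1$ into the PMF formula given in the definition. The product $\prod_{l=0}^{k-1}\frac{l+\eta}{l+1}$ becomes $\prod_{l=0}^{k-1}\frac{l+1}{l+1} = 1$, and the normalizing constant simplifies via $\frac{1}{r^{-1}-1} = \frac{r}{1-r}$. Thus for every integer $k \geq 1$,
\begin{equation*}
\mathbb{P}(K = k) \;=\; \frac{r}{1-r}\cdot (1-r)^{k} \;=\; r(1-r)^{k-1},
\end{equation*}
which is precisely the geometric distribution with success probability $r$ supported on $\{1,2,3,\ldots\}$, as the remark immediately following the definition already notes.

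Next, for any integer $t \geq 1$, I would compute the CDF directly by summing the geometric series:
\begin{equation*}
\mathbb{P}(K \leq t) \;=\; \sum_{k=1}^{t} r(1-r)^{k-1} \;=\; r\cdot \frac{1-(1-r)^{t}}{1-(1-r)} \;=\; 1-(1-r)^{t}.
\end{equation*}
For non-integer $t \geq 1$, the claim follows because $K$ takes only integer values, so $\mathbb{P}(K \leq t) = \mathbb{P}(K \leq \lfloor t\rfloor)$; if the statement is intended for integer $t$ only, the computation above already suffices.

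There is no real obstacle here: the only thing to be careful about is unwinding the telescoping product in the PMF and checking that the normalizing factor $1/(r^{-1}-1)$ matches the geometric-series denominator $1/(1-(1-r))$, which it does. As a sanity check, taking $t \to \infty$ recovers $\mathbb{P}(K < \infty) = 1$, confirming that the PMF is indeed a probability distribution, and the formula is consistent with the expression $\mathbb{E}[K] = 1/r$ implied by the $\eta = 1$ case of the expectation in the definition.
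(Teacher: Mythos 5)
Your proof is correct and is exactly the computation the paper leaves implicit: with $\eta=1$ the PMF collapses to the geometric distribution $r(1-r)^{k-1}$ (as the paper's own remark notes), and summing the finite geometric series yields the stated CDF. The only caveat, which you correctly flag, is that the identity is an exact equality only for integer $t$ (for non-integer $t$ one gets $1-(1-r)^{\lfloor t\rfloor}$), and the integer case is all that is needed downstream in Lemma~\ref{lem: tail_tnb}.
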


\begin{lemma}[Tail bound for $\mathrm{TNB}_{1,r}$]\label{lem: tail_tnb}
    For $\beta \in (0,1)$, $\mathbb{P}\left(\mathrm{TNB}_{1,r} > \ceil{\frac{\log(\beta)}{\log(1-r)}}\right) \leq \beta$
\end{lemma}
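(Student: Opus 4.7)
The plan is to reduce this immediately to the cumulative distribution function for $\mathrm{TNB}_{1,r}$ stated in Fact~\ref{fac: cdf_tnb}, and then just check the ceiling-vs-real-number inequality carefully, paying attention to the sign of $\log(1-r)$.

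First I would observe that since $r \in (0,1)$, Fact~\ref{fac: cdf_tnb} gives $\mathbb{P}(\mathrm{TNB}_{1,r} > t) = 1 - \mathbb{P}(\mathrm{TNB}_{1,r} \le t) = (1-r)^t$ for every integer $t \ge 1$. Setting $t^\star := \lceil \log(\beta)/\log(1-r) \rceil$, it therefore suffices to show $(1-r)^{t^\star} \le \beta$. Note $t^\star \ge 1$ because $\log(\beta)/\log(1-r) > 0$ (both $\log \beta < 0$ and $\log(1-r) < 0$), so the formula from Fact~\ref{fac: cdf_tnb} does apply.

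Next I would chase through the inequality. By definition of the ceiling, $t^\star \ge \log(\beta)/\log(1-r)$. Multiplying through by $\log(1-r)$ flips the inequality because $\log(1-r) < 0$, giving $t^\star \log(1-r) \le \log(\beta)$. Exponentiating (monotonicity of $\exp$) yields $(1-r)^{t^\star} \le \beta$, which completes the argument.

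The main ``obstacle'' is really just a bookkeeping subtlety — one must remember that multiplying both sides by the negative quantity $\log(1-r)$ reverses the direction of the inequality, and that the ceiling rounds up (not down), which is exactly what makes $t^\star \log(1-r) \le \log \beta$ (rather than $\ge$). Once those signs are tracked correctly, the statement follows in three lines, and no tools beyond Fact~\ref{fac: cdf_tnb} and monotonicity of $\log$ are needed.
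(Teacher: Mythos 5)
Your proposal is correct and follows essentially the same route as the paper: both reduce the claim to Fact~\ref{fac: cdf_tnb}, which gives $\mathbb{P}(\mathrm{TNB}_{1,r}>t)=(1-r)^t$, and then solve $(1-r)^t\le\beta$ for $t$. You simply carry out the sign-flip from dividing/multiplying by $\log(1-r)<0$ more explicitly than the paper does, which is a fine (and slightly more careful) write-up of the same argument.
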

\begin{proof}
    By Fact~\ref{fac: cdf_tnb}, let $(1-r)^t \leq \beta$, we have $t \geq \ceil{\frac{\log(\beta)}{\log(1-r)}}$. 
\end{proof}

\subsection{Concentration Ineqialities}
\begin{lemma}[Concentration of maximum of Gaussians]\label{lem: concen_max_gau}
    Let $\xi_1,...,\xi_n \overset{\mathrm{iid}}{\sim} \mathcal{N}(0, \sigma^2)$, then:
    \begin{itemize}
        \item (in expectation):
        \begin{equation*}
            \mathbb{E}\left[\max\limits_{i \in [n]} \xi_i \right] \leq \sigma \sqrt{2 \log n}    
        \end{equation*}
        \item (with high probability): For any $\beta \in (0,1)$, we have:
        \begin{equation*}
            \mathbb{P}\left(\max_{t \in [n]} \xi_t \geq \sigma\sqrt{2\log(2n)} + \sqrt{2\log(1/\beta)} \right) \leq \beta    
        \end{equation*}
    \end{itemize}    
\end{lemma}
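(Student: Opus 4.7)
I would run the standard Chernoff/Jensen argument. For any $t>0$, Jensen's inequality applied to the convex map $x\mapsto\exp(tx)$ gives $\exp(t\,\mathbb{E}[\max_i\xi_i])\le\mathbb{E}\exp(t\max_i\xi_i)$. Since $\exp(t\max_i\xi_i)=\max_i\exp(t\xi_i)\le\sum_i\exp(t\xi_i)$, and the Gaussian MGF is $\mathbb{E}\exp(t\xi_i)=\exp(\sigma^2t^2/2)$, this yields $t\,\mathbb{E}[\max_i\xi_i]\le \log n+\sigma^2t^2/2$, i.e., $\mathbb{E}[\max_i\xi_i]\le \log(n)/t+\sigma^2 t/2$. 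Optimizing in $t$ at $t=\sqrt{2\log n}/\sigma$ produces the claimed $\sigma\sqrt{2\log n}$.

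\textbf{Plan for the high-probability bound.} I would combine the expectation bound with Gaussian Lipschitz concentration (Borell--TIS). The map $f:(\xi_1,\dots,\xi_n)\mapsto\max_i\xi_i$ is $1$-Lipschitz in the Euclidean norm, so for $\xi\sim\mathcal{N}(0,\sigma^2 I_n)$ one has $\mathbb{P}(f-\mathbb{E}f\ge s)\le\exp(-s^2/(2\sigma^2))$. Setting the right-hand side equal to $\beta$ gives deviation $s=\sigma\sqrt{2\log(1/\beta)}$, and combining with the in-expectation bound $\mathbb{E}f\le\sigma\sqrt{2\log n}\le\sigma\sqrt{2\log(2n)}$ yields the stated tail inequality (with the deviation term carrying a $\sigma$ in the standard accounting). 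An equivalent and more elementary route is a direct union bound: $\mathbb{P}(\max_i\xi_i\ge a)\le n\exp(-a^2/(2\sigma^2))$, which on setting the right-hand side to $\beta$ gives $a=\sigma\sqrt{2\log(n/\beta)}\le\sigma\sqrt{2\log n}+\sigma\sqrt{2\log(1/\beta)}$ by subadditivity of $\sqrt{\cdot}$.

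\textbf{Expected main obstacle.} Both parts are textbook, so the only nontrivial step is bookkeeping rather than ideas. The slight slack between $\sigma\sqrt{2\log n}$ and the stated $\sigma\sqrt{2\log(2n)}$ is free by monotonicity of $\log$, and the degenerate case $\sigma=0$ is immediate since the maximum is then deterministically $0$. I do not anticipate any genuinely hard step; the cleanest presentation would likely be the MGF argument for the expectation followed by Borell--TIS for the tail, since that route is exactly tight for both pieces.
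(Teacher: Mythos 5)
Your proposal is correct and is exactly the standard argument (Jensen/MGF for the expectation, then either Borell--TIS or a union bound for the tail); the paper states this lemma as a known auxiliary fact without proof, so there is no alternative route to compare against. One point worth flagging explicitly: what your argument actually delivers is $\mathbb{P}\bigl(\max_{i\in[n]}\xi_i \ge \sigma\sqrt{2\log(2n)} + \sigma\sqrt{2\log(1/\beta)}\bigr)\le\beta$, with a $\sigma$ on the deviation term, whereas the statement as printed omits that $\sigma$; as literally written the printed bound fails for $\sigma>1$ (e.g.\ $n=1$ and $\sigma$ large), so the omission is a typo in the statement rather than a gap in your proof, and your version (or the assumption $\sigma\le 1$) is the one that should be carried into the places where the lemma is invoked.
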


\begin{corollary}[Concentration for range of Gaussians]\label{lem: concen_range_gau}
    Let $\xi_1,...,\xi_n \overset{\mathrm{iid}}{\sim} \mathcal{N}(0, \sigma^2)$, let $M:=\max\limits_{i\in[n]} \xi_i$ and $m:=\min\limits_{i\in[n]} \xi_i$ then:
    \begin{equation*}
        \mathbb{P}\left(M-m \geq 2\sigma\sqrt{2\log(2n)} + 2\sqrt{2\log(2/\beta)}\right) \leq \beta
    \end{equation*}
\end{corollary}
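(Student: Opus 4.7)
The plan is to derive the bound from the high-probability statement in Lemma~\ref{lem: concen_max_gau} applied separately to $M$ and $-m$, then union bound. The key observation is a symmetry one: since $\xi_i \sim \mathcal{N}(0, \sigma^2)$ is symmetric about $0$, the random variables $-\xi_1, \dots, -\xi_n$ are also i.i.d. $\mathcal{N}(0, \sigma^2)$, and
\begin{equation*}
-m \;=\; -\min_{i \in [n]} \xi_i \;=\; \max_{i \in [n]} (-\xi_i),
\end{equation*}
so $-m$ has exactly the same distribution as $M$.

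Next, I would invoke the high-probability part of Lemma~\ref{lem: concen_max_gau} twice, each time with failure parameter $\beta/2$ in place of $\beta$. This gives
\begin{equation*}
\mathbb{P}\!\left(M \geq \sigma\sqrt{2\log(2n)} + \sqrt{2\log(2/\beta)}\right) \leq \beta/2,
\end{equation*}
and, by the symmetry observation, the same tail bound for $-m$ in place of $M$.

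Finally, I would take a union bound over these two events. On the complement, both $M$ and $-m$ are simultaneously at most $\sigma\sqrt{2\log(2n)} + \sqrt{2\log(2/\beta)}$, so
\begin{equation*}
M - m \;=\; M + (-m) \;\leq\; 2\sigma\sqrt{2\log(2n)} + 2\sqrt{2\log(2/\beta)},
\end{equation*}
which is exactly the claimed bound, and the failure probability is at most $\beta/2 + \beta/2 = \beta$. There is no real obstacle here since this is a direct two-line consequence of Lemma~\ref{lem: concen_max_gau}; the only mild subtlety is remembering to split $\beta$ across the two tail events and to use the symmetry of $\mathcal{N}(0,\sigma^2)$ to reduce the lower tail of the minimum to an upper tail of a maximum.
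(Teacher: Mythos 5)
Your argument is correct and is essentially the same as the paper's: both use the symmetry $-m \overset{d}{=} M$, invoke the high-probability part of Lemma~\ref{lem: concen_max_gau} with $\beta/2$, and union bound. The paper phrases it as bounding $\mathbb{P}(M-m\geq t)$ by the complement of $\{-t/2 < m < M < t/2\}$ and then substituting $t$, but the two derivations are the same decomposition in slightly different clothing.
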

\begin{proof}
For any $t\geq0$:
    \begin{equation}
        \begin{aligned}
            \mathbb{P}(M-m \geq t) &\leq \mathbb{P}(\{-\frac{t}{2}<m<M<\frac{t}{2}\}^c) \\
            &\leq \mathbb{P}(M \geq \frac{t}{2}) + \mathbb{P}(m \geq -\frac{t}{2}) \\
            &= 2\mathbb{P}(M \geq \frac{t}{2})
        \end{aligned}
    \end{equation}
    Set $t= 2\sigma\sqrt{2\log(2n)} + 2\sqrt{2\log(2/\beta)}$, we have:
    \begin{equation*}
        2\mathbb{P}(M \geq \frac{t}{2}) = 2\mathbb{P}(M \geq \sigma\sqrt{2\log(2n)} + \sqrt{2\log(2/\beta)}) \leq \beta 
    \end{equation*}
    
\end{proof}

\begin{lemma}[Upper bound for maximal of random number of Gaussians]\label{lem: max_gau_tnc}
    Let $K \sim \mathrm{TNB}_{\eta,r}$ and $\xi_1,...,\xi_K \overset{\mathrm{iid}}{\sim} \mathcal{N}(0, \sigma^2)$, we have 
    \begin{equation*}
        \mathbb{E} \left[\min_{t \in [K]} \xi_t \right] \leq \sigma \sqrt{2\log\left(\frac{\eta(1-r)}{r(1-r^{\eta})}\right)}
    \end{equation*}
\end{lemma}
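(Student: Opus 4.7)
The plan is to prove this via a standard Chernoff/MGF argument on the Gaussian extremum, adapted to accommodate a random stopping index using Wald's identity. Since the Gaussian law is symmetric and $K$ is independent of $\{\xi_t\}$, bounding $\mathbb{E}[\max_{t\in[K]}\xi_t]$ is equivalent (up to sign) to bounding $-\mathbb{E}[\min_{t\in[K]}\xi_t]$, so I will work with the maximum, which is what the title of the lemma announces and what makes the RHS a nontrivial quantity to compare against. (The $\min$ version as literally written is zero-free on its own, since $\mathbb{E}[\min_t\xi_t]\leq 0$; the intended inequality is a bound on the magnitude.)

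First I would introduce an auxiliary parameter $\lambda>0$ and apply Jensen's inequality to the convex function $\exp(\lambda\cdot)$, then replace the maximum over a sum by the sum itself:
\begin{equation*}
\exp\bigl(\lambda\,\mathbb{E}[\max_{t\in[K]}\xi_t]\bigr)
\;\leq\;\mathbb{E}\bigl[\max_{t\in[K]}\exp(\lambda\xi_t)\bigr]
\;\leq\;\mathbb{E}\Bigl[\sum_{t=1}^{K}\exp(\lambda\xi_t)\Bigr].
\end{equation*}
Because $K$ is independent of the i.i.d. sequence $\{\xi_t\}$, Wald's identity (equivalently, condition on $K$ and take expectation) together with the Gaussian MGF $\mathbb{E}[\exp(\lambda\xi_1)]=\exp(\lambda^2\sigma^2/2)$ gives
\begin{equation*}
\mathbb{E}\Bigl[\sum_{t=1}^{K}\exp(\lambda\xi_t)\Bigr]
\;=\;\mathbb{E}[K]\cdot\exp(\lambda^2\sigma^2/2).
\end{equation*}

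Taking logarithms of both sides and dividing by $\lambda$ yields $\mathbb{E}[\max_{t\in[K]}\xi_t]\leq \log\mathbb{E}[K]/\lambda+\lambda\sigma^2/2$; optimizing at $\lambda=\sqrt{2\log\mathbb{E}[K]}/\sigma$ produces the bound $\sigma\sqrt{2\log\mathbb{E}[K]}$, and plugging in $\mathbb{E}[K]=\eta(1-r)/(r(1-r^{\eta}))$ from the definition of $\mathrm{TNB}_{\eta,r}$ given in Appendix~\ref{apx: tnb} recovers the stated expression. No real obstacle arises: the argument directly transplants the fixed-$n$ template of Lemma~\ref{lem: concen_max_gau}, with independence of $K$ from $\{\xi_t\}$ doing the work of turning a random-length sum into $\mathbb{E}[K]$ copies of a single Gaussian MGF. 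The only minor care needed is that $\mathbb{E}[K]\geq 1$ so the optimal $\lambda$ is real and positive; this holds automatically because $K$ is supported on $\{1,2,3,\dots\}$.
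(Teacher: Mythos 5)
Your proposal is correct, and it reaches the same final bound $\sigma\sqrt{2\log\mathbb{E}[K]}$ as the paper, but by a somewhat different route. The paper conditions on $K=j$, invokes its fixed-$n$ bound $\mathbb{E}[\max_{t\in[j]}\xi_t]\leq\sigma\sqrt{2\log j}$ (Lemma~\ref{lem: concen_max_gau}) as a black box, and then applies Jensen's inequality twice (to $\sqrt{\cdot}$ and to $\log$) to pass from $\mathbb{E}[\sqrt{\log K}]$ to $\sqrt{\log\mathbb{E}[K]}$, before plugging in $\mathbb{E}[K]=\eta(1-r)/(r(1-r^{\eta}))$. You instead run a single Chernoff/MGF argument with the randomness of $K$ folded in via Wald's identity (equivalently Tonelli plus independence of $K$ from the noise), so that $\mathbb{E}[\sum_{t=1}^{K}e^{\lambda\xi_t}]=\mathbb{E}[K]e^{\lambda^2\sigma^2/2}$, and optimize $\lambda$; this needs only one application of Jensen and handles the boundary case $\mathbb{E}[K]=1$ transparently. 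Your approach is arguably cleaner and self-contained, while the paper's keeps the random-$K$ lemma modular on top of its fixed-$n$ concentration lemma; both give identical constants. You are also right to flag that the statement as written (with $\min$) is only meaningful as a bound on $\mathbb{E}[\max_{t\in[K]}\xi_t]$ (or the magnitude), which is exactly how the paper's own proof and its later use in Lemma~\ref{lem: select_in_expectation} treat it, so your reinterpretation matches the intended content.
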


\begin{proof}
    \begin{equation}
        \begin{aligned}
            \mathbb{E}_{K, \mathcal{N}^{\otimes K}} \left[\min_{t \in [K]} \xi_t \right] &= \sum_{j=1}^\infty \mathbb{E}_{\mathcal{N}^{\otimes j}} \left[\min_{t \in [j]} \xi_t \right] \cdot \mathbb{P}(K=j) \\
            &\leq \sum_{j=1}^\infty \sigma \sqrt{2\log(j)} \cdot \mathbb{P}(K=j) && \note{\text{Lemma~\ref{lem: concen_max_gau}}}\\
            &= \sqrt{2} \sigma \mathbb{E}_{K \sim \mathrm{TNB}_{\eta,r}}[\sqrt{\log(K)}] \\
            &\leq \sqrt{2}\sigma\left(\sqrt{\log\mathbb{E}_{K \sim \mathrm{TNB}_{\eta,r}}[K]}\right)  && \note{\text{Apply Jensen's Inequality twice}}\\
            &=\sigma \sqrt{2 \log\left( \frac{\eta(1-r)}{r(1-r^{\eta})} \right)}
        \end{aligned}
    \end{equation}
\end{proof}

\begin{remark}
    If $K \sim \mathrm{TNB}_{1,r}$, then $\mathbb{E}_{K, \mathcal{N}^{\otimes K}} \left[\min\limits_{t \in [K]} \xi_t \right] \leq \sigma \sqrt{2\log(1/r)}$
\end{remark}

\subsection{Other lemmas}
\begin{lemma}[First order condition for convexity]\label{lem: 1st_cvx}
Let $f:\mathbb{R}^d \rightarrow \mathbb{R}$ be a differentiable convex function. The for any $x,y \in D$, we have:
\begin{equation*}
    f(\*y) - f(\*x) \geq \langle \*g, \*y - \*x \rangle
\end{equation*}
with $\*g$ being sub-gradient for $f$ at $\*x$
\end{lemma}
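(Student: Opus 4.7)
The plan is to establish the inequality via a limiting argument from the definition of convexity, and then identify any subgradient with the gradient under the differentiability assumption. First I would fix arbitrary $\*x, \*y \in \mathbb{R}^d$ and pick $\lambda \in (0,1)$. Applying convexity to the combination $(1-\lambda)\*x + \lambda \*y$ yields
\begin{equation*}
f(\*x + \lambda(\*y - \*x)) \;\leq\; (1-\lambda)\, f(\*x) + \lambda\, f(\*y),
\end{equation*}
and after subtracting $f(\*x)$ from both sides and dividing by $\lambda > 0$ I obtain
\begin{equation*}
\frac{f(\*x + \lambda(\*y - \*x)) - f(\*x)}{\lambda} \;\leq\; f(\*y) - f(\*x).
\end{equation*}

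Next I would take $\lambda \to 0^+$. By differentiability of $f$ at $\*x$, the left-hand side converges to the directional derivative $\langle \nabla f(\*x), \*y - \*x \rangle$, giving $f(\*y) - f(\*x) \geq \langle \nabla f(\*x), \*y - \*x \rangle$. This is the claim with $\*g = \nabla f(\*x)$.

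To pass from the gradient to any subgradient, I would invoke the standard convex-analysis fact that when $f$ is convex and differentiable at $\*x$, the subdifferential $\partial f(\*x)$ is the singleton $\{\nabla f(\*x)\}$; hence any subgradient $\*g$ at $\*x$ equals $\nabla f(\*x)$ and the inequality immediately transfers. The main obstacle is essentially nonexistent, as this is a textbook result; the only minor point worth stating is the identification of the subgradient with the gradient under differentiability, which follows from a short standard argument (e.g., comparing the subgradient inequality with the directional derivative in each coordinate direction).
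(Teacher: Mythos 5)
Your argument is correct: the convexity inequality, division by $\lambda$, and the limit $\lambda \to 0^+$ giving the directional derivative $\langle \nabla f(\*x), \*y - \*x\rangle$ is the standard textbook derivation, and the identification of the subgradient with $\nabla f(\*x)$ under differentiability is also standard. The paper states this lemma without proof (it is cited as a known fact), so there is no paper proof to compare against; your writeup fills that gap correctly and in the expected way.
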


\begin{lemma}[$L_2$ sensitivity for the gradient of $c$-hinge loss]\label{lem: sens_hinge}
    Let $x \in \mathcal{X}$, the $L_2$ sensitivity of $\nabla_\*w \hat{L}_c (\*w; S) = \sum\limits_{(\*x, y) \in S} \nabla_\*w \ell_c (y \langle \*w, \*x \rangle)$ is
    \begin{enumerate}[label=(\alph*)]
        \item (adding/removing): $\max\limits_{x \in \mathcal{X}} \frac{\|\*x\|}{c}$
        \item (replacement): $2\max\limits_{x \in \mathcal{X}} \frac{\|\*x\|}{c}$
    \end{enumerate}
\end{lemma}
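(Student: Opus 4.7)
The plan is to bound the per-sample contribution to the gradient, then combine these contributions via the triangle inequality for each of the two neighboring-dataset notions.

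First, I would analyze the gradient of a single $c$-hinge loss term. For a fixed $\*w$ and sample $(\*x, y)$, we have $\ell_c(\*w;(\*x,y)) = \max\{0, 1 - y\langle \*w, \*x\rangle /c\}$, which is convex but non-smooth at the kink $1 = y\langle\*w,\*x\rangle/c$. Its (sub)gradient with respect to $\*w$ is
\begin{equation*}
    \nabla_\*w \ell_c(\*w;(\*x,y)) =
    \begin{cases}
        -y\*x/c, & \text{if } y\langle \*w,\*x\rangle < c, \\
        \*0, & \text{if } y\langle\*w,\*x\rangle > c,
    \end{cases}
\end{equation*}
with any convex combination of these two values being a valid subgradient at the kink. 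In every case, since $|y|=1$, the norm is bounded by
\begin{equation*}
    \|\nabla_\*w \ell_c(\*w;(\*x,y))\| \le \frac{\|\*x\|}{c} \le \max_{\*x\in\cX} \frac{\|\*x\|}{c}.
\end{equation*}

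Next, for part (a), suppose $S$ and $S'$ are neighbors under the add/remove model, so $S' = S \cup \{(\*x_0, y_0)\}$ or $S' = S \setminus \{(\*x_0, y_0)\}$ for some sample $(\*x_0,y_0)\in\cX\times\{\pm 1\}$. Since $\hat L_c(\*w;\cdot)$ is defined as a sum over samples, the gradient is linear in the dataset, and
\begin{equation*}
    \|\nabla_\*w \hat L_c(\*w; S) - \nabla_\*w \hat L_c(\*w; S')\|
    = \|\nabla_\*w \ell_c(\*w;(\*x_0,y_0))\|
    \le \max_{\*x\in\cX}\frac{\|\*x\|}{c},
\end{equation*}
which gives claim (a). For part (b), under the replacement model $S$ and $S'$ agree except that one point $(\*x_1,y_1)\in S$ is replaced by $(\*x_2,y_2)\in S'$. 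Then by the triangle inequality,
\begin{equation*}
    \|\nabla_\*w \hat L_c(\*w; S) - \nabla_\*w \hat L_c(\*w; S')\|
    \le \|\nabla_\*w\ell_c(\*w;(\*x_1,y_1))\| + \|\nabla_\*w\ell_c(\*w;(\*x_2,y_2))\|
    \le 2\max_{\*x\in\cX}\frac{\|\*x\|}{c}.
\end{equation*}
There is no real obstacle here; the only subtlety is handling the non-differentiable kink via subgradients, but since $\ell_c$ is $\|\*x\|/c$-Lipschitz in $\*w$, every subgradient has norm at most $\|\*x\|/c$, and the bounds follow immediately.
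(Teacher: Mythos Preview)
Your proof is correct and follows essentially the same approach as the paper: the paper's one-line argument simply notes that the Lipschitz constant of $\ell_c(y\langle\cdot,\*x\rangle)$ is at most $\|\*x\|/c$, which is exactly the per-sample subgradient bound you compute explicitly, and the sensitivity claims then follow by the same triangle-inequality reasoning you give.
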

\begin{proof}[Proof of Lemma~\ref{lem: sens_hinge}]
    Lipschitz constant of $\ell_{c}(y\langle \cdot , \*x \rangle)$ is upper bounded by $\|\*x\|/c$.
\end{proof}

\subsection{Other definitions}
\begin{definition}[Excess Empirical Risk]\label{defn: excess_emp_risk}
Given dataset $S \in \cX^*$ and loss function $L:\cW \times \cX^* \rightarrow \mathbb{R}$, the excess empitical risk for parameter $w\in\cW$ is defined as follow: 
    \begin{equation}
        \text{EER}(w) = L(w;S) - \min_{w\in\cW}L(w;S)
    \end{equation}
\end{definition}

\section{Proofs of Margin preservation Lemma~\ref{lem: informal_margin_preserve} and Lemma~\ref{lem: margin_preserve}}\label{apx: pf_margin_preserv}

The distance and angle preservation lemma presented here is derived from the renowned Johnson-Lindenstrauss lemma \citep{johnson1986extensions}:
\begin{lemma}[Lemma A.3 in \citet{bms22}]\label{lem: bas_jl}
    Let $X := \{\*x_1,...,\*x_n\} \subset \mathbb{R}^d$, distortion rate $e \in (0,1]$, and $\Phi$ be $k \times d$ random matrix with $\Phi_{ij} \overset{\mathrm{iid}}{\sim} \frac{1}{\sqrt{k}}\mathrm{Rad}(\frac{1}{2})$, with $k= \bigO \left(\frac{\log(n(n+1)/\beta)}{e^2} \right)$. Then w.p. at least $1-\beta$ over the randomness of $\Phi$, 
    the following two statements hold simultaneously for any $\*u,\*v \in X$:
    \begin{enumerate}[label=(\alph*)]
        \item $\sqrt{1-\frac{e}{3}} \|\*u\| \leq \|\Phi \*u\| \leq \sqrt{1+\frac{e}{3}} \|\*u\|$ \\
        \item $\left|\langle \Phi \*u, \Phi \*v \rangle - \langle \*u, \*v \rangle \right| \leq \frac{e}{3}\|\*u\| \|\*v\|$
    \end{enumerate}
\end{lemma}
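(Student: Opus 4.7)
The plan is a standard two-step Johnson--Lindenstrauss argument: first establish the single-vector norm preservation via a subgaussian concentration inequality, then derive the pairwise inner-product preservation by a polarization identity, and finally union-bound over $O(n^2)$ relevant vectors.

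For the single-vector bound, I would fix $\*u \in \mathbb{R}^d$, write $\|\Phi\*u\|^2 = \tfrac{1}{k}\sum_{i=1}^k Z_i^2$ with $Z_i = \sum_j \epsilon_{ij}\, u_j$ for i.i.d.\ Rademacher $\epsilon_{ij}$, and invoke a Bernstein-type bound for sums of squared subgaussians to obtain an absolute constant $c>0$ with
\begin{equation*}
    \mathbb{P}\!\left( \bigl| \|\Phi\*u\|^2 - \|\*u\|^2 \bigr| > t\|\*u\|^2 \right) \leq 2\exp(-ckt^2), \qquad t \in (0,1).
\end{equation*}
Setting $t = e/3$ and taking square roots yields (a) for this fixed $\*u$, with failure probability at most $2\exp(-cke^2/9)$. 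For the inner-product claim (b), for each pair $\*u,\*v \in X$ I would normalize $\*a = \*u/\|\*u\|$, $\*b = \*v/\|\*v\|$ and use polarization:
\begin{equation*}
    \langle \Phi\*a, \Phi\*b \rangle - \langle \*a, \*b \rangle = \tfrac{1}{4}\bigl[(\|\Phi(\*a+\*b)\|^2 - \|\*a+\*b\|^2) - (\|\Phi(\*a-\*b)\|^2 - \|\*a-\*b\|^2)\bigr].
\end{equation*}
If the Step-1 conclusion with tolerance $e/3$ holds for both $\*a+\*b$ and $\*a-\*b$, the parallelogram law $\|\*a+\*b\|^2+\|\*a-\*b\|^2 = 2(\|\*a\|^2+\|\*b\|^2) = 4$ immediately gives $|\langle \Phi\*a, \Phi\*b \rangle - \langle \*a, \*b \rangle| \leq e/3$, and rescaling by $\|\*u\|\|\*v\|$ yields (b).

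Finally, I would union-bound the Step-1 event over the $n$ originals $\*x_i$ together with the $\binom{n}{2}$ normalized sums and $\binom{n}{2}$ normalized differences, i.e.\ at most $n+n(n-1) \leq n(n+1)$ vectors. This gives an overall failure probability of at most $2n(n+1)\exp(-cke^2/9)$, so choosing $k = \bigO(\log(n(n+1)/\beta)/e^2)$ makes it at most $\beta$. The main obstacle is really only Step 1: the concentration bound for $\|\Phi\*u\|^2$ is a standard Bernstein-type estimate for sums of squared subgaussians (e.g., Vershynin, Theorem 2.8.1), but one must be careful about the valid range of $t$ and track absolute constants to guarantee the stated $\bigO(\log(n(n+1)/\beta)/e^2)$ scaling; the polarization and union-bound steps are then mechanical.
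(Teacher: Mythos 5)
Your proposal is correct: the Bernstein-type concentration for $\|\Phi\*u\|^2$ with Rademacher entries, the polarization/parallelogram step for inner products, and the union bound over the $n$ points plus the $n(n-1)$ normalized sums and differences (at most $n(n+1)$ vectors, matching the $\log(n(n+1)/\beta)$ in $k$) is exactly the standard argument behind this lemma. Note that the paper itself does not prove it — it imports the statement verbatim from \citet{bms22} (Lemma A.3), whose proof follows the same route you sketched, so there is no substantive divergence to report.
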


\subsection{Proof of Lemma~\ref{lem: informal_margin_preserve}}
A formal version of Lemma~\ref{lem: informal_margin_preserve} is provided below:
\begin{lemma}[Formal version of Lemma~\ref{lem: informal_margin_preserve}]\label{lem: subset_margin_preserve}
    Let $S = \{(\*x_i, y_i)\}_{i=1}^n \subset \mathbb{R}^d \times \{\pm 1\}$ with $\|\*x\| \leq b$, distortion rate parameter $e \in (0,1)$, 
    $\Phi \sim \Pi:= (\frac{1}{\sqrt{k}}\mathrm{Rad}(\frac{1}{2}) )^{k \times d}$ with $k= \bigO \left( \frac{\log((n+1)(n+2)/\beta)}{e^2} \right)$.
    Then, for any $S^\prime \in 2^{S}$ with $\gamma(S^\prime) \geq \gamma$, we have 
        \begin{equation*}
            \mathbb{P}_{\Phi \sim \Pi}\left(\gamma(\Phi S^\prime) \geq \frac{\gamma}{2} - \frac{eb}{6} \right) \geq 1-\beta
        \end{equation*}
\end{lemma}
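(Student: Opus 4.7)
The plan is to fix the subset $S'$ and use the unit-norm max-margin separator $\*w^*$ for $S'$ as the reference classifier in the projected space. Since $S'$ is fixed before $\Phi$ is sampled, $\*w^*$ is deterministic and may be freely included among the vectors passed to the Johnson–Lindenstrauss preservation lemma (Lemma~\ref{lem: bas_jl}).

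Concretely, first let $\*w^* \in \mathbb{R}^d$ with $\|\*w^*\| = 1$ be the max-margin separator witnessing $\gamma(S') \geq \gamma$, so that $y_i \langle \*w^*, \*x_i \rangle \geq \gamma$ for every $(\*x_i,y_i) \in S'$. Second, invoke Lemma~\ref{lem: bas_jl} on the augmented set $X := \{\*x_1,\ldots,\*x_n,\*w^*\}$, which has $n+1$ elements; the prescribed dimension $k = \bigO(\log((n+1)(n+2)/\beta)/e^2)$ exactly matches the hypothesis for a set of $n+1$ vectors. On the resulting $1-\beta$ event, for every $(\*x_i,y_i) \in S'$ one has $|\langle \Phi \*w^*, \Phi \*x_i \rangle - \langle \*w^*, \*x_i \rangle| \leq (e/3)\|\*w^*\|\|\*x_i\| \leq eb/3$ and $\|\Phi \*w^*\| \leq \sqrt{1+e/3}$.

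Third, splitting on $y_i = \pm 1$ and applying the appropriate side of the inner-product deviation yields $y_i \langle \Phi \*w^*, \Phi \*x_i \rangle \geq \gamma - eb/3$ uniformly over $S'$. Using $\Phi \*w^*$ as a candidate separator in the projected space then gives
\begin{equation*}
\gamma(\Phi S') \;\geq\; \frac{\min_{(\*x_i,y_i)\in S'} y_i \langle \Phi \*w^*, \Phi \*x_i\rangle}{\|\Phi \*w^*\|} \;\geq\; \frac{\gamma - eb/3}{\sqrt{1+e/3}} \;\geq\; \frac{\gamma - eb/3}{2} \;=\; \frac{\gamma}{2} - \frac{eb}{6},
\end{equation*}
where $\sqrt{1+e/3} \leq 2$ for $e \in (0,1)$. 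In the degenerate regime $\gamma - eb/3 < 0$ the target bound is non-positive, while the geometric margin is non-negative by definition, so the inequality is trivial there.

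The main subtlety, rather than a real obstacle, is that $\*w^*$ depends on $S'$. This is harmless because $S'$ is fixed \emph{before} $\Phi$ is drawn, so $\*w^*$ is a deterministic vector from the viewpoint of the JL randomness and a single invocation of Lemma~\ref{lem: bas_jl} on $\{\*x_1,\ldots,\*x_n,\*w^*\}$ suffices. This is precisely why the statement quantifies ``for any $S'$'' outside the probability—avoiding a union bound over the $2^n$ subsets in $2^S$, which would be vacuous.
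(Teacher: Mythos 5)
Your proposal is correct and follows essentially the same route as the paper's proof: augment the point set with the (fixed, data-independent-of-$\Phi$) unit max-margin separator of $S'$, apply the JL inner-product and norm preservation lemma to the $n+1$ vectors, and bound $\gamma(\Phi S')$ via $\Phi \*w^*$ as a candidate separator, yielding $\frac{\gamma - eb/3}{\sqrt{1+e/3}} \geq \frac{\gamma}{2} - \frac{eb}{6}$. Your explicit handling of the degenerate case $\gamma - eb/3 < 0$ (where the bound is trivial since the margin is nonnegative by definition) is a minor tidying of a step the paper leaves implicit, not a different argument.
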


\begin{proof}
    For any $S^\prime \subset 2^S$ with $\mg(S^\prime)\geq \gamma$, there exist some \textit{unit vector} $\*w_{S^\prime} \in \mathbb{R}^d$ s.t. $\mg(\*w_{S^\prime};S^\prime) \geq \gamma$, running JL projection over $S \cup \{\*w_{S^\prime}\}$ and $k=\bigO\left(\frac{\log((n+2)(n+1)/\beta)}{e^2}\right)$ yields the following guarantee by Lemma~\ref{lem: bas_jl} (b):
    \begin{equation*}
        \forall \*x \in S^\prime, \quad \mathbb{P}_{\Phi \sim \Pi} \left( \left| \langle \Phi \*w_{S^\prime} , \Phi \*x \rangle - \langle \*w_{S^\prime} , \*x \rangle \right| \leq \frac{e}{3}\|\*w_{S^\prime}\|\|\*x\| \right) \geq 1- \beta
    \end{equation*}
    This implies the following happens w.p. at least $1-\beta$ over the randomness of $\Phi$:
    \begin{equation*}
        \begin{aligned}
            \min_{(\*x,y) \in S^\prime} y \langle \Phi \*w_{S^\prime}, \Phi \*x \rangle 
            \geq \min_{(\*x,y) \in S^\prime} y \langle \*w_{S^\prime}, \*x \rangle - \frac{eb}{3} 
            =\gamma - \frac{eb}{3}
        \end{aligned}
    \end{equation*}
    Under the same event, with norm preservation stated in Lemma~\ref{lem: bas_jl} (a), we get 
    \begin{equation*}
        \|\Phi \*w_{S^\prime}\| \leq \sqrt{1+\frac{e}{3}} \cdot \|\*w_{S^\prime}\|=\sqrt{1+\frac{e}{3}}    
    \end{equation*}
    Combining things all together, we get a lower bound for $\gamma(\Phi S^\prime)$:
    \begin{equation*}
            \min_{(\*x,y) \in S} \frac{y \langle \Phi \*w_{S^\prime}, \Phi \*x \rangle}{\|\Phi \*w_{S^\prime}\|} 
            \geq \frac{\gamma - \frac{eb}{3}}{\sqrt{1 + \frac{e}{3}}} \geq \frac{\gamma}{2} - \frac{eb}{6}
    \end{equation*}
    where the last inequality is by $e \in [0,1]$. Thus,
        \begin{equation*}
            \mathbb{P}_{\Phi \sim \Pi}\left(\gamma(\Phi S^\prime ) \geq \frac{\gamma}{2} - \frac{eb}{6} \right) \geq 1-\beta
        \end{equation*}
    \qedhere
\end{proof}

\subsection{Proof of Lemma~\ref{lem: margin_preserve}}
The proof of Lemma~\ref{lem: margin_preserve} is by directly applying Lemma~\ref{lem: subset_margin_preserve}
\begin{lemma}[Restate of Lemma~\ref{lem: margin_preserve}]
    Let $S = \{(\*x_i, y_i)\}_{i=1}^n \subset \mathcal{B}^d(1) \times \{\pm 1\}$ and $\gamma \in [0,1]$. 
    Construct JL projection matrix $\Phi \in \mathbb{R}^{k \times d}$ with entry i.i.d. $\frac{1}{\sqrt{k}}\mathrm{Rad}(\frac{1}{2})$ and $k= \bigO( \frac{\log((n+1)(n+2)/\beta)}{\gamma^2})$. For any $S_{\mathrm{in}} \in \dS_{\mathrm{in}}(\gamma)$, we have\footnote{$\mathrm{Rad}(\frac{1}{2})$ denotes the Rademacher distribution}:
    \begin{equation*}
        \mathbb{P}_{\Phi} \left( \gamma(\Phi S_{\mathrm{in}}) \geq \frac{\gamma}{3}\right) \geq 1-\beta    
    \end{equation*}
\end{lemma}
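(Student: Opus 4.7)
The plan is to derive Lemma~\ref{lem: margin_preserve} as an immediate corollary of the already-established formal margin preservation result Lemma~\ref{lem: subset_margin_preserve}, which is stated for the general case $\|\*x\| \leq b$ with an explicit distortion parameter $e \in (0,1)$. I will specialize that result to the setting at hand by taking $b = 1$ (since the data lies in $\cB^d(1) \times \{\pm 1\}$) and by choosing the distortion rate $e = \gamma$.

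First, I would verify that the dimension requirement matches: Lemma~\ref{lem: subset_margin_preserve} asks for $k = \bigO\!\left(\log((n+1)(n+2)/\beta)/e^2\right)$, which under the choice $e = \gamma$ becomes exactly $k = \bigO\!\left(\log((n+1)(n+2)/\beta)/\gamma^2\right)$, matching the hypothesis of the target lemma. Next, any $S_{\mathrm{in}} \in \cS_{\mathrm{in}}(\gamma)$ satisfies by definition $S_{\mathrm{in}} \subset S$ and $\gamma(S_{\mathrm{in}}) \geq \gamma$, so it falls squarely inside the scope of Lemma~\ref{lem: subset_margin_preserve}. Applying that lemma yields, with probability at least $1 - \beta$ over the random projection $\Phi$,
\begin{equation*}
\gamma(\Phi S_{\mathrm{in}}) \;\geq\; \frac{\gamma}{2} - \frac{e\,b}{6} \;=\; \frac{\gamma}{2} - \frac{\gamma}{6} \;=\; \frac{\gamma}{3},
\end{equation*}
which is precisely the desired conclusion.

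There is essentially no obstacle here; the lemma is a clean specialization of Lemma~\ref{lem: subset_margin_preserve}, and all the real content — JL-style distance/angle preservation for the finite set $S_{\mathrm{in}} \cup \{\*w_{S_{\mathrm{in}}}\}$ combined with a union bound over pairs — has already been packaged into Lemma~\ref{lem: bas_jl} and used inside the proof of Lemma~\ref{lem: subset_margin_preserve}. The only points worth double-checking are that the unit-norm witness $\*w_{S_{\mathrm{in}}}$ of $\gamma(S_{\mathrm{in}}) \geq \gamma$ is included in the set on which JL is applied (so that its inner product with each $\*x \in S_{\mathrm{in}}$ is preserved up to additive $e\|\*w_{S_{\mathrm{in}}}\|\|\*x\|/3$), and that the norm-preservation clause of Lemma~\ref{lem: bas_jl}(a) combined with $e \leq 1$ keeps the denominator $\|\Phi\*w_{S_{\mathrm{in}}}\| \leq \sqrt{1+e/3}$ from degrading the bound beyond the $\gamma/3$ we claim — both of which are absorbed into the calculation inside Lemma~\ref{lem: subset_margin_preserve}.
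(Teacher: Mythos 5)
Your proposal is correct and matches the paper's own proof, which likewise obtains the result by invoking Lemma~\ref{lem: subset_margin_preserve} with $b=1$ and distortion rate $e=\gamma$, so that the guaranteed margin is $\frac{\gamma}{2}-\frac{\gamma}{6}=\frac{\gamma}{3}$. The only cosmetic difference is that the paper applies the lemma to a maximum-cardinality inlier set while you apply it to an arbitrary $S_{\mathrm{in}}\in\dS_{\mathrm{in}}(\gamma)$, which is if anything more faithful to the stated claim.
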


\begin{proof}
    Recall that
    \begin{equation*}
        \mathcal{S}_{\mathrm{in}}(\gamma):=\{ S^\prime \subset S : \gamma(S^\prime) \geq \gamma \}
    \end{equation*}
    Thus, for any  $S^*_\gamma \in \argmax\limits_{S^\prime \in \mathcal{S}_{\mathrm{in}}(\gamma)} |S^\prime |$, by Lemma~\ref{lem: subset_margin_preserve} with distortion rate $e=\gamma$, we have:
    \begin{equation*}
        \mathbb{P}_{\Phi \sim \Pi}\left(\gamma(\Phi S^*_\gamma) \geq \frac{\gamma}{2} - \frac{\gamma}{6} \right) = \mathbb{P}_{\Phi \sim \Pi}\left(\gamma(\Phi S^*_\gamma) \geq \frac{\gamma}{3} \right) \geq 1-\beta
    \end{equation*}        
\end{proof}

\section{Technical Lemmas for noisy gradient descent (Algo.~\ref{alg: dp_gd})}

\subsection{Noisy Gradient Descent}
\begin{algorithm2e}[H]
    \DontPrintSemicolon
    \setcounter{AlgoLine}{0}
    \nl \Input{Data $S=\{\*x_i, y_i\}_{i=1}^n \subset \mathbb{R}^{d} \times \{\pm 1\}$, 
    loss function $\ell(\cdot)$,
    GDP budget $\mu$}    
    \nl \textbf{Choose\quad\,\,\,:} some reference point $\*w_{\mathrm{ref}}\in\mathbb{R}^{d}$, initialization $\*w_0\in\mathbb{R}^d$ \\
    \nl \textbf{Set\quad\,\,\,:}{
    $\hat{L}(\*w;S) = \sum_{(\*x,y) \in S} \ell(\*w;(x_i, y_i))$ and $L_2$ sensitivity of $\hat{L}$ is $\Delta$, \\ 
    \qquad\qquad number of iteration $T=n^2\mu^2$,\\
    \qquad\qquad noise parameter $\sigma=\frac{\Delta\sqrt{T}}{\mu}=n\Delta$, \\
    \qquad\qquad learning rate $\eta = \sqrt{\frac{\|\*w_{\mathrm{ref}}-\*w_0\|^2}{T(n^2\Delta^2 + d\sigma^2 )}}$ \\
    }
    \nl \For{$t=0,...,T-1$}{
    \nl     $\xi_t \sim \mathcal{N}(0, \sigma^2 \mathbb{I}_{d})$ \\
    \nl     $\*w_{t+1} = \*w_t - \eta ( \nabla_\*w \hat{L} (\*w_t; S) + \xi_t )$ %
    
    }
    \nl \Output{$\text{Averaged estimator: }\bar{\*w} = \frac{1}{T}\sum_{t=0}^{T-1} \*w_t$ \,\,or\,\, $\text{Last-iterate estimator: }\*w=\*w_T$}
    \caption{$\angd(\ell(\cdot), S, \mu)$: Noisy Gradient Descent \citep{bst14}}
    \label{alg: dp_gd}
\end{algorithm2e}

\subsection{Convergence in expectation}
\begin{lemma}\label{lem: erm_pngd}
    Algo.~\ref{alg: dp_gd} is $\mu$-GDP. Furthermore, for the averaged estimator $\bar{\*w}$:
    \begin{equation}
        \mathbb{E}_{\mathcal{A}}\left[\hat{L}(\bar{\*w};S)\right] - \hat{L}(\*w_{\mathrm{ref}};S) \leq \bigO \left(\frac{\|\*w_{\mathrm{ref}}-\*w_0\| \Delta \sqrt{d}}{\mu} \right)
    \end{equation}
\end{lemma}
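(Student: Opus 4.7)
The plan is the standard Gaussian-mechanism-plus-composition argument. Each iteration releases the noisy gradient $\nabla \hat{L}(\*w_t;S) + \xi_t$ with $\xi_t \sim \mathcal{N}(0,\sigma^2 \mathbb{I}_d)$. Since $\hat{L}(\cdot;S)$ has $L_2$ sensitivity $\Delta$ by hypothesis, a single step is $(\Delta/\sigma)$-GDP, which equals $(1/n)$-GDP under the chosen $\sigma = n\Delta$. Composing the $T = n^2 \mu^2$ adaptive Gaussian mechanisms via the clean GDP composition rule stated in the preliminary gives $\sqrt{T \cdot (1/n)^2} = \mu$-GDP. The iterates $\*w_t$ are post-processing of the released gradients, so releasing $\bar{\*w}$ inherits the same guarantee.

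\textbf{Utility setup.} I would use the standard online-to-batch convex analysis. Write $g_t := \nabla \hat{L}(\*w_t;S) + \xi_t$. Expanding $\|\*w_{t+1}-\*w_{\mathrm{ref}}\|^2 = \|\*w_t - \eta g_t - \*w_{\mathrm{ref}}\|^2$ and rearranging gives the one-step inequality
\begin{equation*}
    \langle g_t,\,\*w_t - \*w_{\mathrm{ref}}\rangle \;\leq\; \frac{\|\*w_t-\*w_{\mathrm{ref}}\|^2 - \|\*w_{t+1}-\*w_{\mathrm{ref}}\|^2}{2\eta} \;+\; \frac{\eta}{2}\|g_t\|^2.
\end{equation*}
Taking expectations and using that $\*w_t$ depends only on $\xi_0,\dots,\xi_{t-1}$ (so $\mathbb{E}[\langle \xi_t,\*w_t-\*w_{\mathrm{ref}}\rangle]=0$), the left side equals $\mathbb{E}\langle \nabla \hat{L}(\*w_t;S),\*w_t-\*w_{\mathrm{ref}}\rangle$, which by convexity (Lemma~\ref{lem: 1st_cvx}) lower-bounds $\mathbb{E}[\hat{L}(\*w_t;S)] - \hat{L}(\*w_{\mathrm{ref}};S)$. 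For the second-moment term, the independence of $\xi_t$ gives $\mathbb{E}\|g_t\|^2 = \|\nabla \hat{L}(\*w_t;S)\|^2 + d\sigma^2 \leq n^2 \Delta^2 + d\sigma^2$, where the deterministic gradient bound uses that $\hat{L}$ is a sum of $n$ per-point losses each with gradient norm at most $\Delta$.

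\textbf{Telescoping and tuning $\eta$.} Summing over $t = 0,\dots,T-1$, dividing by $T$, and applying Jensen's inequality to convert $\tfrac{1}{T}\sum_t \hat{L}(\*w_t;S)$ into $\hat{L}(\bar{\*w};S)$ yields
\begin{equation*}
    \mathbb{E}[\hat{L}(\bar{\*w};S)] - \hat{L}(\*w_{\mathrm{ref}};S) \;\leq\; \frac{\|\*w_0-\*w_{\mathrm{ref}}\|^2}{2\eta T} + \frac{\eta(n^2\Delta^2 + d\sigma^2)}{2}.
\end{equation*}
The prescribed $\eta = \sqrt{\|\*w_{\mathrm{ref}}-\*w_0\|^2 / (T(n^2\Delta^2+d\sigma^2))}$ balances the two terms and produces $\|\*w_{\mathrm{ref}}-\*w_0\|\sqrt{(n^2\Delta^2+d\sigma^2)/T}$. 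Substituting $\sigma = n\Delta$ and $T = n^2\mu^2$ collapses this to $\|\*w_{\mathrm{ref}}-\*w_0\|\,\Delta\,\sqrt{1+d}/\mu = \bigO(\|\*w_{\mathrm{ref}}-\*w_0\|\Delta\sqrt{d}/\mu)$, matching the stated bound.

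\textbf{Main obstacle.} This is essentially textbook noisy GD analysis; I don't anticipate a real obstacle. The only points requiring care are (i) verifying that the cross-term $\mathbb{E}[\langle \xi_t, \*w_t-\*w_{\mathrm{ref}}\rangle]$ vanishes, which follows because $\xi_t$ is fresh and zero-mean at step $t$, and (ii) that the lemma implicitly requires $\ell$ to be convex so that Lemma~\ref{lem: 1st_cvx} applies---this is satisfied for the hinge loss used downstream in $\ajlgd$. Checking that the algebra of the specific parameter choices $(T,\sigma,\eta)$ exactly cancels the $n$'s inside the square root is the only thing one needs to be careful about.
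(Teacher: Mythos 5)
Your proposal is correct and follows essentially the same route as the paper: the same per-step Gaussian-mechanism plus GDP-composition privacy argument, the same convex one-step expansion of $\|\*w_{t+1}-\*w_{\mathrm{ref}}\|^2$ with the zero-mean noise cross-term and the second-moment bound $n^2\Delta^2+d\sigma^2$, followed by telescoping, Jensen, and the AM-GM choice of $\eta$ with $\sigma=n\Delta$, $T=n^2\mu^2$. The only difference is cosmetic: you spell out the per-iteration $(1/n)$-GDP accounting and the vanishing of $\mathbb{E}[\langle \xi_t,\*w_t-\*w_{\mathrm{ref}}\rangle]$, which the paper leaves implicit.
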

\begin{proof}[Proof of Lemma~\ref{lem: erm_pngd}]
    The privacy guarantee comes from GDP composition. For notational convenience, denote $g_t:=\nabla_\*w \hat{L} (\*w_t; S)$, $\hat{g}_t = g_t + \xi_t$, then $\*w_{t+1} = \*w_t - \eta \hat{g}_t$. we have:
    \begin{equation*}
        \begin{aligned}
            \|\*w_{t+1} - \*w_{\mathrm{ref}}\|^2
            &= \| \*w_t - \eta \hat{g}_t - \*w_{\mathrm{ref}} \|^2 \\
            &= \|\*w_t - \*w_{\mathrm{ref}}\|^2 - 2\eta \langle \hat{g}_t, \*w_t - \*w_{\mathrm{ref}} \rangle + \eta^2\|\hat{g_t}\|^2 \\
        \end{aligned}
    \end{equation*}
    Taking expectation conditioned on $\*w_t$, and by $\mathbb{E}[\hat{g}_t | \*w_t] = g_t$, 
    $\mathbb{E}[\hat{g}_t^2 | \*w_t] \leq n^2 \Delta^2 + d\sigma^2$ we have:
    \begin{equation*}
        \begin{aligned}
            \mathbb{E}[\|\*w_{t+1} - \*w_{\mathrm{ref}}\|^2] 
            &= \mathbb{E}\left(\mathbb{E}\left[\|\*w_{t+1} - \*w_{\mathrm{ref}}\|^2 | \*w_t \right]\right) \\
            &= \mathbb{E} \left( \mathbb{E}\left[\|\*w_t - \*w_{\mathrm{ref}}\|^2 | \*w_t \right] - 2\eta \langle \nabla_\*w \hat{L}(\*w_t;S), \*w_t - \*w_{\mathrm{ref}} \rangle + \eta^2\|\hat{g_t}\|^2 \right)\\
            &\leq \mathbb{E}[\|\*w_t - \*w_{\mathrm{ref}}\|^2] - 2\eta(\mathbb{E}[\hat{L}(\*w_t;S)] - \hat{L}(\*w_{\mathrm{ref}};S)) + \eta^2(n^2 \Delta^2 + d\sigma^2) 
            && \text{\note{Lemma~\ref{lem: 1st_cvx}}} \\
        \end{aligned}
    \end{equation*}
    By telescoping the sum and dividing both sides by $T$, we get:
    \begin{equation*}
        \begin{aligned}
            \mathbb{E}\left[\hat{L}(\bar{\*w}_T;S)\right] - \hat{L}(\*w_{\mathrm{ref}};S) 
            &\leq \mathbb{E}\left[ \frac{1}{T}\sum_{t=0}^{T-1} \hat{L}(\*w_t;S)\right] - \hat{L}(\*w_{\mathrm{ref}};S)  \\
            &\leq \frac{\|\*w_{\mathrm{ref}} - \*w_0\|^2}{T\eta} + \eta (n^2 \Delta^2 + d\sigma^2) - \frac{\|\*w_T - \*w_{\mathrm{ref}}\|^2}{T} \\
            &\leq \frac{\|\*w_{\mathrm{ref}} - \*w_0\|^2}{T\eta} + \eta (n^2 \Delta^2 + d\sigma^2) \\
        \end{aligned}
    \end{equation*}
    Since $T = n^2\mu^2$, $\sigma^2 = n^2\Delta^2$, 
    and choosing $\eta = \sqrt{\frac{\|\*w_{\mathrm{ref}} - \*w_0\|^2}{T(n^2L^2 + d\sigma^2)}}$ to meet the equality condition of AM-GM,
    we get: 
    \begin{equation*}
        \begin{aligned}
            \mathbb{E}\left[\hat{L}(\bar{\*w}_T;S)\right] - \hat{L}(\*w_{\mathrm{ref}};S) 
            &\leq 2\sqrt{\frac{\|\*w_{\mathrm{ref}} - \*w_0\|^2(n^2 \Delta^2 + d\sigma^2)}{T}} \\
            &\leq \frac{2n \Delta\|\*w_{\mathrm{ref}} - \*w_0\|}{\sqrt{T}} + \frac{2\sqrt{d}\sigma\|\*w_{\mathrm{ref}} - \*w_0\|}{\sqrt{T}} \\
            &\leq \frac{4\Delta\sqrt{d} \|\*w_{\mathrm{ref}} - \*w_0\|}{\mu}
        \end{aligned}
    \end{equation*} 
\end{proof}

\subsection{Convergence with high probability}
\begin{lemma}[Theorem 3.5 in \citet{lz24}]\label{lem: whp_erm_last_iterate}
    Running Algo.~\ref{alg: dp_gd} with step size $\eta = \sqrt{\frac{\|\*w_{\mathrm{ref}}-\*w_0\|^2}{T(n^2\Delta^2 + d\sigma^2\log(1/\beta))}}$ and output last-iterate estimator $\*w$. With probability at least $1-\beta$,
    \begin{equation}
        \hat{L}(\*w;S) - \hat{L}(\*w_{\mathrm{ref}};S) \leq \bigO \left(\frac{\|\*w_{\mathrm{ref}}-\*w_0\| \Delta \sqrt{d\log(1/\beta)}}{\mu}\right)
    \end{equation}
\end{lemma}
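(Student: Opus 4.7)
The plan is to adapt the in-expectation analysis from Lemma~\ref{lem: erm_pngd} to a high-probability last-iterate guarantee. Starting from the recursion $\*w_{t+1} = \*w_t - \eta \hat{g}_t$ with $\hat{g}_t = g_t + \xi_t$, $g_t := \nabla \hat{L}(\*w_t;S)$, and $\xi_t \sim \mathcal{N}(0,\sigma^2 \mathbb{I}_d)$, I would expand $\|\*w_{t+1} - \*w_{\mathrm{ref}}\|^2$, use convexity (Lemma~\ref{lem: 1st_cvx}) to rewrite $\langle g_t, \*w_t - \*w_{\mathrm{ref}}\rangle \geq \hat{L}(\*w_t;S) - \hat{L}(\*w_{\mathrm{ref}};S)$, and telescope from $t=0$ to $T-1$ to obtain
\begin{equation*}
\sum_{t=0}^{T-1}\bigl[\hat L(\*w_t;S) - \hat L(\*w_{\mathrm{ref}};S)\bigr] \leq \frac{\|\*w_0 - \*w_{\mathrm{ref}}\|^2}{2\eta} - \sum_{t=0}^{T-1}\langle \xi_t, \*w_t - \*w_{\mathrm{ref}}\rangle + \frac{\eta}{2}\sum_{t=0}^{T-1}\|\hat g_t\|^2 .
\end{equation*}

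Next I would control each stochastic term with probability $1 - \beta/4$ each. For $\sum_t \|\hat g_t\|^2 \leq 2\sum_t\|g_t\|^2 + 2\sum_t\|\xi_t\|^2$, the first sum is bounded deterministically by $2 T n^2 \Delta^2$ and the second by standard chi-squared concentration, yielding $\sum_t \|\xi_t\|^2 = \bigO(Td\sigma^2 + \sigma^2 d \log(1/\beta))$. For the martingale noise $M_T := \sum_{t=0}^{T-1}\langle \xi_t, \*w_t - \*w_{\mathrm{ref}}\rangle$, I would apply a Freedman/Azuma-type bound for sub-Gaussian increments conditional on $\*w_t$, which gives $|M_T| = \bigO(\sigma \sqrt{\log(1/\beta)\sum_t \|\*w_t - \*w_{\mathrm{ref}}\|^2})$; this requires a uniform bound on the iterate distance.

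The main obstacle is obtaining such a uniform iterate bound $\max_{t \leq T}\|\*w_t - \*w_{\mathrm{ref}}\| \leq R$. The plan is a bootstrapping induction: assuming the bound holds up to step $s$, plug it into the telescoped inequality up to $s$ and into the martingale tail inequality, verify that the right-hand side combined with the choice $\eta = \|\*w_0 - \*w_{\mathrm{ref}}\|/\sqrt{T(n^2\Delta^2 + d\sigma^2 \log(1/\beta))}$ keeps $\|\*w_{s+1} - \*w_{\mathrm{ref}}\|$ under $R$, and close the induction with $R = \bigO(\|\*w_0 - \*w_{\mathrm{ref}}\|)$. Combined with $T = n^2 \mu^2$ and $\sigma = n\Delta$, this turns the average-iterate telescoping bound into
\begin{equation*}
\frac{1}{T}\sum_{t=0}^{T-1}\bigl[\hat L(\*w_t;S) - \hat L(\*w_{\mathrm{ref}};S)\bigr] = \bigO\!\left(\frac{\|\*w_{\mathrm{ref}} - \*w_0\|\Delta\sqrt{d\log(1/\beta)}}{\mu}\right).
\end{equation*}

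The final step is to lift the average-iterate bound to the last iterate $\*w = \*w_T$. I would invoke the Harvey--Liaw--Plan--Randhawa suffix-averaging trick: writing $\hat L(\*w_T;S) - \hat L(\*w_{\mathrm{ref}};S)$ as a telescoping sum of differences of suffix averages over geometrically growing windows of length $2^k$, apply the average-iterate bound separately on each of the $\bigO(\log T)$ windows, and union-bound. Each window inherits the same rate up to a $\sqrt{\log T}$ factor that is absorbed into $\bigOt$, and the cumulative failure probability is kept at $\beta$ by a union bound across the $\bigO(\log T)$ windows together with the three high-probability events above (chi-squared, martingale, iterate boundedness). This yields the stated high-probability last-iterate excess empirical risk bound.
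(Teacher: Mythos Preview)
The paper does not prove this lemma itself; it is quoted as Theorem~3.5 of \citet{lz24} with no argument supplied in the present paper, so there is no in-paper proof to compare your proposal against.

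Your outline is a reasonable route to a high-probability last-iterate bound and captures the standard ingredients: telescoping the one-step recursion, chi-squared concentration for $\sum_t\|\xi_t\|^2$, a Freedman-type bound for the cross term $\sum_t\langle\xi_t,\*w_t-\*w_{\mathrm{ref}}\rangle$, an inductive argument to keep $\|\*w_t-\*w_{\mathrm{ref}}\|$ uniformly bounded, and a suffix-averaging conversion from average to last iterate. Two cautions. First, the iterate-boundedness induction is the genuinely delicate step because the domain is unconstrained; closing it cleanly requires care with constants and with the coupling between the martingale tail bound (which depends on $\sum_t\|\*w_t-\*w_{\mathrm{ref}}\|^2$) and the induction hypothesis itself, and you should make sure the argument does not silently assume what it is trying to prove. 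Second, the Harvey--Liaw--Plan--Randhawa suffix-averaging step as you describe it typically costs an extra $\log T$ factor, whereas the lemma is stated with $\bigO$ rather than $\bigOt$; you should check whether your final bound matches the statement or carries an additional logarithm that the cited reference may avoid via a sharper last-iterate analysis.
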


\section{Margin Inlier-outlier based convergence analysis for $\ajlgd$ (Algo.~\ref{alg: jl_dp_gd})}\label{apx: in-out-conv}
In this section, we derive a convergence bound for $\ajlgd$ (Algo.~\ref{alg: jl_dp_gd}) characterized by margin inlier-outliers.

\subsection{Expected guarantee}
We derive the convergence guarantee for $\ajlgd$ (Algo.~\ref{alg: jl_dp_gd}), with expectation taking over the randomness of the ERM algorithm.
\begin{lemma}[Expected convergence bound] \label{lem: erm_fix_margin_and_composition}
    Suppose $S = S_{\mathrm{in}} \sqcup S_{\mathrm{out}} \subset \cB^d(b)\times\{\pm 1\}$ 
    with $\gamma(S_{\mathrm{in}}) \geq \gamma$. Run Algo.~\ref{alg: jl_dp_gd} with hinge loss $\ell_{\gamma/3}(\cdot)$, GDP budget $\mu$, and dataset $S$. The averaged estimator $\bar{\*w}\in\mathbb{R}^k$ satisfies: w.p. at least $1-\beta$ over the randomness of JL matrix $\Phi$:
    \begin{equation*}
        \begin{aligned}
            \mathbb{E}_{\mathcal{A}} [\Tilde{L}_{\gamma/3} (\bar{\*w} ; \Phi S)] 
            &\leq \bigO \left(\frac{b^2\log^{1/2}((n+2)(n+1)/\beta)}{n \gamma^2 \mu} 
            + \frac{|S_{\mathrm{out}}|}{\gamma n} \right) 
        \end{aligned}
    \end{equation*} 
\end{lemma}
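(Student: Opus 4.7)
The plan is to instantiate the inlier-outlier template of Eq.~\ref{eqn: general_tech} in the JL-projected space. Fix the given decomposition $S = S_{\mathrm{in}}\sqcup S_{\mathrm{out}}$ with $\gamma(S_{\mathrm{in}})\geq \gamma$. I would choose a reference point $\mathring{\*w}_{\mathrm{in}}\in\mathbb{R}^k$ equal to the unit-norm max-margin separator of $\Phi S_{\mathrm{in}}$, feed it together with initialization $\*w_0=\*0$ into Lemma~\ref{lem: erm_pngd} (for $\angd$ inside $\ajlgd$), and then split the reference loss as $\hat{L}_{\gamma/3}(\mathring{\*w}_{\mathrm{in}};\Phi S) = \hat{L}_{\gamma/3}(\mathring{\*w}_{\mathrm{in}};\Phi S_{\mathrm{in}}) + \hat{L}_{\gamma/3}(\mathring{\*w}_{\mathrm{in}};\Phi S_{\mathrm{out}})$, controlling each piece separately and then dividing by $n$. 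Privacy comes for free from the $\mu$-GDP of $\angd$ since $\Phi$ is data-independent.

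\textbf{Setting up the JL good event.} Apply Lemma~\ref{lem: subset_margin_preserve} with distortion $e=\gamma/b$ and projection dimension $k=\mathcal{O}\!\left(b^{2}\log((n{+}1)(n{+}2)/\beta)/\gamma^{2}\right)$ to obtain, with probability at least $1-\beta$ over $\Phi$, both $\gamma(\Phi S_{\mathrm{in}})\geq \gamma/2 - eb/6 \geq \gamma/3$ and the norm preservation $\|\Phi\*x\|\leq\sqrt{1+e/3}\,b<2b$ for every $(\*x,y)\in S$. The latter implies that the clipping step $\Proj{\mathcal{B}^{k}(2v)}{\cdot}$ in Algo.~\ref{alg: jl_dp_gd} is inactive on this event, so $\Phi S$ equals the unclipped projection. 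On this event every $(\Phi\*x,y)\in \Phi S_{\mathrm{in}}$ satisfies $y\langle \mathring{\*w}_{\mathrm{in}},\Phi\*x\rangle \geq \gamma/3$, hence $\ell_{\gamma/3}(\mathring{\*w}_{\mathrm{in}};(\Phi\*x,y))=0$ and so $\hat{L}_{\gamma/3}(\mathring{\*w}_{\mathrm{in}};\Phi S_{\mathrm{in}})=0$. For an outlier, the hinge loss obeys the crude bound $\ell_{\gamma/3}(\mathring{\*w}_{\mathrm{in}};(\Phi\*x,y))\leq 1+\|\mathring{\*w}_{\mathrm{in}}\|\cdot 2b/(\gamma/3)=\mathcal{O}(b/\gamma)$, so the outlier contribution is $\mathcal{O}(b|S_{\mathrm{out}}|/\gamma)$.

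\textbf{Plugging into the convergence bound.} Lemma~\ref{lem: sens_hinge} gives sensitivity $\Delta=\mathcal{O}(b/\gamma)$ for the summed $(\gamma/3)$-hinge loss over data in $\mathcal{B}^{k}(2b)$. Since $\|\mathring{\*w}_{\mathrm{in}}-\*w_0\|=1$ and the ambient dimension is $k$, Lemma~\ref{lem: erm_pngd} yields
\begin{equation*}
\mathbb{E}_{\mathcal{A}}\!\left[\hat{L}_{\gamma/3}(\bar{\*w};\Phi S)\right] - \hat{L}_{\gamma/3}(\mathring{\*w}_{\mathrm{in}};\Phi S) \;\leq\; \mathcal{O}\!\left(\frac{\|\mathring{\*w}_{\mathrm{in}}\|\,\Delta\,\sqrt{k}}{\mu}\right) \;=\; \mathcal{O}\!\left(\frac{b^{2}\log^{1/2}((n{+}2)(n{+}1)/\beta)}{\gamma^{2}\mu}\right).
\end{equation*}
Adding the split of the reference loss and dividing by $n$ gives the claimed bound on $\mathbb{E}_{\mathcal{A}}[\Tilde{L}_{\gamma/3}(\bar{\*w};\Phi S)]$ on the good event for $\Phi$.

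\textbf{Main obstacle.} The non-trivial bookkeeping is tuning the JL distortion to $e=\Theta(\gamma/b)$ rather than $\Theta(\gamma)$: otherwise the additive loss $eb/6$ in Lemma~\ref{lem: subset_margin_preserve} would swamp the margin and one could not keep the hinge loss parameter at $\gamma/3$. This choice is exactly what forces $k=\Theta(b^{2}\log(n/\beta)/\gamma^{2})$ and ultimately produces the $b^{2}$ factor in the privacy/optimization term. A secondary subtlety is that the decomposition $(S_{\mathrm{in}},S_{\mathrm{out}})$ is fixed independently of $\Phi$, so only a single failure probability $\beta$ is paid and no union bound over $2^{S}$ is needed; the adaptivity to the \emph{best} such decomposition is then deferred to the hyperparameter tuning step in Theorem~\ref{thm: hyper_tuning}.
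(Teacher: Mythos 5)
Your proposal is correct and follows essentially the same route as the paper's proof: apply Lemma~\ref{lem: subset_margin_preserve} with distortion $e=\gamma/b$ to get the margin-preservation event, take the normalized max-margin separator of $\Phi S_{\mathrm{in}}$ as the reference point in Lemma~\ref{lem: erm_pngd}, split its loss into a zero inlier term and an $\bigO(b/\gamma)$-per-point outlier term, and combine with $\Delta=\bigO(b/\gamma)$ and $k=\bigO(b^2\log(n/\beta)/\gamma^2)$ before dividing by $n$. Your explicit remark that norm preservation makes the clipping step $\Proj{\mathcal{B}^k(2b)}{\cdot}$ inactive on the good event is a slightly more careful treatment of a detail the paper leaves implicit, but it does not change the argument.
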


\begin{proof}[Proof of Lemma~\ref{lem: erm_fix_margin_and_composition}]
    Denote the projected dataset $\Phi S:=\{(\Proj{\mathcal{B}^k(2b)}{\Phi \*x_i}, y_i)\}_{i=1}^n$.
    Since $\mg (S_{\mathrm{in}}) \geq \gamma$ by assumption, 
    apply Lemma~\ref{lem: subset_margin_preserve} with distortion rate $e=\gamma/b$ and $k= \bigO \left(\frac{\log((n+2)(n+1)/\beta)}{e^2} \right)$, 
    we have $\mg (\Phi S_{\mathrm{in}}) \geq \frac{\gamma}{3}$ w.p. at least $1-\beta$. We denote this event by $E_1$. Let $\mathring{\*w}_{\mathrm{in}} \in \mathbb{R}^k$ to be the normalized max-margin separator for $\Phi S_{\mathrm{in}}$, we have $\ell_{\gamma/3} (\mathring{\*w}_{\mathrm{in}}; (\Phi \*x, y) )=0$ for any $(\Phi \*x,y) \in \Phi S_{\mathrm{in}}$ under event $E_1$.

    Now, we initialize $\angd$ with hinge loss function $\ell_{\gamma/3}$, dataset $\Phi S$, GDP budget $\mu$, reference point $\*w_{\mathrm{ref}}=\mathring{\*w}_{\mathrm{in}}$, initialization point $\*w_0 = 0$ and learning rate $\eta = \sqrt{\frac{\|\*w_{\mathrm{ref}}-\*w_0\|^2}{T(n^2\Delta^2 + d\sigma^2 )}}=\sqrt{\frac{1}{T(n^2\Delta^2 + d\sigma^2 )}}$. Let $\bar{\*w}$ be the averaged estimator outputed from $\angd$ (Algo.~\ref{alg: dp_gd}). Conditioned on the event $E_1$, which happens w.p. at least $1-\beta$ over the randomness of $\Phi$, we have:
    \begin{equation*}
        \begin{aligned}
            \mathbb{E}_{\mathcal{A}} [ \hat{L}_{\gamma/3}(\bar{\*w} ; \Phi S) ]
            & \leq \hat{L}_{\gamma/3}(\mathring{\*w}_{\mathrm{in}}; \Phi S) + \frac{4 \Delta \sqrt{k} }{\mu}
            && \note{\text{Lemma~\ref{lem: erm_pngd}}}\\
            &= \hat{L}_{\gamma/3}(\mathring{\*w}_{\mathrm{in}} ; \Phi S_{\mathrm{in}}) + \hat{L}_{\gamma/3}(\mathring{\*w}_{\mathrm{in}} ; \Phi S_{\mathrm{out}}) + \frac{4\Delta \sqrt{k}}{\mu}\\
            &= 0 + \hat{L}_{\gamma/3}(\mathring{\*w}_{\mathrm{in}} ; \Phi S_{\mathrm{out}}) + \frac{4\Delta \sqrt{k}}{\mu}
            && \note{\text{under event $E_1$}}
        \end{aligned}
    \end{equation*}
       
    Under the same event, we analyze the worst-case guarantee on the margin outlier set $\Phi S_{\mathrm{out}}$:
    \begin{equation*}
        \begin{aligned}
            \hat{L}_{\gamma/3}(\mathring{\*w}_{\mathrm{in}} ; \Phi S_{\mathrm{out}}) 
            &\leq |S_{\mathrm{out}}| \left( 1 + \frac{3}{\gamma}\|\mathring{\*w}_{\mathrm{in}}\| \max_{\*x \in S_{\mathrm{out}}} \|\Phi \*x\| \right) \\
            &\leq |S_{\mathrm{out}}| \left( 1+ \frac{3b}{\gamma}\sqrt{1 + \frac{\gamma}{3b}} \right)
            && \note{\text{Lemma~\ref{lem: bas_jl} norm preservation}}\\
            &\leq \frac{23b|S_{\mathrm{out}}|}{5\gamma} 
        \end{aligned}    
    \end{equation*} 
    Given that the projection dimension is \( k = \bigO\left(\frac{b^2\log((n+2)(n+1)/\beta)}{\gamma^2}\right) \) and the \( L_2 \) sensitivity of \( \ell_{\gamma/3} \) is \( \Delta \leq \frac{12b}{\gamma} \) (Lemma~\ref{lem: sens_hinge}), we have, with probability at least \( 1-\beta \) over \( \Phi \), the following bound for the averaged loss:
    \begin{equation*}
        \begin{aligned}
            \mathbb{E}_{\mathcal{A}} [\hat{L}_{\gamma/3} (\bar{\*w} ; \Phi S)] 
            & \leq \bigO \left( \frac{b^2 \log^{1/2}((n+2)(n+1)/\beta)}{n \gamma^2 \mu}  
            + \frac{b|S_{\mathrm{out}}|}{n\gamma} \right) \\
        \end{aligned}
    \end{equation*}
\end{proof}

Using the above expected convergence Lemma~\ref{lem: erm_fix_margin_and_composition} and margin preservation Lemma~\ref{lem: margin_preserve} we have the following corollary:
\begin{corollary}\label{cor: exp_conv_jlgd}
    Given $\gamma \in [0,b]$, running Algo.~\ref{alg: jl_dp_gd} with hinge loss $\ell_{\gamma/3}(\cdot)$, GDP budget $\mu$, and dataset $S$. The averaged estimator $\bar{\*w}\in\mathbb{R}^k$ satisfies: w.p. at least $1-\beta$ over the randomness of JL matrix $\Phi$:
    \begin{equation*}
        \begin{aligned}
            \mathbb{E}_{\mathcal{A}} [\Tilde{L}_{\gamma/3} (\bar{\*w} ; \Phi S)] 
            &\leq \min_{S_{\mathrm{out}}\in\cS_{\mathrm{out}}(\gamma)}\bigO \left(\frac{b^2\log^{1/2}((n+2)(n+1)/\beta)}{n \gamma^2 \mu} 
            + \frac{b|S_{\mathrm{out}}|}{\gamma n} \right) 
        \end{aligned}
    \end{equation*} 
\end{corollary}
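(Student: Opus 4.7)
The plan is to deduce the corollary from Lemma~\ref{lem: erm_fix_margin_and_composition} by instantiating it at a single, carefully chosen decomposition of $S$, rather than attempting to handle all admissible decompositions simultaneously. The key structural observation is that, among the two terms appearing inside $\bigO(\cdot)$ on the right-hand side, only $b|S_{\mathrm{out}}|/(\gamma n)$ depends on the choice of $S_{\mathrm{out}}$, and this dependence is monotone in $|S_{\mathrm{out}}|$. Hence the minimum over $\cS_{\mathrm{out}}(\gamma)$ is attained at the $S_{\mathrm{out}}$ of smallest cardinality, i.e., at $S_{\mathrm{out}}^\star := S \setminus S^*_\gamma$ where $S^*_\gamma \in \arg\max_{S'\in \cS_{\mathrm{in}}(\gamma)} |S'|$.

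Given this observation, the corollary will follow by applying Lemma~\ref{lem: erm_fix_margin_and_composition} to the single decomposition $S = S^*_\gamma \sqcup S_{\mathrm{out}}^\star$. By Definition~\ref{def: mg_inlier}, $\gamma(S^*_\gamma) \geq \gamma$, so the hypothesis of Lemma~\ref{lem: erm_fix_margin_and_composition} is met; instantiating it furnishes an event on $\Phi$ of probability at least $1-\beta$ under which
$$\mathbb{E}_{\cA}\bigl[\tilde{L}_{\gamma/3}(\bar{\*w};\Phi S)\bigr] \leq \bigO\!\left(\frac{b^2 \log^{1/2}((n+2)(n+1)/\beta)}{n \gamma^2 \mu} + \frac{b\,|S_{\mathrm{out}}^\star|}{\gamma n}\right).$$
By our choice of $S_{\mathrm{out}}^\star$, the right-hand side coincides with $\min_{S_{\mathrm{out}} \in \cS_{\mathrm{out}}(\gamma)}$ of the same expression, which is exactly the bound claimed in the corollary.

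The one subtlety worth flagging is that this argument \emph{does not} require a union bound over the (possibly exponentially many) sets in $\cS_{\mathrm{out}}(\gamma)$: the statement is an upper bound by the minimum, and to establish such an upper bound it suffices to witness a single decomposition (the minimizer) using one realization of $\Phi$. The hypothesis $\gamma \in [0,b]$ is precisely what is needed for the internal distortion rate $e = \gamma/b$ used inside Lemma~\ref{lem: erm_fix_margin_and_composition} to lie in $[0,1]$, so no rescaling or additional case analysis is required. In short, the corollary is essentially a \emph{restatement} of Lemma~\ref{lem: erm_fix_margin_and_composition} after recognizing that the size-minimizing margin inlier $S^*_\gamma$ witnesses the min, and the only calculation beyond invoking that lemma is the one-line verification that the $S_{\mathrm{out}}$-independent term factors out of the min.
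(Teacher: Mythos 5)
Your proposal is correct and matches the paper's (implicit) argument: the paper likewise obtains the corollary by invoking Lemma~\ref{lem: erm_fix_margin_and_composition} on the decomposition induced by a largest margin inlier set $S^*_\gamma \in \arg\max_{S'\in\cS_{\mathrm{in}}(\gamma)}|S'|$ (exactly as in the proof of Lemma~\ref{lem: margin_preserve}), so the smallest $|S_{\mathrm{out}}|$ witnesses the minimum and no union bound over $\cS_{\mathrm{out}}(\gamma)$ is needed. Your observation that only the $b|S_{\mathrm{out}}|/(\gamma n)$ term depends on the decomposition, and that $\gamma\in[0,b]$ ensures the internal distortion rate $e=\gamma/b$ is admissible, is precisely the one-line bookkeeping the paper leaves implicit.
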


\subsection{High Probability Guarantee, Lemma~\ref{lem: erm_fix_margin}}\label{apx: pf_hinge_whp}

Using Lemma~\ref{lem: whp_erm_last_iterate}, we can also establish the high-probability version of Lemma~\ref{lem: erm_fix_margin_and_composition}.
\begin{lemma}[High probability convergence bound, Lemma~\ref{lem: erm_fix_margin}]\label{lem: whp_erm_margin}
    Under the same algorithmic conditions as Corollary~\ref{cor: exp_conv_jlgd}, with probability at least \( 1 - 2\beta \), the last-iterate estimator \(\*w\) from Algo.~\ref{alg: jl_dp_gd} satisfies:
    \begin{equation*}
        \Tilde{L}_{\gamma/3} (\*w ; \Phi S) \leq 
        \min_{S_{\mathrm{out}} \in \dS_{\mathrm{out}}(\gamma)} \bigO \left(
        \frac{b^2\log^{1/2}(1/\beta)\log^{1/2}((n+1)(n+2)/\beta)}{n  \gamma^2 \mu} 
        + \frac{b|S_{\mathrm{out}}|}{\gamma n} \right)
    \end{equation*}
\end{lemma}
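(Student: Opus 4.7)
The plan is to follow the structure of the proof of Lemma~\ref{lem: erm_fix_margin_and_composition} verbatim, but replace the in-expectation convergence result (Lemma~\ref{lem: erm_pngd}) with the high-probability last-iterate convergence result (Lemma~\ref{lem: whp_erm_last_iterate}), and then take a union bound over the JL event and the optimization event.

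Concretely, I would proceed as follows. First, invoke the margin preservation lemma (Lemma~\ref{lem: margin_preserve}, which is the restriction of Lemma~\ref{lem: subset_margin_preserve} to distortion $e=\gamma/b$ applied to a maximal $\gamma$-separable subset $S^*_\gamma \in \argmax_{S' \in \cS_{\mathrm{in}}(\gamma)} |S'|$). This produces, with probability $\geq 1-\beta$ over $\Phi$, an event $E_1$ under which $\gamma(\Phi S^*_\gamma) \geq \gamma/3$, so the unit-norm max-margin separator $\mathring{\*w}_{\mathrm{in}} \in \mathbb{R}^k$ of $\Phi S^*_\gamma$ achieves zero $(\gamma/3)$-hinge loss on every point of $\Phi S^*_\gamma$. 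Second, run $\angd$ inside $\ajlgd$ with reference point $\*w_{\mathrm{ref}}=\mathring{\*w}_{\mathrm{in}}$, initialization $\*w_0=0$, and the step size prescribed in Lemma~\ref{lem: whp_erm_last_iterate}. Then, with probability $\geq 1-\beta$ over the algorithmic randomness, the last iterate $\*w$ satisfies
\begin{equation*}
\hat{L}_{\gamma/3}(\*w;\Phi S) - \hat{L}_{\gamma/3}(\mathring{\*w}_{\mathrm{in}};\Phi S)
\leq \bigO\!\left(\frac{\|\mathring{\*w}_{\mathrm{in}}\|\,\Delta\,\sqrt{k\log(1/\beta)}}{\mu}\right),
\end{equation*}
call this event $E_2$.

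Third, I would split the reference loss as in the proof of Lemma~\ref{lem: erm_fix_margin_and_composition}: under $E_1$,
\begin{equation*}
\hat{L}_{\gamma/3}(\mathring{\*w}_{\mathrm{in}};\Phi S)
= \underbrace{\hat{L}_{\gamma/3}(\mathring{\*w}_{\mathrm{in}};\Phi S^*_\gamma)}_{=0}
+ \hat{L}_{\gamma/3}(\mathring{\*w}_{\mathrm{in}};\Phi(S\setminus S^*_\gamma))
\leq \tfrac{23 b\,|S\setminus S^*_\gamma|}{5\gamma},
\end{equation*}
where the last inequality uses $\|\mathring{\*w}_{\mathrm{in}}\|=1$ together with the JL norm bound (Lemma~\ref{lem: bas_jl}(a)) plus the clipping to $\mathcal{B}^k(2b)$. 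Plugging in $\Delta = \bigO(b/\gamma)$ (Lemma~\ref{lem: sens_hinge}) and $k = \bigO(b^2 \log((n+1)(n+2)/\beta)/\gamma^2)$, dividing through by $n$ to switch from summed to averaged loss, and taking a union bound over $E_1 \cap E_2$ gives the stated high-probability bound for the choice $S_{\mathrm{out}} = S\setminus S^*_\gamma$. Finally, since $|S\setminus S^*_\gamma| = \min_{S_{\mathrm{out}} \in \cS_{\mathrm{out}}(\gamma)} |S_{\mathrm{out}}|$ by construction of $S^*_\gamma$, this minimum is automatically achieved, yielding the $\min_{S_{\mathrm{out}} \in \cS_{\mathrm{out}}(\gamma)}$ form of the bound.

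\textbf{Main obstacle.} The proof is essentially mechanical once Lemma~\ref{lem: whp_erm_last_iterate} is available, so the only real bookkeeping hurdle is making sure the $\sqrt{\log(1/\beta)}$ factor from the high-probability convergence bound combines cleanly with the $\log^{1/2}((n+1)(n+2)/\beta)$ factor already present from JL, producing the product $\log^{1/2}(1/\beta)\log^{1/2}((n+1)(n+2)/\beta)$ in the final expression, and that the learning rate in Lemma~\ref{lem: whp_erm_last_iterate} (which differs from the in-expectation rate by the extra $\log(1/\beta)$ in the denominator) is used throughout the setup so that there is no mismatch between the optimization guarantee and the algorithm actually invoked inside $\ajlgd$.
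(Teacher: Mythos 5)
Your proposal matches the paper's own proof essentially step for step: it conditions on the same margin-preservation event, swaps Lemma~\ref{lem: erm_pngd} for the high-probability last-iterate bound of Lemma~\ref{lem: whp_erm_last_iterate} with the $\log(1/\beta)$-adjusted step size, uses the identical inlier/outlier split of $\hat{L}_{\gamma/3}(\mathring{\*w}_{\mathrm{in}};\Phi S)$ with the same bounds on $\Delta$ and $k$, and finishes with the same union bound and reduction to the minimal outlier set. The argument is correct and no meaningful deviation from the paper's route is present.
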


\begin{proof}
    Let the projected dataset be denoted as \(\Phi S := \{(\Proj{\mathcal{B}^k(2b)}{\Phi \*x_i}, y_i)\}_{i=1}^n\). Given the assumption that \(\mg(S_{\mathrm{in}}) \geq \gamma\), we apply Lemma~\ref{lem: subset_margin_preserve} with a distortion rate \(e = \gamma/b\) and \(k = \bigO\left(\frac{\log((n+2)(n+1)/\beta)}{e^2}\right)\). This ensures that \(\mg(\Phi S_{\mathrm{in}}) \geq \frac{\gamma}{3}\) with probability at least \(1 - \beta\). We denote this event as \(E_1\). Let \(\mathring{\*w}_{\mathrm{in}} \in \mathbb{R}^k\) be the normalized max-margin separator for \(\Phi S_{\mathrm{in}}\). Under event \(E_1\), we have \(\ell_{\gamma/3}(\mathring{\*w}_{\mathrm{in}}; (\Phi \*x, y)) = 0\) for any \((\Phi \*x, y) \in \Phi S_{\mathrm{in}}\).

    We now initialize \(\angd\) with the loss function \(\ell = \ell_{\gamma/3}\), dataset \(\Phi S\), GDP budget \(\mu\), reference point \(\*w_{\mathrm{ref}} = \mathring{\*w}_{\mathrm{in}}\), initialization point \(\*w_0 = 0\), and learning rate \(\eta = \sqrt{\frac{\|\*w_{\mathrm{ref}} - \*w_0\|^2}{T(n^2\Delta^2 + d\sigma^2 \log(1/\beta))}} = \sqrt{\frac{1}{T(n^2\Delta^2 + d\sigma^2 \log(1/\beta))}}\). Let \(\*w\) denote the last-iterate estimator obtained from \(\angd\) (Algo.~\ref{alg: dp_gd}). Conditioned on the margin-preserving event and the successful execution of \(\angd\), which occurs with probability at least \(1 - 2\beta\), we have:

    \begin{equation*}
        \begin{aligned}
            \hat{L}_{\gamma/3}(\*w;\Phi S) &\leq \bigO \left( \hat{L}_{\gamma/3}(\mathring{\*w}_{\mathrm{in}};\Phi S_{\mathrm{in}}) + \hat{L}_{\gamma/3}(\mathring{\*w}_{\mathrm{in}};\Phi S_{\mathrm{out}}) + \frac{ \Delta \sqrt{k\log(1/\beta)}}{\mu} \right) &&\note{\text{Lemma~\ref{lem: whp_erm_last_iterate}}}\\
            &= \bigO \left( \hat{L}_{\gamma/3}(\mathring{\*w}_{\mathrm{in}};\Phi S_{\mathrm{out}}) + \frac{ \Delta \sqrt{k\log(1/\beta)}}{\mu} \right)&& \note{\text{under margin preservation event $E_1$}}\\
            &\leq \bigO \left(\frac{b|S_{\mathrm{out}}|}{\gamma} + \frac{b \sqrt{\log(1/\beta)}}{\mu\gamma}\cdot\frac{b\sqrt{\log((n+1)(n+2)/\beta)}}{\gamma}  \right) \\
            &= \bigO \left( \frac{b|S_{\mathrm{out}}|}{\gamma} + \frac{b^2 \sqrt{\log(1/\beta)\log((n+1)(n+2)/\beta)}}{\mu\gamma^2} \right) \\
        \end{aligned}
    \end{equation*}
    Finally, dividing both sides by \(n\) and applying margin preservation Lemma~\ref{lem: margin_preserve} we obtain, with probability at least \(1 - 2\beta\):
    \begin{equation*}
        \Tilde{L}_{\gamma/3} (\*w ; \Phi S)
        \leq  
        \min_{S_{\mathrm{out}} \in \dS_{\mathrm{out}}(\gamma)} \bigO \left( 
        \frac{b^2 \log^{1/2}(1/\beta)\log^{1/2}((n+1)(n+2)/\beta)}{n \mu \gamma^2} 
        + \frac{b|S_{\mathrm{out}}|}{\gamma n} \right)
    \end{equation*}
\end{proof}

\section{Privacy Analysis for $\arta$ (Algo.~\ref{alg: rta})}
\subsection{Technical lemmas}
We first present a conversion lemma from Gaussian DP to approximate DP:
\begin{lemma}[GDP to approximate DP conversion]\label{lem: gdp_to_adp}
Suppose mechanism $\cM$ satisfies $\mu$-GDP, then $\cM$ is also $(\varepsilon, \delta)$-DP with
\begin{equation*}
    \varepsilon \leq \frac{\mu^2}{2} + \mu \sqrt{2\log(1/\delta)}
\end{equation*}
\end{lemma}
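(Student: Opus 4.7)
The plan is to reduce the conversion to a direct computation of the hockey-stick divergence between two one-dimensional Gaussians. By the definition of $\mu$-GDP, for any neighboring datasets $X, X^\prime$ and any $t \in \mathbb{R}$, $H_{e^t}(\cM(X) \| \cM(X^\prime)) \leq H_{e^t}(\cN(0,1) \| \cN(\mu, 1))$. Since $(\varepsilon,\delta)$-DP is equivalent to the condition $H_{e^\varepsilon}(\cM(X) \| \cM(X^\prime)) \leq \delta$ for every neighboring pair (this is the standard hockey-stick characterization of approximate DP), it suffices to prove that $H_{e^\varepsilon}(\cN(0,1) \| \cN(\mu,1)) \leq \delta$ whenever $\varepsilon \geq \mu^2/2 + \mu\sqrt{2\log(1/\delta)}$.

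First I would write the Radon--Nikodym derivative $\frac{d\cN(0,1)}{d\cN(\mu,1)}(x) = \exp(-\mu x + \mu^2/2)$ and identify the super-level set $\{x : \log(dP/dQ)(x) \geq \varepsilon\} = \{x \leq \mu/2 - \varepsilon/\mu\}$. Integrating $(p(x) - e^\varepsilon q(x))_+$ over this set yields the classical closed form
\[ H_{e^\varepsilon}(\cN(0,1) \,\|\, \cN(\mu, 1)) = \Phi\!\left(\tfrac{\mu}{2} - \tfrac{\varepsilon}{\mu}\right) - e^\varepsilon\, \Phi\!\left(-\tfrac{\mu}{2} - \tfrac{\varepsilon}{\mu}\right). \]

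Finally I would bound the right-hand side. Since $e^\varepsilon \Phi(-\mu/2 - \varepsilon/\mu) \geq 0$, discarding it only weakens the inequality. For $\varepsilon \geq \mu^2/2$ the argument $\mu/2 - \varepsilon/\mu$ is nonpositive, so the standard tail bound $\Phi(-s) \leq \tfrac{1}{2}\exp(-s^2/2)$ for $s \geq 0$, applied with $s = \varepsilon/\mu - \mu/2$, gives
\[ H_{e^\varepsilon}(\cN(0,1) \,\|\, \cN(\mu,1)) \leq \tfrac{1}{2}\exp\!\left(-\tfrac{1}{2}\left(\tfrac{\varepsilon}{\mu} - \tfrac{\mu}{2}\right)^2\right). \]
Plugging in $\varepsilon = \mu^2/2 + \mu\sqrt{2\log(1/\delta)}$ makes $s = \sqrt{2\log(1/\delta)}$, and the right-hand side collapses to $\tfrac{1}{2}\delta \leq \delta$. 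The only real obstacle is bookkeeping the tail-bound regime (verifying $s \geq 0$ so that the one-sided Mills-ratio inequality applies and $\sqrt{2\log(1/\delta)} \geq 0$, and observing that dropping the $-e^\varepsilon\Phi(\cdot)$ term is the only lossiness in the chain); modulo invoking the duality between GDP's hypothesis-testing formulation and the hockey-stick characterization of $(\varepsilon,\delta)$-DP, the rest is a standard one-dimensional Gaussian calculation.
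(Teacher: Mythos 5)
Your proof is correct and follows the same route as the paper: reduce via the GDP definition to bounding the hockey-stick divergence between $\cN(0,1)$ and $\cN(\mu,1)$, then evaluate that divergence in closed form and apply a Gaussian tail bound. The paper merely delegates the one-dimensional Gaussian computation to Lemma~5.5 of \citet{krw24} with $\varepsilon = \frac{\mu^2}{2} + \mu\sqrt{2\log(1/\delta)}$, whereas you carry it out explicitly; your only slight imprecision is calling the discarded $-e^\varepsilon\Phi(\cdot)$ term ``the only lossiness,'' since the tail bound $\Phi(-s)\le\tfrac12 e^{-s^2/2}$ is also not tight, but this has no bearing on correctness.
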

\begin{proof}
By the definition of Gaussian Differential Privacy, we know that $\cM$ is dominated by the Gaussian mechanism with a sensitivity of $1$ and a noise parameter $\sigma = 1 / \mu$, i.e.
\begin{equation*}
    H_{e^{\varepsilon}}(\cM(X) || \cM(X^\prime) ) \leq H_{e^{\varepsilon}} ( \cN (0,1) || \cN (\mu,1)) = H_{e^{\varepsilon}} ( \cN (0,1/\mu^2) || \cN (1,1/\mu^2)) 
\end{equation*}
The rest of the proof follows the same procedure as in the proof of Lemma 5.5 in \citep{krw24}, with $\varepsilon$ set to be $\frac{\mu^2}{2} + \mu \sqrt{2 \log(1 / \delta)}$.
\end{proof}

Under a high privacy regime, we have a cleaner upper bound for $\varepsilon$:
\begin{corollary}\label{cor: gdp2}
    Under the same condition as Lemma~\ref{lem: gdp_to_adp}, for any fixed $\delta>0$ if further assume $\mu \leq 2\sqrt{2\log(1/\delta)}$, then $\cM$ satisfies $\left( 2\mu\sqrt{2\log(1/\delta)}, \delta \right)$-DP. 
\end{corollary}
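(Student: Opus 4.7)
The plan is to invoke Lemma~\ref{lem: gdp_to_adp} directly and then simplify the resulting bound on $\varepsilon$ using the assumed upper bound on $\mu$. First, I would apply Lemma~\ref{lem: gdp_to_adp} to conclude that $\cM$ is $(\varepsilon,\delta)$-DP with
\[
    \varepsilon \;\leq\; \frac{\mu^2}{2} + \mu\sqrt{2\log(1/\delta)}.
\]
Next, I would use the hypothesis $\mu \leq 2\sqrt{2\log(1/\delta)}$ to absorb the quadratic term into the linear one via
\[
    \frac{\mu^2}{2} \;=\; \frac{\mu}{2}\cdot \mu \;\leq\; \frac{\mu}{2}\cdot 2\sqrt{2\log(1/\delta)} \;=\; \mu\sqrt{2\log(1/\delta)},
\]
and then add this back to the linear term to obtain $\varepsilon \leq 2\mu\sqrt{2\log(1/\delta)}$, which is exactly the claimed $(2\mu\sqrt{2\log(1/\delta)},\delta)$-DP guarantee.

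There is essentially no deep obstacle here: the corollary is a purely algebraic restatement of Lemma~\ref{lem: gdp_to_adp} tailored to the high-privacy regime in which the rest of the paper operates, so that later composition and hyperparameter-tuning bounds can be stated without carrying the awkward $\mu^2/2$ term. The only thing worth double-checking is that the numerical constant is consistent: the chosen threshold $\mu \leq 2\sqrt{2\log(1/\delta)}$ is precisely the value at which $\mu^2/2$ equals $\mu\sqrt{2\log(1/\delta)}$, which is why the leading factor in the conclusion is exactly $2$ and cannot be improved without either shrinking the admissible range of $\mu$ or performing a more refined analysis of the hockey-stick divergence inside Lemma~\ref{lem: gdp_to_adp} itself.
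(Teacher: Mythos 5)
Your proof is correct and matches the intended argument: apply Lemma~\ref{lem: gdp_to_adp} and absorb the quadratic term via $\mu^2/2 \leq \mu\sqrt{2\log(1/\delta)}$ under the assumption $\mu \leq 2\sqrt{2\log(1/\delta)}$, yielding $\varepsilon \leq 2\mu\sqrt{2\log(1/\delta)}$. This is exactly the (essentially only) route the paper takes for this corollary.
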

\subsection{Privacy analysis for $\arta$}
It's equivalent to analyzing the privacy guarantee for compositions between GDP mechanisms: 
\begin{lemma}[Privacy for $\arta$ (Algo.~\ref{alg: rta})]\label{lem: priv_rta}
Given any $\delta>0$ and $\varepsilon \leq 8\log(1/\delta)$, let $\{\cM_{i}\}_{i=1}^m$ being a sequence of mechanisms with $\cM_i$ satisfing $\frac{\varepsilon}{2\sqrt{2m\log(1/\delta)}}$-GDP. The composed mechanism $\cM:=\cM_m \circ \ldots \circ \cM_1$ satisfies $(\varepsilon, \delta)$-DP 
\end{lemma}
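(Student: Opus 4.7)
\textbf{Proof plan for Lemma~\ref{lem: priv_rta}.} The plan is to compose the GDP guarantees first, then convert to approximate DP using the tools already provided. Concretely, I would proceed in three short steps.

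First, I invoke the clean GDP composition rule (recalled in the remark after the GDP definition): the composition of $m$ mechanisms, each satisfying $\mu_i$-GDP, is $\bigl(\sqrt{\sum_{i=1}^{m} \mu_i^2}\bigr)$-GDP. Since every $\cM_i$ is $\mu_i$-GDP with $\mu_i = \frac{\varepsilon}{2\sqrt{2m\log(1/\delta)}}$, the composed mechanism $\cM = \cM_m \circ \cdots \circ \cM_1$ is $\mu$-GDP with
\[
\mu \;=\; \sqrt{m}\cdot\frac{\varepsilon}{2\sqrt{2m\log(1/\delta)}} \;=\; \frac{\varepsilon}{2\sqrt{2\log(1/\delta)}}.
\]

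Second, I check the hypothesis of Corollary~\ref{cor: gdp2}, namely $\mu \leq 2\sqrt{2\log(1/\delta)}$. Plugging in the value of $\mu$, this reduces to the inequality $\varepsilon \leq 8\log(1/\delta)$, which is exactly the assumption of the lemma. Hence Corollary~\ref{cor: gdp2} applies and $\cM$ is $(2\mu\sqrt{2\log(1/\delta)},\,\delta)$-DP.

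Third, I simplify the $\varepsilon$-parameter: $2\mu\sqrt{2\log(1/\delta)} = 2\cdot\frac{\varepsilon}{2\sqrt{2\log(1/\delta)}}\cdot\sqrt{2\log(1/\delta)} = \varepsilon$, so $\cM$ is $(\varepsilon,\delta)$-DP, as claimed. There is essentially no obstacle; the only thing to be careful about is verifying that the calibration of $\mu_i$ was chosen precisely so that (a) the $\sqrt{m}$ factor from GDP composition cancels the $\sqrt{m}$ in the denominator of $\mu_i$, and (b) the resulting aggregate $\mu$ is just small enough to fall into the high-privacy regime where the cleaner Corollary~\ref{cor: gdp2} (rather than the general Lemma~\ref{lem: gdp_to_adp}) yields the target $\varepsilon$ without the additional $\mu^2/2$ term. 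The assumption $\varepsilon \leq 8\log(1/\delta)$ is exactly what makes both of these align.
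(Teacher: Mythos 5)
Your proof is correct and follows exactly the same route as the paper: compose the $m$ GDP guarantees to obtain $\frac{\varepsilon}{2\sqrt{2\log(1/\delta)}}$-GDP, then convert via Corollary~\ref{cor: gdp2}. You spell out the verification of the hypothesis $\mu \leq 2\sqrt{2\log(1/\delta)}$ and the final arithmetic in more detail than the paper, but the argument is identical.
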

\begin{proof}
    By adaptive composition of Gaussian DP, we have: $\cM$ satisfies $\frac{\varepsilon}{2\sqrt{2\log(1/\delta)}}$-GDP. Thus, applying Corollary~\ref{cor: gdp2}, we have $\cM$ satisfies $(\varepsilon, \delta)$-DP 
\end{proof}

\section{Utility analysis for $\arta$(Algo.~\ref{alg: rta})}
In this section, we consider a generalized version of $\arta$. Namely, we have noisy observations $\{U_\theta + \xi_\theta\}_{\theta\in\Theta}$, and $\theta_{\mathrm{out}}=\argmin\limits_{\theta\in\Theta}\{U_\theta + \xi_{\theta}\}$, where $\xi_\theta \overset{\mathrm{iid}}{\sim} N(0,\sigma^2)$. We are interested in bounding the deviation between $U_{\theta_{\mathrm{out}}}$ and $\min\limits_{\theta \in \Theta} U_\theta$ .

\subsection{Expected guarantee}

\begin{lemma}[Expected guarantee]\label{lem: exp_rta}
    For possibly random $\{U_{\theta}\}_{\theta \in \Theta}$, we have:
    \begin{equation*}
        \mathbb{E}_{\xi}[U_{\theta_{\mathrm{out}}}] \leq \min\limits_{\theta\in\Theta}U_{\theta} + 2\sigma\sqrt{2\log(|\Theta|)}
    \end{equation*}
    where the expectation is taken over the randomness of Gaussian noise
\end{lemma}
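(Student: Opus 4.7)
The approach is a standard ``noisy argmin'' utility argument: compare the noisy-selected value against the value at the (random but $\xi$-independent) true minimizer, and control the gap by the range of the Gaussians.

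Specifically, let $\theta^{*} \in \argmin_{\theta \in \Theta} U_\theta$. Since the expectation in the statement is taken only over $\xi$, we may condition on $\{U_\theta\}_{\theta \in \Theta}$, making $\theta^{*}$ deterministic (and therefore $\xi_{\theta^{*}}$ a centered Gaussian). By the definition of $\theta_{\mathrm{out}} = \argmin_{\theta \in \Theta}(U_\theta + \xi_\theta)$, we have the one-line optimality comparison
\begin{equation*}
    U_{\theta_{\mathrm{out}}} + \xi_{\theta_{\mathrm{out}}} \;\leq\; U_{\theta^{*}} + \xi_{\theta^{*}},
\end{equation*}
which rearranges to $U_{\theta_{\mathrm{out}}} - U_{\theta^{*}} \leq \xi_{\theta^{*}} - \xi_{\theta_{\mathrm{out}}}$.

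Next I would upper-bound the right-hand side by the range of the noise vector, $\xi_{\theta^{*}} - \xi_{\theta_{\mathrm{out}}} \leq \max_{\theta \in \Theta} \xi_\theta - \min_{\theta \in \Theta} \xi_\theta$, which does not require any independence between $\theta^{*}$, $\theta_{\mathrm{out}}$ and the noise. Taking $\mathbb{E}_\xi$ on both sides and using the Gaussian max concentration inequality (Lemma~\ref{lem: concen_max_gau}) together with the symmetry $-\xi_\theta \overset{d}{=} \xi_\theta$, each of $\mathbb{E}_\xi[\max_\theta \xi_\theta]$ and $\mathbb{E}_\xi[-\min_\theta \xi_\theta] = \mathbb{E}_\xi[\max_\theta (-\xi_\theta)]$ is bounded by $\sigma \sqrt{2 \log |\Theta|}$. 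Summing gives the claimed $2\sigma \sqrt{2 \log |\Theta|}$ slack.

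There is essentially no hard step here; the only subtlety is bookkeeping around the randomness of $\{U_\theta\}$. Because the statement takes expectation only over $\xi$, one can treat $U$ as fixed (or equivalently condition on $U$), so that $\theta^{*}$ is deterministic in that conditional world and the one-line comparison above is valid without an independence assumption between $U$ and $\xi$. If one wanted a slightly tighter bound of $\sigma\sqrt{2\log|\Theta|}$ in the independent case, one could observe $\mathbb{E}_\xi[\xi_{\theta^{*}}] = 0$ and only pay for $\mathbb{E}_\xi[-\xi_{\theta_{\mathrm{out}}}]$; but the crude range bound already suffices for the stated inequality, and it is what will plug cleanly into the downstream private-hyperparameter-tuning analyses.
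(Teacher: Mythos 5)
Your proposal is correct and follows essentially the same route as the paper: a pointwise optimality comparison at the noisy argmin, an almost-sure bound of the gap by $\max_{\theta}\xi_\theta-\min_{\theta}\xi_\theta$, and then the Gaussian-max expectation bound with symmetry to get $2\sigma\sqrt{2\log|\Theta|}$. The only cosmetic difference is that you compare against $U_{\theta^*}+\xi_{\theta^*}$ directly while the paper compares against $\min_\theta\{U_\theta+\xi_\theta\}$, which yields the identical intermediate inequality.
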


\begin{proof}

    \begin{equation*}
        \begin{aligned}
            \min_{\theta \in \Theta} \{ U_{\theta} + \xi_{\theta} \} 
            & = U_{\theta_\mathrm{out}} + \xi_{\theta_\mathrm{out}} 
            && \note{\text{definition of $\theta_{\mathrm{out}}$}} \\
            & \geq U_{\theta_\mathrm{out}}
            + \min_{\theta \in \Theta} \xi_{\theta}
        \end{aligned}
    \end{equation*}
    which implies:
    \begin{equation}
        \begin{aligned}
            U_{\theta_\mathrm{out} }
            &\leq \min_{\theta \in \Theta} \{ U_{\theta} + \xi_{\theta} \} - \min_{\theta \in \Theta} \xi_{\theta} \\
        \end{aligned}
    \end{equation}
    
    On the other hand:
    \begin{equation*}
        \begin{aligned}
            \min_{\theta \in \Theta} \{ U_{\theta} + \xi_{\theta} \}    
            &\leq \min_{\theta \in \Theta} U_{\theta}  + \max_{\theta \in \Theta} \xi_{\theta}  \\
        \end{aligned}
    \end{equation*}
    Putting things together, we have:
    \begin{equation*}
        \begin{aligned}
            U_{\theta_\mathrm{out}}    
            &\leq \min_{\theta \in \Theta} U_{\theta} + \max_{\theta \in \Theta}\xi_{\theta}  - \min_{\theta \in \Theta} \xi_{\theta} \\
        \end{aligned}
    \end{equation*}
    Finally, by taking expectation w.r.t. Gaussian noises, we have:
    \begin{equation*}
        \begin{aligned}
            \mathbb{E}[U_{\theta_\mathrm{out}}]
            &\leq \min_{\theta \in \Theta} U_{\theta}  + \mathbb{E}[\max_{\theta \in \Theta}\xi_{\theta}]  - \mathbb{E}[\min_{\theta \in \Theta} \xi_{\theta}] \\
        \end{aligned}
    \end{equation*}
    By $\xi$ is symmetric, we have:
    \begin{equation*}
        \begin{aligned}
            \mathbb{E}[\min_{\theta \in \Theta} \xi_{\theta}] =-\mathbb{E}[\max_{\theta \in \Theta} -\xi_{\theta}] =-\mathbb{E}[\max_{\theta \in \Theta} \xi_{\theta}]
        \end{aligned}
    \end{equation*}
    Thus, 
    \begin{equation*}
        \begin{aligned}
            \mathbb{E}[U_{\theta_\mathrm{out}}]
            &\leq \min_{\theta \in \Theta} U_{\theta} + 2\mathbb{E}[\max_{\theta \in \Theta}\xi_{\theta}] \\
            &\leq \min_{\theta \in \Theta} U_{\theta} + 2\sigma\sqrt{2\log(|\Theta|)} \\
        \end{aligned}
    \end{equation*}
    where the last inequality is by Lemma~\ref{lem: concen_max_gau}.
\end{proof}

\subsection{High probability guarantee}
\begin{lemma}[High probability guarantee]\label{lem: whp_rta}
    For possibly random $\{U_{\theta}\}_{\theta \in \Theta}$, w.p. at least $1-\beta$ over the randomness of Gaussians:
    \begin{equation*}
        U_{\theta_{\mathrm{out}}} \leq \min\limits_{\theta\in\Theta}U_{\theta} + 2\sigma\sqrt{2\log(2|\Theta|)} + 2\sqrt{2\log(2/\beta)}
    \end{equation*}
    where the expectation is taken over the randomness of Gaussian noise
\end{lemma}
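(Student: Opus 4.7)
\textbf{Proof plan for Lemma~\ref{lem: whp_rta}.} The plan is to follow the same decomposition used in the expected-value version (Lemma~\ref{lem: exp_rta}) and then replace the in-expectation control of the Gaussian extremes with a high-probability control supplied by Corollary~\ref{lem: concen_range_gau}.

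First I would repeat the two deterministic sandwich inequalities from the proof of Lemma~\ref{lem: exp_rta}. By definition of $\theta_{\mathrm{out}}$,
\begin{equation*}
    \min_{\theta \in \Theta} \{U_\theta + \xi_\theta\} = U_{\theta_{\mathrm{out}}} + \xi_{\theta_{\mathrm{out}}} \geq U_{\theta_{\mathrm{out}}} + \min_{\theta \in \Theta} \xi_\theta,
\end{equation*}
while on the other hand
\begin{equation*}
    \min_{\theta \in \Theta} \{U_\theta + \xi_\theta\} \leq \min_{\theta \in \Theta} U_\theta + \max_{\theta \in \Theta} \xi_\theta.
\end{equation*}
Chaining these yields the pathwise (i.e.\ non-probabilistic) inequality
\begin{equation*}
    U_{\theta_{\mathrm{out}}} \leq \min_{\theta \in \Theta} U_\theta + \Big(\max_{\theta \in \Theta}\xi_\theta - \min_{\theta \in \Theta}\xi_\theta\Big),
\end{equation*}
which holds for every realization of the Gaussian noises and crucially does not require any independence from $\{U_\theta\}$.

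Next I would bound the range of the i.i.d.\ Gaussians $\{\xi_\theta\}_{\theta \in \Theta}$ in high probability. Since $|\Theta|$ plays the role of $n$, Corollary~\ref{lem: concen_range_gau} gives
\begin{equation*}
    \mathbb{P}\!\left(\max_{\theta \in \Theta}\xi_\theta - \min_{\theta \in \Theta}\xi_\theta \;\geq\; 2\sigma\sqrt{2\log(2|\Theta|)} + 2\sqrt{2\log(2/\beta)}\right) \leq \beta.
\end{equation*}
On the complementary event, combining with the deterministic inequality above yields exactly
\begin{equation*}
    U_{\theta_{\mathrm{out}}} \leq \min_{\theta \in \Theta} U_\theta + 2\sigma\sqrt{2\log(2|\Theta|)} + 2\sqrt{2\log(2/\beta)},
\end{equation*}
as claimed.

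There is no real obstacle: the argument is essentially a pathwise reduction followed by a black-box application of an already-proven Gaussian concentration bound. The only point that might look subtle is that the $\{U_\theta\}$ are allowed to be random and possibly correlated with each other, but because the two sandwich inequalities hold for every outcome, the probability bound on the noise range transfers directly to a probability bound on $U_{\theta_{\mathrm{out}}}$ without any union-bound or independence assumption.
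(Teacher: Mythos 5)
Your proposal is correct and matches the paper's own proof: the paper likewise reuses the pathwise sandwich inequality $U_{\theta_{\mathrm{out}}} \leq \min_{\theta\in\Theta} U_\theta + \max_{\theta\in\Theta}\xi_\theta - \min_{\theta\in\Theta}\xi_\theta$ from the expected-value lemma and then invokes Corollary~\ref{lem: concen_range_gau} on the Gaussian range. Your added remark that the bound holds without independence or a union bound over $\{U_\theta\}$, because the sandwich holds for every realization, is exactly the right justification.
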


\begin{proof}
    Similarly, as the proof in the previous Lemma, we have:
    \begin{equation*}
        \begin{aligned}
            U_{\theta_\mathrm{out}}    
            &\leq \min_{\theta \in \Theta} U_{\theta} + \max_{\theta \in \Theta}\xi_{\theta}  - \min_{\theta \in \Theta} \xi_{\theta} \\
        \end{aligned}
    \end{equation*}
    The remaining proof is by applying Lemma~\ref{lem: concen_range_gau}.
\end{proof}

\section{Proof of Lemma~\ref{lem: grid_appx}}\label{apx: pf_grid_appx}

\begin{lemma}\label{lem: lem_grid}
Given dataset $S \in (\cB^d(1) \times \{\pm 1\})^n$, for any $\gamma \in [1/n,1]$, there exist $\tilde{\gamma}$ from doubling set $\Gamma=\{1/n, 2/n, 4/n, ..., 1\}$, such that:
    \begin{equation}
        \min\limits_{S_{\mathrm{out}}\in\cS_{\mathrm{out}}(\Tilde{\gamma})}\left(\frac{|S_{\mathrm{out}}|}{n \tilde{\gamma}}  + \frac{1}{n \tilde{\gamma}^2 \varepsilon} \right) \leq  \min\limits_{S_{\mathrm{out}}\in\cS_{\mathrm{out}}(\gamma)} 4\left(\frac{|S_{\mathrm{out}}|}{n \gamma}  + \frac{1}{n \gamma^2 \varepsilon} \right)
    \end{equation}
\end{lemma}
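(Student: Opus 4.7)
The plan is to show that snapping $\gamma$ down to the nearest grid point $\tilde{\gamma}\in\Gamma$ only costs a constant factor, by exploiting the monotonicity of $\cS_{\mathrm{out}}(\cdot)$ in its argument. Concretely, since $\Gamma=\{1/n,2/n,4/n,\ldots,2^{\lfloor\log_2 n\rfloor}/n,1\}$ is a doubling grid covering $[1/n,1]$, every $\gamma\in[1/n,1]$ admits some $\tilde{\gamma}\in\Gamma$ with $\tilde{\gamma}\leq\gamma\leq 2\tilde{\gamma}$ (take the largest grid point $\leq\gamma$; the final gap up to $1$ is also at most a factor of $2$).

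The key structural observation is that if $\tilde{\gamma}\leq\gamma$, then every $\gamma$-separable subset is also $\tilde{\gamma}$-separable, i.e.\ $\cS_{\mathrm{in}}(\gamma)\subseteq\cS_{\mathrm{in}}(\tilde{\gamma})$, and therefore by Definition~\ref{def: mg_inlier},
\begin{equation*}
\cS_{\mathrm{out}}(\gamma)\;\subseteq\;\cS_{\mathrm{out}}(\tilde{\gamma}).
\end{equation*}
Consequently, any minimizer $S_{\mathrm{out}}^\star\in\arg\min_{S_{\mathrm{out}}\in\cS_{\mathrm{out}}(\gamma)}\bigl(\tfrac{|S_{\mathrm{out}}|}{n\gamma}+\tfrac{1}{n\gamma^2\varepsilon}\bigr)$ is a feasible point for the minimization on the left-hand side at grid value $\tilde{\gamma}$.

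From here the proof is a one-line algebra step. Using $S_{\mathrm{out}}^\star$ as a feasible candidate and the inequalities $\tfrac{1}{\tilde{\gamma}}\leq\tfrac{2}{\gamma}$ and $\tfrac{1}{\tilde{\gamma}^2}\leq\tfrac{4}{\gamma^2}$ (both from $\tilde{\gamma}\geq\gamma/2$),
\begin{equation*}
\min_{S_{\mathrm{out}}\in\cS_{\mathrm{out}}(\tilde{\gamma})}\!\left(\frac{|S_{\mathrm{out}}|}{n\tilde{\gamma}}+\frac{1}{n\tilde{\gamma}^2\varepsilon}\right)
\leq \frac{|S_{\mathrm{out}}^\star|}{n\tilde{\gamma}}+\frac{1}{n\tilde{\gamma}^2\varepsilon}
\leq \frac{2|S_{\mathrm{out}}^\star|}{n\gamma}+\frac{4}{n\gamma^2\varepsilon}
\leq 4\!\left(\frac{|S_{\mathrm{out}}^\star|}{n\gamma}+\frac{1}{n\gamma^2\varepsilon}\right),
\end{equation*}
which is exactly the claim. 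There is essentially no hard step: the only conceptual point is the monotonicity $\cS_{\mathrm{out}}(\gamma)\subseteq\cS_{\mathrm{out}}(\tilde{\gamma})$ for $\tilde{\gamma}\leq\gamma$, which lets us transfer the minimizer between the two feasible sets; the constant $4$ is tight (dominated by the $1/\gamma^2$ term) and cannot be reduced below $2$ by any grid with ratio $2$. The subsequent Lemma~\ref{lem: grid_appx} will then follow by minimizing the right-hand side over all $(\gamma,S_{\mathrm{out}})$ with $\gamma=\mathrm{margin}(S\setminus S_{\mathrm{out}})>0$, combined with the truncation by $1$.
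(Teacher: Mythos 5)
Your proof is correct and follows essentially the same argument as the paper's: choose the grid point $\tilde{\gamma}$ with $\gamma/2\leq\tilde{\gamma}\leq\gamma$, use monotonicity of the outlier family in the margin parameter (the paper states it as $\min_{\tilde S\in\cS_{\mathrm{out}}(\tilde\gamma)}|\tilde S|\leq\min_{\tilde S\in\cS_{\mathrm{out}}(\gamma)}|\tilde S|$, you state it as the equivalent set inclusion $\cS_{\mathrm{out}}(\gamma)\subseteq\cS_{\mathrm{out}}(\tilde\gamma)$), then absorb the factors $1/\tilde\gamma\leq 2/\gamma$ and $1/\tilde\gamma^{2}\leq 4/\gamma^{2}$ into the constant $4$. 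Your version is marginally cleaner in that it names the set inclusion explicitly and instantiates a single feasible candidate $S_{\mathrm{out}}^\star$, but there is no substantive difference from the paper's derivation.
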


\begin{proof}
For any $S_{\mathrm{out}} \in \cS_{\mathrm{out}}(\gamma)$ with $\gamma\geq \frac{1}{n}$, by the property of the doubling set $\Gamma$ from $1/n$ to $1$, there always exists $\tilde{\gamma}\in \Gamma$ such that $\gamma/2 \leq \tilde{\gamma}\leq \gamma$.  Also, for any $S_{\mathrm{out}} \in \cS_{\mathrm{out}} (\gamma)$, observe that:
 \begin{equation*}
 |S_{\mathrm{out}}| \geq \min_{\tilde{S}\in \cS_\mathrm{out}(\gamma)} |\tilde{S}| \geq \min_{\tilde{S}\in \cS_\mathrm{out}(\tilde{\gamma})} |\tilde{S}|
 \end{equation*}
The two facts together imply that:
\begin{equation}
    \begin{aligned}
        \min_{S_{\mathrm{out}} \in \cS_{\mathrm{out}}(\Tilde{\gamma})} \left(\frac{|S_{\mathrm{out}}|}{n \tilde{\gamma}}  + \frac{1}{n \tilde{\gamma}^2 \varepsilon} \right) &\leq \min_{S_{\mathrm{out}} \in \cS_{\mathrm{out}}(\Tilde{\gamma})} \left(\frac{2|S_{\mathrm{out}}|}{n \gamma}  + \frac{4}{n \gamma^2 \varepsilon} \right) \\
        &\leq \min_{S_{\mathrm{out}} \in \cS_{\mathrm{out}}(\gamma)} 4\left(\frac{|S_{\mathrm{out}}|}{n \gamma}  + \frac{1}{n \gamma^2 \varepsilon} \right) \\
    \end{aligned}
\end{equation}
Taking the minimum on both sides implies:
\begin{equation}
    \min_{\substack{\Tilde{\gamma} \in \Gamma \\ S_{\mathrm{out}} \in \cS_{\mathrm{out}}(\Tilde{\gamma})}} \left(\frac{|S_{\mathrm{out}}|}{n \tilde{\gamma}}  + \frac{1}{n \tilde{\gamma}^2 \varepsilon} \right) 
    \leq \min_{ \substack{ \gamma \in [1/n,1] \\ S_{\mathrm{out}} \in \cS_{\mathrm{out}}(\gamma)  }} 4\left(\frac{|S_{\mathrm{out}}|}{n \gamma}  + \frac{1}{n \gamma^2 \varepsilon} \right) 
\end{equation}
\end{proof}

\subsection{Proof of Lemma~\ref{lem: grid_appx}}
\begin{proof}[Proof of Lemma~\ref{lem: grid_appx}]
W.L.O.G., we assme $\varepsilon \leq n$. $\gamma < 1/n$ implies $\frac{1}{n\gamma^2\varepsilon}>1$. Together with Lemma~\ref{lem: lem_grid}, we have:
    \begin{equation}
        \begin{aligned}
             \min\limits_{\substack{\gamma \in \Gamma\\S_{\mathrm{out}}\in\cS_{\mathrm{out}}(\gamma)}}\left(\frac{|S_{\mathrm{out}}|}{n \gamma}  + \frac{1}{n \gamma^2 \varepsilon} \right)\wedge 1 
             &\leq  \min\limits_{\substack{\gamma \in (0,1] \\ S_{\mathrm{out}} \in \cS_{\mathrm{out}}(\gamma)}} \bigO \left(\frac{|S_{\mathrm{out}}|}{n \gamma}  + \frac{1}{n \gamma^2 \varepsilon} \right) \wedge 1 \\
             &= \min\limits_{\substack{S_{\mathrm{out}} \subset S \\ \gamma:=\gamma(S \setminus S_{\mathrm{out}})>0}} \bigO \left(\frac{|S_{\mathrm{out}}|}{n \gamma}  + \frac{1}{n \gamma^2 \varepsilon} \right) \wedge 1 
        \end{aligned}
    \end{equation}
\end{proof}

We note that Lemma~\ref{lem: grid_appx} remains valid even when the data are not confined to the unit ball:
\begin{corollary}\label{cor: ada_b}
    Given dataset $S \in (\cB^d(b) \times \{\pm 1\})^n$, and doubling set $\Gamma=\{b/n, 2b/n, 4b/n, ..., b\}$, we have:
    \begin{equation}
        \min\limits_{\substack{\gamma \in \Gamma\\S_{\mathrm{out}}\in\cS_{\mathrm{out}}(\gamma)}}\left(\frac{|S_{\mathrm{out}}|}{n \gamma}  + \frac{1}{n \gamma^2 \varepsilon} \right)\wedge 1 
         \leq \min\limits_{\substack{S_{\mathrm{out}} \subset S \\ \gamma:=\gamma(S \setminus S_{\mathrm{out}})>0}} \bigO \left(\frac{|S_{\mathrm{out}}|}{n \gamma}  + \frac{1}{n \gamma^2 \varepsilon} \right) \wedge 1 
    \end{equation}
\end{corollary}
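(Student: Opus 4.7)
The plan is to mimic the proof of Lemma~\ref{lem: grid_appx} with a simple rescaling that accounts for the data norm bound $b$. The doubling grid $\Gamma = \{b/n, 2b/n, \ldots, b\}$ for the general case is obtained by multiplying the unit-ball grid by $b$, so the geometric argument used to find a nearby grid point carries over essentially verbatim, and the only work is to re-derive the analogue of Lemma~\ref{lem: lem_grid} and then handle the extreme small-margin regime.

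First, I would establish a scaled version of Lemma~\ref{lem: lem_grid}: for any $\gamma \in [b/n,\, b]$, since $\Gamma$ doubles from $b/n$ up to $b$, there exists $\tilde{\gamma} \in \Gamma$ with $\gamma/2 \leq \tilde{\gamma} \leq \gamma$. The key monotonicity observation is that $\tilde{\gamma} \leq \gamma$ implies $\cS_{\mathrm{in}}(\gamma) \subseteq \cS_{\mathrm{in}}(\tilde{\gamma})$, hence $\cS_{\mathrm{out}}(\gamma) \subseteq \cS_{\mathrm{out}}(\tilde{\gamma})$, so any outlier set valid at level $\gamma$ remains valid at level $\tilde{\gamma}$. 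Combining this containment with $\tilde{\gamma} \geq \gamma/2$ yields
\begin{equation*}
\min_{S_{\mathrm{out}} \in \cS_{\mathrm{out}}(\tilde{\gamma})} \left( \frac{|S_{\mathrm{out}}|}{n\tilde{\gamma}} + \frac{1}{n\tilde{\gamma}^2\varepsilon} \right) \leq \min_{S_{\mathrm{out}} \in \cS_{\mathrm{out}}(\gamma)} 4 \left( \frac{|S_{\mathrm{out}}|}{n\gamma} + \frac{1}{n\gamma^2\varepsilon} \right),
\end{equation*}
which is the analogue of Lemma~\ref{lem: lem_grid}. Taking the minimum over $\tilde{\gamma} \in \Gamma$ on the left and over $\gamma \in [b/n,\, b]$ (and all associated $S_{\mathrm{out}}$) on the right delivers the desired inequality for the moderate-margin regime.

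Second, I would dispose of the tail regime $\gamma < b/n$ using the $\wedge 1$ clipping on both sides, following the same trick as in the proof of Lemma~\ref{lem: grid_appx}. In that regime, $\frac{1}{n\gamma^2\varepsilon} > \frac{n}{b^2\varepsilon}$, so (after a WLOG reduction to the interesting regime $\varepsilon \leq n/b^2$) the RHS equals $1$ under the $\wedge 1$ clipping, while the LHS is automatically $\leq 1$. Hence the inequality holds trivially for such $\gamma$, and combining with the previous step closes the proof.

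The main obstacle I foresee is the bookkeeping around the boundary case $\gamma < b/n$: the original proof quietly assumed $\varepsilon \leq n$, and a clean analogue here requires an assumption like $\varepsilon \leq n/b^2$. I would handle the complementary high-$\varepsilon$ regime by noting that the privacy noise becomes subdominant and the non-private margin analysis already yields the claimed bound, so the clipping still applies. Apart from this small subtlety, no new ideas beyond the $b=1$ argument are needed — the proof is essentially a change of units in which $b/n$ plays the role of $1/n$ and $b$ plays the role of $1$.
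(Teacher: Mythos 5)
Your proposal is correct and takes essentially the same route the paper intends: Corollary~\ref{cor: ada_b} is stated without a separate proof precisely because it follows by rerunning Lemma~\ref{lem: lem_grid} and Lemma~\ref{lem: grid_appx} with $1/n$ replaced by $b/n$, and your rescaled monotonicity argument ($\cS_{\mathrm{out}}(\gamma)\subseteq\cS_{\mathrm{out}}(\tilde\gamma)$ for $\tilde\gamma\leq\gamma$, plus the factor-$4$ from $\tilde\gamma\geq\gamma/2$) and the $\wedge 1$ clipping for $\gamma<b/n$ are exactly that transfer. You also correctly spot that the paper's quiet ``WLOG $\varepsilon\leq n$'' must become ``WLOG $\varepsilon\leq n/b^2$'' here; the paper glosses over this as well, so your treatment is on par with theirs.
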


\section{Proof of Theorem~\ref{thm: hyper_tuning}, empirical risk bound for Algo.~\ref{alg: master}}\label{apx: pf_thm_erm}

\begin{theorem}[Theorem~\ref{thm: hyper_tuning} restated]
    Running $\cM^*$ with $\Tilde{\cR}_S$ satisfies $(\varepsilon, \delta)$-DP, for $\delta \in (0,1)$ and $\varepsilon \in (0,\,\, 8\log(1/\delta))$. In addition,\\
    (1) For the averaged estimator $\bar{\*w}_{\mathrm{out}} \in \mathbb{R}^d$, we have utility guarantee in expectation:
    \begin{equation*}
        \small
        \begin{aligned}
            \mathbb{E}[\Tilde{\cR}_{S}(\bar{\*w}_{\mathrm{out}})]
            \leq & \min_{\substack{S_{\mathrm{out}} \subset S \\ \gamma:=\gamma(S \setminus S_{\mathrm{out}})>0}} 
                \bigOt \left(
                \frac{1}{\gamma^2 n \varepsilon}   
                + \frac{|S_{\mathrm{out}}|}{\gamma n} + \frac{1}{n^2}\right) \wedge 1
        \end{aligned}
    \end{equation*} 

    (2) For the last-iterate estimator $\*w_{\mathrm{out}}\in\mathbb{R}^d$, w.p. at least $1-3/n^2$: 
    \begin{equation*}
        \begin{aligned}
            \Tilde{\cR}_{S}(\*w_{\mathrm{out}}) 
            \leq &\min_{\substack{S_{\mathrm{out}} \subset S \\ \gamma:=\gamma(S \setminus S_{\mathrm{out}})>0}} 
                \bigOt \left(
                \frac{1}{\gamma^2 n \varepsilon}   
                + \frac{|S_{\mathrm{out}}|}{\gamma n} + \frac{1}{n}\right) \wedge 1
        \end{aligned}
    \end{equation*}
\end{theorem}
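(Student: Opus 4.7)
\textbf{Proof proposal for Theorem~\ref{thm: hyper_tuning}.}

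The plan is to decompose the algorithm into two interleaved GDP sub-routines (the $\ajlgd$ calls and the noisy scoring of each candidate margin) and then glue their guarantees together with the hyperparameter selection lemmas and the discretization lemma. For privacy, I will observe that Algo.~\ref{alg: master} invokes $\arta$ with $|\Theta|=|\Gamma|=\ceil{\log_2 n}+1$, and inside $\arta$ each iteration performs two data-dependent operations: a call to $\ajlgd(\Phi_\gamma,\gamma/3,S,\mu/\sqrt{2|\Gamma|})$ which is $\mu/\sqrt{2|\Gamma|}$-GDP by Lemma~\ref{lem: erm_fix_margin_and_composition}, and a Gaussian-mechanism evaluation of $\tilde{\cR}_S$ (sensitivity $1/n$) with noise scale $\sqrt{2|\Gamma|}/(n\mu)$, which is also $\mu/\sqrt{2|\Gamma|}$-GDP. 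Composing $2|\Gamma|$ such sub-mechanisms adaptively yields $\sqrt{2|\Gamma|\cdot \mu^2/(2|\Gamma|)}=\mu$-GDP, and with $\mu=\varepsilon/(2\sqrt{2\log(1/\delta)})\le 2\sqrt{2\log(1/\delta)}$ (guaranteed by $\varepsilon\le 8\log(1/\delta)$), Corollary~\ref{cor: gdp2} converts this to $(\varepsilon,\delta)$-DP.

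For the expected empirical-risk bound, I would fix any $\gamma\in\Gamma$ and apply Corollary~\ref{cor: exp_conv_jlgd} (with $b=1$ and the per-run budget $\mu_{\mathrm{base}}=\mu/\sqrt{2|\Gamma|}$) to obtain, with probability at least $1-\beta$ over $\Phi_\gamma$,
\begin{equation*}
\mathbb{E}_\cA\bigl[\tilde L_{\gamma/3}(\Phi_\gamma^{\top}\bar{\*w}_\gamma;S)\bigr]
\le \min_{S_{\mathrm{out}}\in\cS_{\mathrm{out}}(\gamma)} \tilde\bigO\!\left(\frac{1}{n\gamma^2 \mu_{\mathrm{base}}}+\frac{|S_{\mathrm{out}}|}{\gamma n}\right).
\end{equation*}
Since $\ell_{0\text{-}1}\le \ell_{\gamma/3}$, the same bound controls $\mathbb{E}_\cA[\tilde{\cR}_S(\Phi_\gamma^{\top}\bar{\*w}_\gamma)]$. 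The hyperparameter selection step is then handled by Lemma~\ref{lem: exp_rta} applied with $U_\gamma=\tilde{\cR}_S(\Phi_\gamma^{\top}\bar{\*w}_\gamma)$ and noise scale $\sigma=\sqrt{2|\Gamma|}/(n\mu)$, giving a selection overhead of $2\sigma\sqrt{2\log|\Gamma|}=\tilde\bigO(1/(n\mu))$. Taking $\min$ over $\gamma\in\Gamma$ and invoking Lemma~\ref{lem: grid_appx} extends the minimization from the doubling grid to the full set of admissible $(S_{\mathrm{out}},\gamma)$ pairs. Substituting $\mu=\varepsilon/(2\sqrt{2\log(1/\delta)})$ absorbs remaining $\mathrm{polylog}(n,1/\delta)$ factors into $\tilde\bigO$, and the $\wedge 1$ is immediate because $\tilde{\cR}_S\le 1$ always. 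Since $|\Gamma|=\bigO(\log n)$ the selection overhead is $\tilde\bigO(1/(n\varepsilon))$, which is dominated by the ERM term for all relevant $\gamma$; the residual constants become the $1/n^2$ slack in the statement.

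For the high-probability bound I follow the same skeleton but swap in the last-iterate guarantee of Lemma~\ref{lem: whp_erm_margin} at failure probability $\beta_0=\bigO(1/n^3)$ per $\gamma$, so that a union bound over the $|\Gamma|$ grid points keeps the total failure probability below $1/n^2$. I then apply Lemma~\ref{lem: whp_rta} with $\beta_0$ to get a selection overhead of $2\sigma\sqrt{2\log(2|\Gamma|)}+2\sqrt{2\log(2/\beta_0)}$, which is again $\tilde\bigO(1/(n\varepsilon))$, and absorb the sub-$\bigO(1/n)$ residuals into the $1/n$ term in the theorem. A final union bound (ERM event $\le 2|\Gamma|\beta_0$, selection event $\le \beta_0$) yields total failure at most $3/n^2$. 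The approximation step via Lemma~\ref{lem: grid_appx} is identical to the expected case.

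The main obstacle is bookkeeping: the per-call budget $\mu_{\mathrm{base}}=\mu/\sqrt{2|\Gamma|}$ introduces a $\sqrt{|\Gamma|}$ loss in the ERM term while the selection noise scale $\sigma=\sqrt{2|\Gamma|}/(n\mu)$ introduces another $\sqrt{|\Gamma|}$ loss, and I must verify that both losses together with the $\sqrt{\log|\Gamma|}$ from Lemmas~\ref{lem: exp_rta} and~\ref{lem: whp_rta} and the $\sqrt{\log(1/\beta)}$ from the last-iterate bound remain inside the $\tilde\bigO$ hiding $\mathrm{polylog}(n,1/\delta,1/\beta)$. A secondary but easier subtlety is the translation between $(\Phi^{\top}\*w,S)$ and $(\*w,\Phi S)$: because $y\langle \Phi^{\top}\*w,\*x\rangle=y\langle \*w,\Phi \*x\rangle$, both the hinge loss with parameter $\gamma/3$ and the zero-one loss agree between these two views after the clipping in line~2 of Algo.~\ref{alg: jl_dp_gd}, so the convergence guarantees transfer cleanly from projected to original space.
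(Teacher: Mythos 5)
Your proposal is correct and takes essentially the same route as the paper's proof: per-margin convergence of the JL-projected noisy gradient descent (Corollary~\ref{cor: exp_conv_jlgd} in expectation, Lemma~\ref{lem: whp_erm_margin} with high probability) combined with hinge $\geq$ zero-one loss, the noisy-argmin selection lemmas (Lemma~\ref{lem: exp_rta}, Lemma~\ref{lem: whp_rta}), the doubling-grid approximation Lemma~\ref{lem: grid_appx}, and GDP composition with Corollary~\ref{cor: gdp2} for privacy, differing only in bookkeeping conventions (total vs.\ per-call budget, choice of the failure probabilities). The one point to state carefully is that in the high-probability selection step the second deviation term must carry the noise scale, i.e.\ it is $2\sigma\sqrt{2\log(2/\beta_0)}$ with $\sigma=\sqrt{2|\Gamma|}/(n\mu)$ rather than the unscaled $2\sqrt{2\log(2/\beta_0)}$, which is exactly what makes your claimed $\bigOt\left(\frac{1}{n\varepsilon}\right)$ selection overhead (and its absorption into the $\frac{1}{n}$ slack) go through.
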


The privacy analysis is the composition of GDP mechanisms, which directly follows Lemma~\ref{lem: priv_rta}. 

\subsection{Proof of Theorem~\ref{thm: hyper_tuning}, in expectation version}
We first introduce the following utility lemma:
\begin{lemma}[Expected ERM]\label{lem: main_exp_erm}
Under the condition of Theorem~\ref{thm: hyper_tuning}, let $\bar{\*w}_{\mathrm{out}}\in \mathbb{R}^d$ be the average estimator output from Algo.~\ref{alg: master}, we have:
\begin{equation*}
        \mathbb{E}_{\mathcal{A}} [\Tilde{\cR}_{S} (\bar{\*w}_{\mathrm{out}})] 
        = \min_{\gamma \in \Gamma}\min_{S_{\mathrm{out}} \in \dS_{\mathrm{out}}(\gamma)} \bigOt \left(\frac{b^2}{n \gamma^2 \varepsilon} 
        + \frac{b|S_{\mathrm{out}}|}{\gamma n} + \frac{1}{n\varepsilon} + \frac{1}{n^2}\right)  
\end{equation*} 

\end{lemma}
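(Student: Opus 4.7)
The plan is to decompose the expected risk into three contributions and combine them: (a) the per-candidate optimization error from $\ajlgd$, (b) the margin-preservation failure of the JL projection, and (c) the private-selection overhead from $\arta$.

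The privacy claim is a direct consequence of Lemma~\ref{lem: priv_rta}. By construction of $\arta$ each of the $|\Gamma|$ invocations of $\ajlgd$ is run with GDP budget $\mu/\sqrt{2|\Gamma|}$, and since $\Tilde{\cR}_S$ has $L_2$ sensitivity $1/n$, the Gaussian noise added to each score is calibrated to $\mu/\sqrt{2|\Gamma|}$-GDP as well; adaptive GDP composition then yields $\mu$-GDP, and the choice $\mu=\varepsilon/(2\sqrt{2\log(1/\delta)})$ together with Corollary~\ref{cor: gdp2} gives $(\varepsilon,\delta)$-DP.

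For the utility, fix $\gamma\in\Gamma$ and let $\bar{\*w}_\gamma$ denote the averaged output of $\ajlgd$ on that candidate. Apply Corollary~\ref{cor: exp_conv_jlgd} with projection failure probability $\beta=1/n^2$ and GDP budget $\mu/\sqrt{2|\Gamma|}$. Since $\ell_{0-1}\le \ell_{\gamma/3}$ pointwise and projection onto $\cB^k(2b)$ preserves the sign of $\langle \Tilde{\*w},\Phi\*x\rangle$, we have $\Tilde{\cR}_S(\bar{\*w}_\gamma)\le \Tilde{L}_{\gamma/3}(\Tilde{\*w}_\gamma;\Phi S)$ on the JL success event; on its complement we use $\Tilde{\cR}_S\le 1$. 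Taking expectation gives
\begin{equation*}
\mathbb{E}[\Tilde{\cR}_S(\bar{\*w}_\gamma)]\;\le\; \min_{S_{\mathrm{out}}\in\cS_{\mathrm{out}}(\gamma)}\bigOt\!\left(\frac{b^2}{n\gamma^2\mu/\sqrt{2|\Gamma|}}+\frac{b|S_{\mathrm{out}}|}{\gamma n}\right)+\frac{1}{n^2}.
\end{equation*}
Since $|\Gamma|=\lceil\log_2 n\rceil+1$ is absorbed into $\bigOt$ and $\mu=\Theta(\varepsilon/\sqrt{\log(1/\delta)})$, the first term simplifies to $\bigOt(b^2/(n\gamma^2\varepsilon))$.

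Finally, treating $U_\gamma:=\Tilde{\cR}_S(\bar{\*w}_\gamma)$ as random, apply Lemma~\ref{lem: exp_rta} conditional on the base outputs (the noise variables are independent of them), and take outer expectation:
\begin{equation*}
\mathbb{E}[\Tilde{\cR}_S(\bar{\*w}_{\mathrm{out}})]\;\le\; \mathbb{E}\!\left[\min_{\gamma\in\Gamma}\Tilde{\cR}_S(\bar{\*w}_\gamma)\right] + 2\sigma\sqrt{2\log|\Gamma|}\;\le\; \min_{\gamma\in\Gamma}\mathbb{E}[\Tilde{\cR}_S(\bar{\*w}_\gamma)]+\bigOt(1/(n\varepsilon)),
\end{equation*}
where $\sigma=\sqrt{2|\Gamma|}/(n\mu)=\bigOt(1/(n\varepsilon))$. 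Chaining this with the per-$\gamma$ bound above and minimizing over $\gamma\in\Gamma$ and $S_{\mathrm{out}}\in\cS_{\mathrm{out}}(\gamma)$ produces the claimed inequality.

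The main obstacle I anticipate is bookkeeping: ensuring the hinge-to-$0$-$1$ conversion is valid after the in-algorithm clipping, controlling the contribution of the JL-failure event so that only the $1/n^2$ term survives rather than something worse that would spoil the rate, and tracking how the $\sqrt{2|\Gamma|}$ factor from splitting the GDP budget and from the selection noise both reduce to $\mathrm{polylog}(n)$ factors that can be hidden in $\bigOt$. The other routine steps (applying GDP composition, applying Corollary~\ref{cor: exp_conv_jlgd} per candidate, invoking Lemma~\ref{lem: exp_rta}) are mechanical once these parameter choices are pinned down.
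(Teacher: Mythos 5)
Your proposal is correct and follows essentially the same route as the paper's proof: per-candidate bound via Corollary~\ref{cor: exp_conv_jlgd}, the hinge-dominates-zero-one conversion (your note that the $\cB^k(2b)$ clipping preserves signs is a point the paper leaves implicit), and the selection overhead via Lemma~\ref{lem: exp_rta} with the same parameter bookkeeping. The only cosmetic difference is that you fold each JL-failure event into its own per-$\gamma$ expectation and use $\mathbb{E}[\min_\gamma U_\gamma]\leq\min_\gamma\mathbb{E}[U_\gamma]$, whereas the paper takes a union bound over the $|\Gamma|$ JL events before converting to a full-expectation statement; both yield the same $\bigOt(1/n^2)$ term.
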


\begin{proof}
Since $\arta$ (Algo.~\ref{alg: rta}) allocates privacy budget evenly, let's denote $\mu$ to be the GDP budget for each call of $\ajlgd$(Algo.~\ref{alg: jl_dp_gd}).

By Lemma~\ref{cor: exp_conv_jlgd}, the following convergence guarantee holds, w.p. at least $1-\beta$ over the randomness of JL projection, for running $\ajlgd$ with margin loss $\ell_{\gamma/3}$: 
    \begin{equation} \label{eqn: tmp}
        \begin{aligned}
            \mathbb{E}_{\mathcal{A}} [\Tilde{L}_{\gamma/3} (\bar{\*w}_{\gamma} ; \Phi_{\gamma} S)] 
            &\leq \min_{S_{\mathrm{out}} \in \dS_{\mathrm{out}}(\gamma)} \bigO \left(\frac{b^2\log^{1/2}((n+2)(n+1)/\beta)}{n \gamma^2 \mu} 
            + \frac{b|S_{\mathrm{out}}|}{\gamma n} \right) 
        \end{aligned}
    \end{equation} 

Since the additive Gaussian noise has distribution $\cN(0, 1/\mu^2)$, by Lemma~\ref{lem: exp_rta} the expected guarantee of $\arta$ is:
    \begin{equation*}
        \begin{aligned}
            \mathbb{E}_{\mathcal{A}} [\Tilde{\cR}_{S} (\bar{\*w}_{\mathrm{out}})] 
            &\leq \min_{\gamma \in \Gamma} \mathbb{E}_{\mathcal{A}} [\Tilde{\cR}_{S} (\Phi_{\gamma}^{\top} \bar{\*w}_{\mathrm{\gamma}})] + \frac{2\sqrt{2\log(|\Gamma|)}}{n\mu}\\
            &\leq  \min_{\gamma \in \Gamma} \mathbb{E}_{\mathcal{A}} [\Tilde{L}_{\gamma/3} (\bar{\*w}_{\mathrm{\gamma}};\Phi_{\gamma} S)] + \frac{2\sqrt{2\log(|\Gamma|)}}{n\mu}\\
            &\leq \min_{\gamma \in \Gamma}\min_{S_{\mathrm{out}} \in \dS_{\mathrm{out}}(\gamma)} \bigO \left(\frac{b^2\log^{1/2}((n+2)(n+1)/\beta)}{n \gamma^2 \mu} 
            + \frac{b|S_{\mathrm{out}}|}{\gamma n} \right) + \frac{2\sqrt{2\log(|\Gamma|)}}{n\mu}
        \end{aligned}
    \end{equation*} 
where the second inequality is by hinge loss upper bounds zero-one loss, and the third inequality is by Eq.~\ref{eqn: tmp}. 
Thus, w.p. $1-|\Gamma|\beta$ over the randomness of JL projection, we have:
\begin{equation*}
        \mathbb{E}_{\mathcal{A}} [\Tilde{\cR}_{S} (\bar{\*w}_{\mathrm{out}})] 
        \leq \min_{\gamma \in \Gamma}\min_{S_{\mathrm{out}} \in \dS_{\mathrm{out}}(\gamma)} \bigO \underbrace{\left(\frac{b^2\log^{1/2}((n+2)(n+1)/\beta)}{n \gamma^2 \mu} 
        + \frac{b|S_{\mathrm{out}}|}{\gamma n} \right)}_{(a)} + \underbrace{\frac{2\sqrt{2\log(|\Gamma|)}}{n\mu}}_{(b)} 
\end{equation*} 

Plug in $|\Gamma|=\ceil{\log_2(n)} + 1$, $\mu=\frac{\varepsilon}{4\sqrt{|\Gamma|\log(1/\delta)}}$, and scale $\beta$ by $1/|\Gamma|$, we have:
\begin{equation}\label{eqn: naive}
    \begin{aligned}
        (a) &= \left(\frac{b^2\log^{1/2}((n+2)(n+1)/\beta)}{n \gamma^2 \mu} 
        + \frac{b|S_{\mathrm{out}}|}{\gamma n} \right) \\
        &= \left(\frac{4b^2|\Gamma|^{1/2}\log^{1/2}(|\Gamma|(n+2)(n+1)/\beta)\log^{1/2}(1/\delta)}{n \gamma^2 \varepsilon} 
        + \frac{b|S_{\mathrm{out}}|}{\gamma n} \right) \\
        &\leq \left(\frac{8b^2\log_{2}^{1/2} (n)\log^{1/2}(4(n+2)(n+1)\log_{2}(n)/\beta)\log^{1/2}(1/\delta)}{n \gamma^2 \varepsilon} 
        + \frac{b|S_{\mathrm{out}}|}{\gamma n} \right) \\
    \end{aligned}
\end{equation}
where the last inequality holds when $n \geq 2$,  we have $|\Gamma|=\ceil{\log_2(n)}+1\leq 4\log_2 (n)$

\begin{equation}\label{eqn: grow}
    \begin{aligned}
        (b) &= \frac{2\sqrt{2\log(|\Gamma|)}}{n\mu}\\
        &= \frac{8\sqrt{2}|\Gamma|^{1/2}\log^{1/2}(1/\delta)\log^{1/2}(|\Gamma|)}{n\varepsilon}\\
        &\leq   \frac{16\sqrt{2}\log_{2}^{1/2}(n)\log^{1/2}(1/\delta)\log^{1/2}(4\log_{2}(n))}{n\varepsilon}\\
    \end{aligned}
\end{equation}

Setting $\beta=1/n^2$ and noticing that zero-one loss is upper bounded by 1, we have:

\begin{equation*}
    \begin{aligned}
        \mathbb{E}_{\mathcal{A}} [\Tilde{\cR}_{S} (\Phi_{\mathrm{out}}^{\top}\bar{\*w}_{\mathrm{out}})] 
        &\leq \min_{\gamma \in \Gamma}\min_{S_{\mathrm{out}} \in \dS_{\mathrm{out}}(\gamma)} \bigO \left(\frac{8b^2\log_{2}^{1/2} (n)\log^{1/2}(4n^2 (n+2)(n+1)\log_{2}(n))\log^{1/2}(1/\delta)}{n \gamma^2 \varepsilon} 
        + \frac{b|S_{\mathrm{out}}|}{\gamma n} \right) \\
        &\qquad+  \frac{16\sqrt{2}\log_{2}^{1/2}(n)\log^{1/2}(1/\delta)\log^{1/2}(4\log_{2}(n))}{n\varepsilon} + \frac{1}{n^2}\\
        &= \min_{\gamma \in \Gamma}\min_{S_{\mathrm{out}} \in \dS_{\mathrm{out}}(\gamma)} \bigOt \left(\frac{b^2}{n \gamma^2 \varepsilon} 
        + \frac{b|S_{\mathrm{out}}|}{\gamma n} + \frac{1}{n\varepsilon} + \frac{1}{n^2}\right) 
    \end{aligned} 
\end{equation*} 
\end{proof}

Now, we start the proof of Theorem~\ref{thm: hyper_tuning}, in expectation version:
\begin{proof}
    By Lemma~\ref{lem: main_exp_erm}, we have:
    \begin{equation*}
        \begin{aligned}
            \mathbb{E}_{\mathcal{A}} [\Tilde{\cR}_{S} (\bar{\*w}_{\mathrm{out}})] 
        &\leq \min_{\gamma \in \Gamma}\min_{S_{\mathrm{out}} \in \dS_{\mathrm{out}}(\gamma)} \bigOt \left(\frac{1}{n \gamma^2 \varepsilon} 
        + \frac{|S_{\mathrm{out}}|}{\gamma n} + \frac{1}{n\varepsilon} + \frac{1}{n^2}\right)  \\
        \end{aligned}
    \end{equation*} 
Since $\Tilde{\cR}_{S}(\bar{\*w}_{\mathrm{out}})$ is upper bounded by 1, we have 
\begin{equation*}
        \begin{aligned}
            \mathbb{E}_{\mathcal{A}} [\Tilde{\cR}_{S} (\bar{\*w}_{\mathrm{out}})] 
        &\leq \min_{\gamma \in \Gamma}\min_{S_{\mathrm{out}} \in \dS_{\mathrm{out}}(\gamma)} \bigOt \left(\frac{1}{n \gamma^2 \varepsilon} 
        + \frac{|S_{\mathrm{out}}|}{\gamma n}  + \frac{1}{n^2}\right) \wedge 1 \\
        &= \min_{\substack{S_{\mathrm{out}} \subset S \\ \gamma:=\gamma(S \setminus S_{\mathrm{out}})>0}} 
                \bigOt \left(
                \frac{1}{\gamma^2 n \varepsilon}   
                + \frac{|S_{\mathrm{out}}|}{\gamma n} + \frac{1}{n^2}\right) \wedge 1
        \end{aligned}
    \end{equation*} 
where the first inequality is by $\gamma<1$ and the second line is by Lemma~\ref{lem: grid_appx}
\end{proof}

\subsection{Proof of Theorem~\ref{thm: hyper_tuning}, high probability version}

\begin{lemma}[High probability ERM]\label{lem: conv_rta_whp}
Under the condition of Theorem~\ref{thm: hyper_tuning}, let $\*w_{\mathrm{out}}\in \mathbb{R}^d$ be the last-iterate estimator output from Algo.~\ref{alg: master}, we have w.p. at least $1-3/n^2$:
    \begin{equation*}
        \begin{aligned}
            \Tilde{\cR}_{S} (\*w_{\mathrm{out}}) 
        &\leq \min_{\gamma \in \Gamma}\min_{S_{\mathrm{out}} \in \dS_{\mathrm{out}}(\gamma)} \bigOt \left( \frac{b^2}{n\gamma^2\varepsilon} + \frac{b|S_{\mathrm{out}}|}{\gamma n}+ \frac{1}{n\varepsilon} + \frac{1}{n}\right)
        \end{aligned}
    \end{equation*}
        
\end{lemma}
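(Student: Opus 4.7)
The plan is to mirror the proof of the in-expectation version (Lemma~\ref{lem: main_exp_erm}) while swapping each ingredient for its high-probability counterpart. The algorithm $\cM^*$ runs $\ajlgd$ once for each $\gamma$ in the grid $\Gamma$ with per-call GDP budget $\mu = \frac{\varepsilon}{4\sqrt{|\Gamma|\log(1/\delta)}}$, obtaining a last-iterate estimator $\*w_\gamma \in \mathbb{R}^{k_\gamma}$, and then calls $\arta$ to select the index using the zero-one score with additive Gaussian noise of variance $1/\mu^2$ applied to $\hat{L}_{0\text{-}1}$ averaged (equivalently, noise of scale $1/(n\mu)$ on $\tilde{\cR}_S$).

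\textbf{Step 1 (per-$\gamma$ training bound).} For a fixed $\gamma \in \Gamma$, apply Lemma~\ref{lem: whp_erm_margin} (the high-probability version of Lemma~\ref{lem: erm_fix_margin}) with failure probability $\beta_1$. This yields, simultaneously over all $S_{\mathrm{out}} \in \dS_{\mathrm{out}}(\gamma)$, a bound
\begin{equation*}
    \tilde{L}_{\gamma/3}(\*w_\gamma;\Phi_\gamma S) \leq \min_{S_{\mathrm{out}} \in \dS_{\mathrm{out}}(\gamma)} \bigO\!\left(\frac{b^2\sqrt{\log(1/\beta_1)\log((n+1)(n+2)/\beta_1)}}{n\gamma^2\mu} + \frac{b|S_{\mathrm{out}}|}{\gamma n}\right),
\end{equation*}
with probability at least $1-2\beta_1$ over the JL matrix $\Phi_\gamma$ and the noise injected by $\angd$. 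Union bound over the $|\Gamma|$ margins costs a factor $|\Gamma|$ in failure probability, so I will pick $\beta_1 = 1/(2 |\Gamma| n^2)$ to fold this into an overall $1/n^2$ term. Since zero-one loss is dominated by the $(\gamma/3)$-hinge loss, this immediately upper bounds $\tilde{\cR}_S(\Phi_\gamma^\top \*w_\gamma)$ by the same quantity for every $\gamma$ simultaneously.

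\textbf{Step 2 (selection).} Now apply the high-probability selection bound (Lemma~\ref{lem: whp_rta}) to $\arta$ with $\sigma = 1/(n\mu)$ and failure probability $\beta_2 = 1/n^2$: with probability at least $1-\beta_2$,
\begin{equation*}
    \tilde{\cR}_S(\*w_{\mathrm{out}}) \leq \min_{\gamma \in \Gamma}\tilde{\cR}_S(\Phi_\gamma^\top \*w_\gamma) + \frac{2\sqrt{2\log(2|\Gamma|)} + 2\sqrt{2\log(2/\beta_2)}}{n\mu}.
\end{equation*}
Combining Steps 1 and 2 via a union bound (total failure probability $\le 2|\Gamma|\beta_1 + \beta_2 \le 2/n^2 + 1/n^2 = 3/n^2$), substituting $\mu = \varepsilon/(4\sqrt{|\Gamma|\log(1/\delta)})$ and $|\Gamma| = \ceil{\log_2 n}+1$, and absorbing all logarithmic factors into $\bigOt(\cdot)$ gives
\begin{equation*}
    \tilde{\cR}_S(\*w_{\mathrm{out}}) \leq \min_{\gamma \in \Gamma}\min_{S_{\mathrm{out}} \in \dS_{\mathrm{out}}(\gamma)} \bigOt\!\left(\frac{b^2}{n\gamma^2\varepsilon} + \frac{b|S_{\mathrm{out}}|}{\gamma n} + \frac{1}{n\varepsilon} + \frac{1}{n}\right),
\end{equation*}
where the extra $1/n$ term (as opposed to $1/n^2$ in the expectation version) comes from the $\sqrt{\log(1/\beta_2)}/(n\mu)$ contribution in the selection step with $\beta_2 = 1/n^2$, which grows faster than the $1/n^2$ fallback but is still dominated when $\varepsilon = \Omega(1)$.

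\textbf{Main obstacle.} The privacy analysis is identical to the in-expectation case (Lemma~\ref{lem: priv_rta}), so the delicate part is purely bookkeeping of failure events. I expect the main friction to be (i) verifying that the JL event from Lemma~\ref{lem: whp_erm_margin} holds \emph{simultaneously} for all $\gamma \in \Gamma$ (handled by a single union bound, since each $\Phi_\gamma$ is drawn independently and data-obliviously) and (ii) confirming that $\arta$ applied to random $\tilde{\cR}_S(\Phi_\gamma^\top \*w_\gamma)$ still satisfies Lemma~\ref{lem: whp_rta} — the lemma is stated for ``possibly random'' $\{U_\theta\}$, so the high-probability statement conditions on the randomness of JL and $\angd$, and the selection noise is independent. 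The last cosmetic step is to observe, exactly as in the proof of the in-expectation version, that the $\min$ over $\Gamma$ can be replaced by the adaptive $\min_{S_{\mathrm{out}} \subset S,\, \gamma := \gamma(S \setminus S_{\mathrm{out}})>0}$ via Lemma~\ref{lem: grid_appx} (or Corollary~\ref{cor: ada_b} in the general-norm case), and to clip the bound at $1$ since $\tilde{\cR}_S \in [0,1]$, giving the form stated in Theorem~\ref{thm: hyper_tuning}(2).
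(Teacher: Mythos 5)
Your proposal follows the paper's proof exactly: apply the high-probability convergence bound (Lemma~\ref{lem: whp_erm_margin}) per $\gamma\in\Gamma$ with rescaled failure probability, take a union bound over the $|\Gamma|$ independent JL projections, apply the high-probability selection bound (Lemma~\ref{lem: whp_rta}) for $\arta$, and then substitute $\mu$ and $|\Gamma|$ to collect the $\bigOt$ terms. The only blemishes are cosmetic — a small arithmetic overestimate in the union-bound accounting ($2|\Gamma|\beta_1 = 1/n^2$, not $2/n^2$, so the true failure probability is $2/n^2\leq 3/n^2$) and a slightly muddled attribution of the $1/n$ term — but these do not affect correctness, and the final step invoking Lemma~\ref{lem: grid_appx} and clipping at $1$ belongs to the proof of Theorem~\ref{thm: hyper_tuning} rather than this lemma.
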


\begin{proof}
    By Lemma~\ref{lem: whp_erm_margin}, for a single call of $\ajlgd$ that optimizing $\ell_{\gamma/3}$, w.p. at least $1-2\beta/|\Gamma|$, the last-iterate estimator $\*w_{\gamma}$ satisfies:
    \begin{equation*}
        \Tilde{L}_{\gamma/3} (\*w_{\gamma} ; \Phi S) \leq 
        \min_{S_{\mathrm{out}} \in \dS_{\mathrm{out}}(\gamma)} \bigO \left(
        \frac{b^2\log^{1/2}(|\Gamma|/\beta)\log^{1/2}(|\Gamma|(n+1)(n+2)/\beta)}{n  \gamma^2 \mu} 
        + \frac{b|S_{\mathrm{out}}|}{\gamma n} \right)
    \end{equation*}
    
    Similarly as the proof for Lemma~\ref{lem: exp_rta} and notice that additive gaussian noise has distribution $\cN(0, 1/\mu^2)$, and by union bound over $|\Gamma|$ calls of $\ajlgd$ we have w.p. $1-3\beta$:    
    \begin{equation*}
        \begin{aligned}
            \Tilde{\cR}_{S} (\*w_{\mathrm{out}}) 
            &\leq \min_{\gamma \in \Gamma} \Tilde{\cR}_{S} (\Phi_{\gamma}^{\top} \*w_{\mathrm{\gamma}}) + \frac{2\sqrt{2\log(2|\Gamma|)}}{n\mu} + \frac{2\sqrt{2\log(2/\beta)}}{n}\\
            &\leq  \min_{\gamma \in \Gamma} \Tilde{L}_{\gamma/3} (\Phi_{\gamma}^{\top} \*w_{\mathrm{\gamma}};S) + \frac{2\sqrt{2\log(2|\Gamma|)}}{n\mu} + \frac{2\sqrt{2\log(2/\beta)}}{n}\\
            &\leq  \min_{\gamma \in \Gamma}\min_{S_{\mathrm{out}} \in \dS_{\mathrm{out}}(\gamma)} \bigO \underbrace{\left(
        \frac{b^2\log^{1/2}(|\Gamma|/\beta)\log^{1/2}(|\Gamma|(n+1)(n+2)/\beta)}{n  \gamma^2 \mu} 
        + \frac{b|S_{\mathrm{out}}|}{\gamma n} \right)}_{(a)} \\
        & \quad + \underbrace{\frac{2\sqrt{2\log(2|\Gamma|)}}{n\mu}}_{(b)} +\underbrace{ \frac{2\sqrt{2\log(2/\beta)}}{n}}_{(c)}\\
        \end{aligned}
    \end{equation*} 

    Notice that $|\Gamma|=\ceil{\log_2(n)} + 1 \leq 4\log_2(n)$, $\mu=\frac{\varepsilon}{4\sqrt{|\Gamma|\log(1/\delta)}}$, we have:

    \begin{equation*}
        \begin{aligned}
            (a) &= \frac{b^2\log^{1/2}(|\Gamma|/\beta)\log^{1/2}(|\Gamma|(n+1)(n+2)/\beta)}{n  \gamma^2 \mu} 
        + \frac{b|S_{\mathrm{out}}|}{\gamma n} \\
        &\leq \frac{8b^2\log^{1/2}(4\log_2(n)/\beta)\log^{1/2}(4(n+1)(n+2)\log_2(n)/\beta)\log_2^{1/2}(n)\log^{1/2}(1/\delta)}{n  \gamma^2 \mu} 
        + \frac{b|S_{\mathrm{out}}|}{\gamma n} \\
        (b)&=\frac{2\sqrt{2\log(2|\Gamma|)}}{n\mu} \leq \frac{16\sqrt{2}\log^{1/2}(8\log_2 (n))\log_2^{1/2}(n)\log^{1/2}(1/\delta)}{n\varepsilon}
        \end{aligned}
    \end{equation*}

    Putting everything together and set $\beta=1/n^2$, we have:
    \begin{equation*}
        \begin{aligned}
            & \Tilde{\cR}_{S} (\*w_{\mathrm{out}}) \\
            &\leq  \min_{\gamma \in \Gamma}\min_{S_{\mathrm{out}} \in \dS_{\mathrm{out}}(\gamma)} \bigO \left(\frac{b^2\log^{1/2}(4\log_2(n)/\beta)\log^{1/2}(4(n+1)(n+2)\log_2(n)/\beta)\log_2^{1/2}(n)\log^{1/2}(1/\delta)}{n  \gamma^2 \mu} 
        + \frac{b|S_{\mathrm{out}}|}{\gamma n} \right) \\
        &\qquad + \frac{16\sqrt{2}\log^{1/2}(8\log_2 (n))\log_2^{1/2}(n)\log^{1/2}(1/\delta)}{n\varepsilon} \\
        &\qquad + \frac{2\sqrt{2\log(2/\beta)}}{n}\\
        & = \min_{\gamma \in \Gamma}\min_{S_{\mathrm{out}} \in \dS_{\mathrm{out}}(\gamma)} \bigOt \left( \frac{b^2}{n\gamma^2\varepsilon} + \frac{b|S_{\mathrm{out}}|}{\gamma n}+ \frac{1}{n\varepsilon} + \frac{1}{n}\right)
        \end{aligned}
    \end{equation*}
    
\end{proof}

Now, we start the proof of Theorem~\ref{thm: hyper_tuning}, high probability version:
\begin{proof}
    By Lemma~\ref{lem: conv_rta_whp}, w.p. at least $1-3/n^2$:
    \begin{equation*}
        \begin{aligned}
        \Tilde{\cR}_{S} (\*w_{\mathrm{out}}) 
        &\leq \min_{\gamma \in \Gamma}\min_{S_{\mathrm{out}} \in \dS_{\mathrm{out}}(\gamma)} \bigOt \left(\frac{1}{n \gamma^2 \varepsilon} 
        + \frac{|S_{\mathrm{out}}|}{\gamma n} + \frac{1}{n\varepsilon} + \frac{1}{n}\right)  \\
        \end{aligned}
    \end{equation*} 
Since $\Tilde{\cR}_{S}(\bar{\*w}_{\mathrm{out}})$ is upper bounded by 1, $\gamma<1$, and applying Lemma~\ref{lem: grid_appx}, we have
\begin{equation*}
        \begin{aligned}
        \Tilde{\cR}_{S} (\*w_{\mathrm{out}})
        &\leq \min_{\substack{S_{\mathrm{out}} \subset S \\ \gamma:=\gamma(S \setminus S_{\mathrm{out}})>0}} 
                \bigOt \left(
                \frac{1}{\gamma^2 n \varepsilon}   
                + \frac{|S_{\mathrm{out}}|}{\gamma n} + \frac{1}{n}\right) \wedge 1
        \end{aligned}
    \end{equation*} 
\end{proof}

\section{Proofs for population error bound, Theorem~\ref{thm: popu_error}}\label{apx: pf_thm_popu}

\begin{theorem}[Restate of theorem~\ref{thm: popu_error}]
Under the same conditions for $\varepsilon$, $\delta$, $\Gamma$, and $n$ as in Theorem~\ref{thm: hyper_tuning},
Algo.~\ref{alg: master} using score defined in ~Eq.~\ref{eqn: score} satisfies $(\varepsilon, \delta)$-DP. W.p. $1-4/n^2$, the output last-iterate estimator $\*w_{\mathrm{out}} \in \mathbb{R}^d$ satisfies:
\begin{equation*}
    \begin{aligned}
        \mathcal{\cR}_{D}(\*w_{\mathrm{out}}) \leq &\min_{\substack{S_{\mathrm{out}} \subset S \\ \gamma:=\gamma(S \setminus S_{\mathrm{out}})>0}} \bigOt \left(\frac{1}{\gamma^2 n \min\{1,\varepsilon\}} + \frac{|S_{\mathrm{out}}|}{\gamma n}\right) 
    \end{aligned}
\end{equation*}
\end{theorem}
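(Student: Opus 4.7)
The plan is to mirror the proof of Theorem~\ref{thm: hyper_tuning}, but replace the empirical zero-one loss as the hyperparameter selection criterion by the penalized score in Eq.~\ref{eqn: pen_cri}, so that Lemma~\ref{lem: relative_dev_bound} can convert the ensuing empirical bound into a population bound. The privacy guarantee is identical to that of Theorem~\ref{thm: hyper_tuning}: the score function differs from the averaged empirical zero-one loss only by a data-independent additive term, so it has the same sensitivity, and composing the $\ceil{\log_2 n}+1$ calls to $\ajlgd$ with Gaussian noise at the selection step through GDP and Lemma~\ref{lem: priv_rta} yields $(\varepsilon,\delta)$-DP for the chosen $\mu=\varepsilon/(2\sqrt{2\log(1/\delta)})$.

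For utility, I first fix any $\gamma\in\Gamma$. The corresponding projection dimension is $k_\gamma=\tilde{O}(1/\gamma^2)$, so $\mathrm{VC}(\mathcal{H}_{k_\gamma})=k_\gamma=\tilde O(1/\gamma^2)$. Lemma~\ref{lem: relative_dev_bound}, applied in the $k_\gamma$-dimensional projected space to the classifier produced by $\ajlgd$ at level $\gamma$, yields, with probability $1-\beta$ over the sample,
\begin{equation*}
\cR_D(\*w_\gamma) \;\leq\; 2\Tilde{\cR}_S(\*w_\gamma) + \frac{5(\mathrm{VC}(\mathcal{H}_{k_\gamma})\log(2n)+\log(\beta/4))}{n}.
\end{equation*}
Combined with the high-probability empirical risk bound from Lemma~\ref{lem: conv_rta_whp} (equivalently Lemma~\ref{lem: whp_erm_margin}), each coordinate of the penalized score satisfies
\begin{equation*}
\hat{\cR}_S(\*w_\gamma)+\tfrac{5}{2}(\tVC) \;\leq\; \min_{S_{\mathrm{out}}\in\cS_{\mathrm{out}}(\gamma)} \tilde O\!\left(\frac{1}{n\gamma^2\varepsilon}+\frac{|S_{\mathrm{out}}|}{\gamma n}\right),
\end{equation*}
since the added $\tilde O(1/(n\gamma^2))$ VC penalty is absorbed into the first term.

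Next I handle the private selection step. Using Lemma~\ref{lem: whp_rta} on the Gaussian noise injected across the $|\Gamma|=\tilde O(1)$ score queries, the chosen $\gamma_{\mathrm{out}}$ satisfies the same bound (up to a polylogarithmic factor) as the best $\gamma\in\Gamma$, i.e.\ the selection cost is $\tilde O(1/(n\mu))=\tilde O(1/(n\varepsilon))$, which is dominated. Combining with Lemma~\ref{lem: relative_dev_bound} again on the selected classifier and taking a union bound over the $|\Gamma|$ deviation events (failure probability $\tilde O(1/n^2)$), we obtain
\begin{equation*}
\cR_D(\*w_{\mathrm{out}}) \;\leq\; \min_{\gamma\in\Gamma}\min_{S_{\mathrm{out}}\in\cS_{\mathrm{out}}(\gamma)} \tilde O\!\left(\frac{1}{n\gamma^2\min\{1,\varepsilon\}}+\frac{|S_{\mathrm{out}}|}{\gamma n}\right).
\end{equation*}
Finally, applying the grid-approximation Lemma~\ref{lem: grid_appx} converts the inner minimum over $\gamma\in\Gamma$ to the unrestricted minimum over subsets $S_{\mathrm{out}}\subset S$ with $\gamma:=\gamma(S\setminus S_{\mathrm{out}})>0$, which yields the claim.

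The main technical obstacle is the bookkeeping of the VC penalty: one must make sure that the same $\beta$ used in Lemma~\ref{lem: relative_dev_bound} is carried consistently through (i) the margin-preservation union bound over $|\Gamma|$ JL projections (so that the VC bound is valid in each projected space), (ii) the high-probability ERM guarantee from Lemma~\ref{lem: whp_erm_last_iterate}, and (iii) the Gaussian-noise deviation in the selection step, so that setting $\beta=O(1/n^2)$ produces only polylogarithmic overhead and keeps the penalty $\tilde O(1/(n\gamma^2))$ absorbed. Care must also be taken that the selection step operates on a score whose sensitivity equals that of the empirical zero-one loss, so that the privacy calibration of Theorem~\ref{thm: hyper_tuning} transfers verbatim.
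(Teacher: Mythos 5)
Your proposal is correct and follows essentially the same route as the paper's proof: select with the penalized score of Eq.~\ref{eqn: pen_cri}, apply the high-probability selection guarantee (Lemma~\ref{lem: whp_rta}) together with the VC-based uniform convergence in each projected space (Lemma~\ref{lem: relative_dev_bound}), plug in the per-$\gamma$ high-probability ERM bound (Lemma~\ref{lem: erm_fix_margin}), absorb the $\bigOt(1/(n\gamma^2))$ penalty and selection noise, and finish with Lemma~\ref{lem: grid_appx}. The only blemish is a normalization slip in your middle display, where the summed score $\hat{\cR}_S(\*w_\gamma)+\tfrac{5}{2}(\tVC)$ is compared against an averaged-scale bound (off by a factor of $n$); this is cosmetic and does not affect the argument.
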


\subsection{Proof of Lemma~\ref{lem: relative_dev_bound}}
We first state the relative deviation bound:

\begin{lemma}[Lemma A.1 in \citet{bms22}]\label{lem: bas_relative_dev}
$S\sim D^{\otimes n}$. For any hypothesis set $\mathcal{H}$ of functions mapping from $\mathcal{X}$ to
$\mathbb{R}$, with probability at least $1 - \beta$ over the randomness of samples, the following inequality holds for all $h \in \mathcal{H}$:
\begin{equation*}
    \mathcal{R}_{D}(h) \leq \Tilde{\mathcal{R}}_{S}(h) + 2\sqrt{\Tilde{\mathcal{R}}_{S}(h)\frac{\mathrm{VC}(\mathcal{H})\log(2n) + \log(4/\beta)}{n}} + 4\frac{\mathrm{VC}(\mathcal{H})\log(2n) + \log(4/\beta)}{n}
\end{equation*}
\end{lemma}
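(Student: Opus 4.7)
The target inequality is the classical Vapnik relative deviation bound for $\{0,1\}$-valued hypotheses, so I would follow the standard four-step recipe: ghost-sample symmetrization, a Sauer-Shelah reduction to a combinatorial problem on at most $(2n)^{\mathrm{VC}(\mathcal{H})}$ dichotomies, a Chernoff/Hoeffding tail applied to the swap step, and a final algebraic inversion of the resulting ``implicit'' inequality.

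\textbf{Symmetrization and union bound.} I first draw an independent ghost sample $S'\sim D^{\otimes n}$ and write $\Tilde{\mathcal{R}}_{S'}(h)$ for its empirical risk. The classical Vapnik symmetrization argument shows that, for $t$ not too small compared to $1/\sqrt{n}$,
\begin{equation*}
\mathbb{P}\!\left(\exists h\in\mathcal{H}:\mathcal{R}_D(h)-\Tilde{\mathcal{R}}_S(h) > t\sqrt{\mathcal{R}_D(h)}\right) \leq 2\,\mathbb{P}\!\left(\exists h:\Tilde{\mathcal{R}}_{S'}(h)-\Tilde{\mathcal{R}}_S(h) > \tfrac{t}{2}\sqrt{\tfrac{\Tilde{\mathcal{R}}_{S'}(h)+\Tilde{\mathcal{R}}_S(h)}{2}}\right).
\end{equation*}
Conditioning on the pooled multiset $S\cup S'$ of size $2n$, the supremum over $\mathcal{H}$ collapses to a supremum over at most $\Pi_\mathcal{H}(2n)\leq (2n)^{\mathrm{VC}(\mathcal{H})}$ distinct labelings via Sauer-Shelah. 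For each fixed labeling, uniformly randomizing which half of the $2n$ points lies in $S$ vs.\ $S'$ turns the right-hand event into a tail of a centered hypergeometric variable, whose probability is bounded by $\exp(-n t^2/4)$ via a standard Chernoff/Bernstein estimate. A union bound over labelings and setting the resulting probability equal to $\beta$ gives, with probability at least $1-\beta$ and simultaneously for all $h$,
\begin{equation*}
\mathcal{R}_D(h)-\Tilde{\mathcal{R}}_S(h)\;\leq\; 2\sqrt{\mathcal{R}_D(h)\,\epsilon_n}, \qquad \epsilon_n := \frac{\mathrm{VC}(\mathcal{H})\log(2n)+\log(4/\beta)}{n}.
\end{equation*}

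\textbf{Inversion to the explicit form.} Treating the above as a quadratic inequality in $x:=\sqrt{\mathcal{R}_D(h)}$ of the form $x^2 - 2\sqrt{\epsilon_n}\,x - \Tilde{\mathcal{R}}_S(h)\leq 0$ and solving, I obtain $\sqrt{\mathcal{R}_D(h)}\leq \sqrt{\epsilon_n}+\sqrt{\epsilon_n+\Tilde{\mathcal{R}}_S(h)}$. Squaring and applying $2ab\leq a^2+b^2$ to the cross term yields exactly the lemma's explicit form
\begin{equation*}
\mathcal{R}_D(h) \leq \Tilde{\mathcal{R}}_S(h) + 2\sqrt{\Tilde{\mathcal{R}}_S(h)\,\epsilon_n} + 4\epsilon_n,
\end{equation*}
which is the statement. (The refined version in Lemma~\ref{lem: relative_dev_bound} then follows by one more AM-GM pass on the middle term to absorb it into $\Tilde{\mathcal{R}}_S(h)$ and $\epsilon_n$.)

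\textbf{Main obstacle.} The one genuinely delicate step is the symmetrization, because the event of interest is \emph{relative} to $\sqrt{\mathcal{R}_D(h)}$ rather than additive: the standard second-moment (Chebyshev) bound on $\Tilde{\mathcal{R}}_{S'}(h)$ conditional on $h$ degrades when $\mathcal{R}_D(h)$ is of order $1/n$ or smaller. The right fix is to split the analysis into a ``moderate risk'' regime (where Chebyshev gives $\mathbb{P}(\Tilde{\mathcal{R}}_{S'}(h)\geq \mathcal{R}_D(h)/2)\geq 1/2$ and the symmetrization constant $2$ kicks in) and a ``small risk'' regime, where a direct Chernoff bound on $\Tilde{\mathcal{R}}_{S'}(h)$ closes the remaining case. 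This two-regime handling is what produces both the additive $4\epsilon_n$ slack and the particular constant $2$ in front of the $\sqrt{\Tilde{\mathcal{R}}_S(h)\,\epsilon_n}$ term in the final bound.
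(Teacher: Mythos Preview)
The paper does not prove this lemma at all: it is stated as a citation (Lemma~A.1 of \citet{bms22}) and used as a black box to derive Lemma~\ref{lem: relative_dev_bound} via one AM--GM step. So there is no ``paper's own proof'' to compare against; your proposal supplies what the paper simply imports.

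As for the content of your sketch, it is the standard Vapnik relative-deviation argument and is essentially correct. The symmetrization--Sauer--Shelah--Chernoff pipeline is the right one, and your inversion step is clean: from $\mathcal{R}_D(h)-\Tilde{\mathcal{R}}_S(h)\le 2\sqrt{\mathcal{R}_D(h)\,\epsilon_n}$ one gets $\sqrt{\mathcal{R}_D(h)}\le \sqrt{\epsilon_n}+\sqrt{\epsilon_n+\Tilde{\mathcal{R}}_S(h)}$, and squaring together with $\sqrt{\epsilon_n+\Tilde{\mathcal{R}}_S(h)}\le \sqrt{\epsilon_n}+\sqrt{\Tilde{\mathcal{R}}_S(h)}$ yields exactly $\Tilde{\mathcal{R}}_S(h)+2\sqrt{\Tilde{\mathcal{R}}_S(h)\,\epsilon_n}+4\epsilon_n$. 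Your remark about the two-regime handling of the symmetrization step (moderate vs.\ small $\mathcal{R}_D(h)$) is also the right place to be careful, though in a proof proposal at this level it would suffice to cite Vapnik or Anthony--Bartlett for the implicit bound and focus only on the algebraic inversion, since that is all that is specific to the constants $2$ and $4$ appearing in the lemma.
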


Using this Lemma, we can prove Lemma~\ref{lem: relative_dev_bound}:

\begin{proof}[Proof of Lemma~\ref{lem: relative_dev_bound}]
    By AM-GM inequality,
    \begin{equation*}
        \begin{aligned}
            2\sqrt{\Tilde{\mathcal{R}}_{S}(h)\frac{\mathrm{VC}(\mathcal{H})\log(2n) + \log(4/\beta)}{n}} &\leq \Tilde{\mathcal{R}}_{S}(h) + \frac{\mathrm{VC}(\mathcal{H})\log(2n) + \log(4/\beta)}{n}
        \end{aligned}
    \end{equation*}
    Thus by Lemma~\ref{lem: bas_relative_dev}:
    \begin{equation*}
        \mathcal{R}_{D}(h) \leq 2\Tilde{\mathcal{R}}_{S}(h) + 5\frac{\mathrm{VC}(\mathcal{H})\log(2n) + \log(4/\beta)}{n}
    \end{equation*}
\end{proof}

\subsection{Proof of Theorem~\ref{thm: popu_error}}
\begin{proof}
    For any $\gamma \in \Gamma$, since $k_\gamma = \bigO \left( \frac{\log(|\Gamma|(n+2)(n+1)/\beta)}{\gamma^2} \right)$.
    Denote $\mathrm{score}(\gamma) := \Tilde{\cR}_S (w_{k_\gamma}) + \frac{5(\mathrm{VC}(\cH_{k_\gamma})\log(2n)) + \log(\beta/4)}{n}$.
    For privacy analysis, we notice that under the replacement neighboring relationship, $\Delta(\mathrm{score}(\gamma))=1$.
    With the same privacy parameter setting as in Theorem~\ref{thm: hyper_tuning}, we have the algorithm follows $(\varepsilon, \delta)$-DP.

    Now we begin utility analysis. In the beginning, we state the uniform convergence theorem for projected data:
    
    For function class $\cH_{k_\gamma}:=\{\*x \mapsto \mathrm{sign}(\langle \*x, \*w \rangle)\mid \*x \in \mathbb{R}^{k_\gamma}\}$, by uniform convergence, we notice that w.p. $1-\beta$ over the randomness of sampling from $D$, for any $\*w_{\gamma} \in \mathbb{R}^{k_\gamma}$:
    \begin{equation*}
        \begin{aligned}
            \cR_{\Phi D}(\*w_\gamma) &\leq 2\Tilde{\cR}_{\Phi S}(\*w_{\gamma}) + \frac{5}{n}(\mathrm{VC}(\mathcal{H}_{k_\gamma})\log(2n) + \log(4/\beta))\\
        \end{aligned}
    \end{equation*}

By Lemma~\ref{lem: whp_rta} and uniform convergence, w.p. at least $1-2\beta$:
\begin{equation}
    \begin{aligned}
        \mathcal{R}_{D}(\*w_{\mathrm{out}}) &\leq \min_{\gamma \in \Gamma} \left( 2\Tilde{\mathcal{R}}_S (\Phi_{\gamma}^\top \*w_\gamma) + \underbrace{\frac{5}{n}(\mathrm{VC}(\mathcal{H}_{\gamma})\log(2n) + \log(4|\Gamma|/\beta))}_{(a)} \right) + \underbrace{\frac{2\sqrt{2\log(2|\Gamma|)}}{\mu n}}_{(b)} + \underbrace{\frac{2\sqrt{2\log(2/\beta)}}{n}}_{(c)}
    \end{aligned}
\end{equation}

Since $\beta = 1/n^2$, Notice that $k_\gamma = \bigO \left( \frac{\log\left(\frac{|\Gamma|(n+2)(n+1)}{\beta})\right)}{\gamma^2}\right)$, $|\Gamma|\leq4\log_2(n)$ when $n>2$, we have $\mathrm{VC}(\mathcal{H}_k) \leq \bigO \left(\frac{\log(4n^2 (n+2)(n+1)\log_2(n))}{\gamma^2}\right)$. 
        
Also, $\frac{1}{\mu}=\frac{4\sqrt{|\Gamma|\log(1/\delta)}}{\varepsilon}=\frac{8\sqrt{\log_2 (n)\log(1/\delta)}}{\varepsilon}$, we have:

\begin{equation*}
    \begin{aligned}
        &(a)=\frac{5}{n}(\mathrm{VC}(\mathcal{H}_{\gamma})\log(2n) + \log(4|\Gamma|/\beta)) \\
        &\quad\,\, \leq \frac{5\log(4n^2 (n+2)(n+1)\log_2(n) )\log(2n)}{n\gamma^2} + \frac{5\log(16n^2\log_2 (n))}{n} = \bigOt \left( \frac{1}{n\gamma^2} + \frac{1}{n}\right)\\
        &(b)=\frac{2\sqrt{2\log(2|\Gamma|)}}{\mu n}\leq \frac{16\sqrt{2}\log_2^{1/2}(n)\log^{1/2}(1/\delta)\log^{1/2}(8\log_2(n))}{n\varepsilon}=\Tilde{\cO}\left( \frac{1}{n\varepsilon} \right)\\
        &(c)=\frac{2\sqrt{2\log(2/\beta)}}{n} = \frac{4\log^{1/2}(2n)}{n} = \bigOt \left( \frac{1}{n} \right)\\
    \end{aligned}
\end{equation*}

Thus, 
\begin{equation}
      \mathcal{R}_{D}(\*w_{\mathrm{out}}) \leq \min_{\gamma \in \Gamma} \bigOt \left( \Tilde{\mathcal{R}}_S (\Phi_{\gamma}^\top \*w_\gamma) + \frac{1}{n\gamma^2} \right) + \bigOt \left(\frac{1}{n} + \frac{1}{n\varepsilon}  \right)    
\end{equation}

By zero-one loss is upper bounded by hinge loss, and applying Lemma~\ref{lem: erm_fix_margin}:
\begin{equation}
    \Tilde{\cR}_{S}(\Phi_{\gamma}^\top \*w_{\gamma}) \leq \Tilde{L}_{\gamma/3}(\Phi_{\gamma}^\top \*w_{\gamma};S) \leq \min\limits_{S_{\mathrm{out}} \in \cS_{\mathrm{out}}(\gamma)} \bigOt \left( \frac{1}{n\gamma^2\varepsilon} + \frac{|S_{\mathrm{out}}|}{\gamma n} \right)  
\end{equation}
Putting everything together:
\begin{equation}
    \begin{aligned}
        \mathcal{R}_{D}(\*w_{\mathrm{out}}) &\leq \min_{\substack{\gamma \in \Gamma \\ S_{\mathrm{out}} \in \cS_{\mathrm{out}}(\gamma)}} \bigOt \left( \frac{1}{n\gamma^2}(1+\varepsilon^{-1}) + \frac{|S_{\mathrm{out}}|}{n\gamma} \right) + \bigOt \left(\frac{1}{n} + \frac{1}{n\varepsilon}  \right)  \\
        &\leq \min_{\substack{\gamma \in \Gamma \\ S_{\mathrm{out}} \in \cS_{\mathrm{out}}(\gamma)}} \bigOt \left( \frac{1}{n\gamma^2}(1+\varepsilon^{-1}) + \frac{|S_{\mathrm{out}}|}{n\gamma} \right)\\
        &\leq \min_{\substack{\gamma \in \Gamma \\ S_{\mathrm{out}} \in \cS_{\mathrm{out}}(\gamma)}} \bigOt \left( \frac{1}{n\gamma^2 \min\{1,\varepsilon\}} + \frac{|S_{\mathrm{out}}|}{n\gamma} \right)
    \end{aligned}
\end{equation}
where the second inequality holds by $\gamma \leq 1$. Finally, by argument similar to Theorem~\ref{lem: grid_appx} and the fact that $\cR_{D}(\*w_{\mathrm{out}})\leq 1$, we have:
\begin{equation}
\mathcal{R}_{D}(\*w_{\mathrm{out}}) \leq \min_{\substack{S_{\mathrm{out}} \subset S \\ \gamma(S \setminus S_{\mathrm{out}})>0}} \bigOt \left( \frac{1}{n\gamma^2}(1+\varepsilon^{-1}) + \frac{|S_{\mathrm{out}}|}{n\gamma} \right) \wedge 1
\end{equation}
which holds w.p. at least $1-4/n^2$. 
\end{proof}

\section{Proof of Theorem~\ref{thm: master}}
\begin{proof}[Proof of Theorem~\ref{thm: master}]
The privacy guarantee and the empirical risk bound are derived in Theorem~\ref{thm: hyper_tuning}; The population risk bound follows by Theorem~\ref{thm: popu_error}.
\end{proof}

\section{Proof of Theorem~\ref{thm: adv_hyper_tuning}}

\subsection{Alternative algorithm using advanced private hyperparameter tuning}\label{apx: alt_alg}
\begin{algorithm2e}[tbh]
    \nl \KwIn{
    dataset $S=\{\*x_i, y_i\}_{i=1}^n$, Privacy budget $\varepsilon, \delta$}
    \nl \textbf{Set:}{
        Margin grid $\Gamma=\{\frac{1}{n}, \frac{2}{n}, \frac{4}{n},...,\frac{2^{\lfloor\log_2 n\rfloor}}{n}, 1\}$, Run time distribution $Q=\mathrm{TNB}_{1,\frac{1}{|\Gamma|(n^2-1)}}$,\\
        \quad\quad GDP budget $\mu=\frac{\varepsilon}{6\sqrt{2\log(\nicefrac{|\Gamma|(n^2-1)}{\delta})}}$,\\
    }    
    \nl Initilize hyperparameter set $\Theta=\phi$\\
    \nl \For{$\gamma \in \Gamma$}{ 
    \nl    $k_\gamma = \bigO\left(\frac{1}{\gamma^2}\log(\frac{|\Gamma|(n+2)(n+1)}{\beta})\right)$ \\
    \nl    $\Phi_{\gamma} \sim ( \mathrm{Rad} (\frac{1}{2}) / \sqrt{k_{\gamma}})^{k_{\gamma}\times d}$\\
    \nl    $\Theta = \Theta \cup \{(\gamma, \Phi_{\gamma})\}$
    }    
    \nl  {\small $(\Tilde{\*w}_{\mathrm{out}}, \gamma_{\mathrm{out}}, \Phi_{\gamma_{\mathrm{out}}})=\apvt(\ajlgd(\cdot), \Theta, Q, S, \mu)$} \quad\qquad\CommentSty{Algo.~\ref{alg: priv_tune}}\\
    \nl \KwOut{$(\Tilde{\*w}_{\mathrm{out}}, \gamma_{\mathrm{out}}, \Phi_{\gamma_{\mathrm{out}}})$} 
    \caption{$\cM^\prime (S, \varepsilon, \delta)$ }
    \label{alg: master2}
\end{algorithm2e}

\subsection{Privacy guarantee for $\apvt$ (Algo.~\ref{alg: priv_tune})}\label{apx: proof_privacy}

\begin{lemma}[Corollary 5.5 in \citep{krw24}]\label{lem: antii_gdp}
    Let run time distribution $Q \sim \mathrm{TNB}_{1, r}$, suppose base mechanism is $\mu$-GDP. 
    Then for fixed $\delta>0$, the private selection algorithm $\mathcal{A}$ is $(\varepsilon, \delta)$-DP for:
    \begin{equation}
        \varepsilon = \frac{3}{2}\mu^2 + 3\mu\sqrt{2\log\left(\frac{1}{r \cdot \delta}\right)} + \delta
    \end{equation}
    where $\cA$ is defined in Theorem 2 of \citet{ps22})
\end{lemma}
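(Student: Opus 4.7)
The plan is to derive the statement as a specialization of the general private-selection analysis of \citet[Theorem~2]{ps22}, sharpened via the privacy-profile accounting of \citet{krw24}. The route has three layers: (a) convert the base mechanism's $\mu$-GDP into an approximate-DP guarantee at some auxiliary level $\delta_0$; (b) plug that guarantee into the private-selection theorem specialized to $Q = \mathrm{TNB}_{1,r}$, whose PGF $\mathrm{PGF}(x) = rx/(1-(1-r)x)$ produces a clean Liu--Talwar-style factor of three on $\varepsilon$; (c) tune $\delta_0$ to match the advertised form.

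Concretely, Step~(a) applies Lemma~\ref{lem: gdp_to_adp} to obtain that the base mechanism is $(\varepsilon_0, \delta_0)$-DP with $\varepsilon_0 = \tfrac{\mu^2}{2} + \mu\sqrt{2\log(1/\delta_0)}$ for any $\delta_0 > 0$. Step~(b) invokes the selection theorem for $\eta=1$, which yields that $\mathcal{A}$ is $(3\varepsilon_0,\, \delta_0/r + \tilde{\delta})$-DP for any $\tilde\delta > 0$; the factor of three is the classical Liu--Talwar inflation for geometric-runtime selection, while the $1/r$ blowup on $\delta_0$ reflects a union bound over the $\approx 1/r$ expected invocations of the base. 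Step~(c) sets $\delta_0 := r\delta$, so that $\delta_0/r = \delta$ and $\sqrt{2\log(1/\delta_0)} = \sqrt{2\log(1/(r\delta))}$. Substituting gives $\varepsilon = 3\varepsilon_0 = \tfrac{3\mu^2}{2} + 3\mu\sqrt{2\log(1/(r\delta))}$, and the residual $\tilde\delta$ is absorbed into the additive $+\delta$ appearing in the statement, yielding exactly the claimed expression.

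The main obstacle is pinning down the exact constants in Step~(b). Several variants of the selection theorem coexist in the literature (RDP-, zCDP-, approximate-DP-, and privacy-profile-based), and each gives slightly different constants; obtaining the crisp factor of three on $\varepsilon$ together with the clean $\delta_0/r$ blowup on $\delta$ is most easily done via the privacy-profile framework of \citet{krw24}, which simultaneously tracks the Gaussian profile induced by the $\mu$-GDP base and the probability generating function of $\mathrm{TNB}_{1,r}$, avoiding the loose RDP intermediate step. Once that specific version of Step~(b) is in hand, the remaining pieces (the GDP-to-approximate-DP conversion in Step~(a) and the algebraic choice $\delta_0 = r\delta$ in Step~(c)) are purely mechanical bookkeeping.
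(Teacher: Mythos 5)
The paper does not actually prove this lemma: it is imported verbatim as Corollary~5.5 of \citet{krw24} (note the bracketed attribution in the statement), so the ``reference proof'' is a citation, and any self-contained derivation has to reproduce that paper's privacy-profile computation. Your three-layer plan is a sensible sketch of how such a bound could arise, but as written it has two genuine gaps. First, the entire quantitative content lives in Step~(b), where you assert that the selection theorem for $\eta=1$ yields $(3\varepsilon_0,\ \delta_0/r+\tilde\delta)$-DP. No version of the cited results gives exactly this: \citet[Thm.~2]{ps22} is an RDP statement whose $\eta=1$ specialization carries a factor $(1+\eta)(1-1/\lambda')=2(1-1/\lambda')$ on the R\'enyi side plus $\log(1/r)$ and $\log\mathbb{E}[K]$ terms, and converting it to approximate DP does not produce a clean factor of $3$; the Liu--Talwar factor-of-$3$ theorem applies to a different algorithm and has its own $\delta$-bookkeeping. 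You acknowledge this yourself and defer to ``the privacy-profile framework of \citet{krw24}'' --- but that framework \emph{is} the proof of the corollary you are trying to establish, so the argument is circular at its load-bearing step.

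Second, the bookkeeping of the trailing $+\delta$ is wrong: in the statement that term sits inside $\varepsilon$, not inside the $\delta$ parameter. A residual slack $\tilde\delta$ in the second DP parameter cannot simply be ``absorbed'' into an additive term in $\varepsilon$ --- that would require an explicit $\delta$-to-$\varepsilon$ trade-off lemma, which you do not supply. In \citet{krw24} this $+\delta$ emerges directly from the hockey-stick computation on the composed privacy profile (roughly, from bounding a multiplicative $(1+\delta)$ factor by $e^{\delta}$ in the dominating pair), not from a union-bound residual. If you want a self-contained proof, you would need to carry out the privacy-profile calculation: bound the hockey-stick divergence of the selection mechanism by $\mathrm{PGF}_{Q}$ applied to the profile of the $\mu$-GDP base (a Gaussian profile), and then optimize the resulting expression --- Step~(a) and Step~(c) of your plan are indeed the mechanical parts, but Step~(b) is where the theorem actually lives.
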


In the high privacy regime, we can relax the parameter \(\varepsilon\) in the above lemma, obtaining a simplified expression for \(\varepsilon\) that depends linearly on \(\mu\). This leads to more straightforward privacy accounting for Algo.~\ref{alg: priv_tune}.

\begin{corollary}\label{cor: gdp_hyper_simp}
    Conditioned on Lemma~\ref{lem: antii_gdp}, if the GDP budget of base mechanism $\mu \leq 2\sqrt{2\log\left(\frac{1}{r\delta}\right)}$, 
    then private selection mechanism $\mathcal{A}$ is also $\left(6\mu\sqrt{2\log\left(\frac{1}{r \cdot \delta}\right)}+\delta, \delta\right)$-DP.
\end{corollary}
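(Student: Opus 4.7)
The plan is direct and purely algebraic: Lemma~\ref{lem: antii_gdp} already hands us the privacy parameter in closed form, and the corollary just replaces the quadratic term $\frac{3}{2}\mu^2$ by something linear in $\mu$ using the hypothesis that we are in the high-privacy regime. So I would simply take the expression from Lemma~\ref{lem: antii_gdp} and show that, under the stated assumption on $\mu$, the quadratic piece is dominated by the linear piece in $\mu\sqrt{2\log(1/(r\delta))}$, and then monotonicity of $(\varepsilon,\delta)$-DP in $\varepsilon$ finishes the job.

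Concretely, I would first recall that by Lemma~\ref{lem: antii_gdp}, $\mathcal{A}$ is $(\varepsilon^\star,\delta)$-DP with
\begin{equation*}
\varepsilon^\star \;=\; \tfrac{3}{2}\mu^2 \;+\; 3\mu\sqrt{2\log\!\left(\tfrac{1}{r\delta}\right)} \;+\; \delta.
\end{equation*}
Next, I would use the hypothesis $\mu \le 2\sqrt{2\log(1/(r\delta))}$ to bound the quadratic term:
\begin{equation*}
\tfrac{3}{2}\mu^2 \;=\; \tfrac{3}{2}\,\mu\cdot\mu \;\le\; \tfrac{3}{2}\,\mu\cdot 2\sqrt{2\log\!\left(\tfrac{1}{r\delta}\right)} \;=\; 3\mu\sqrt{2\log\!\left(\tfrac{1}{r\delta}\right)}.
\end{equation*}
Substituting this into the expression for $\varepsilon^\star$ gives
\begin{equation*}
\varepsilon^\star \;\le\; 6\mu\sqrt{2\log\!\left(\tfrac{1}{r\delta}\right)} \;+\; \delta \;=:\; \varepsilon.
\end{equation*}
Finally, since $(\varepsilon^\star,\delta)$-DP trivially implies $(\varepsilon,\delta)$-DP whenever $\varepsilon\ge\varepsilon^\star$, we conclude that $\mathcal{A}$ is $\left(6\mu\sqrt{2\log(1/(r\delta))}+\delta,\delta\right)$-DP, as claimed.

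There is essentially no obstacle here; the only ``content'' is recognizing that the hypothesis $\mu\le 2\sqrt{2\log(1/(r\delta))}$ is precisely what is needed to make the quadratic $\mu^2$ term no larger than the linear $\mu\sqrt{2\log(1/(r\delta))}$ term (up to the factor of $2$ that produces the coefficient $6 = 3+3$ in the final bound). The only mild care needed is to verify that the chosen constants line up: the assumption has exactly the constant $2$ that is required so that $\tfrac{3}{2}\mu^2 \le 3\mu\sqrt{2\log(1/(r\delta))}$, which then combines with the original $3\mu\sqrt{2\log(1/(r\delta))}$ to yield the factor $6$. No new probabilistic or differential-privacy machinery is invoked beyond Lemma~\ref{lem: antii_gdp} itself.
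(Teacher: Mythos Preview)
Your proof is correct and follows exactly the same approach as the paper: start from the $\varepsilon$ given by Lemma~\ref{lem: antii_gdp}, use the hypothesis $\mu \le 2\sqrt{2\log(1/(r\delta))}$ to bound $\tfrac{3}{2}\mu^2 \le 3\mu\sqrt{2\log(1/(r\delta))}$, and combine. The paper's proof is just a one-line version of your argument, omitting the intermediate inequality that you made explicit.
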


\begin{proof}
    By Lemma~\ref{lem: antii_gdp} and the upper bound on $\mu$, we have:
    \begin{equation*}
        \begin{aligned}
            \varepsilon &= \frac{3}{2}\mu^2 + 3\mu\sqrt{2\log\left(\frac{1}{r \cdot \delta}\right)} + \delta \leq 6\mu\sqrt{2\log\left(\frac{1}{r \cdot \delta}\right)} + \delta
        \end{aligned}
    \end{equation*}
\end{proof}
Now, we begin to prove the GDP guarantee for $\apvt$ (Algo.~\ref{alg: priv_tune}):
\begin{theorem}\label{thm: gdp_hyper}
    $\apvt$ (Algo.~\ref{alg: priv_tune}) with run time distribution $Q=\mathrm{TNB}_{1, r}$ and input GDP $\mu = \frac{\varepsilon}{6\sqrt{2\log(1/(r\delta))}}$ satisfies $(\varepsilon + \delta, \delta)$-DP, for any $\varepsilon \leq 24 \log(\frac{1}{r\delta})$.
\end{theorem}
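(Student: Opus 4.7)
The plan is to apply Corollary~\ref{cor: gdp_hyper_simp} with the effective per-iteration base mechanism inside $\apvt$, after first verifying that a single iteration satisfies $\mu$-GDP and that the calibration of $\mu$ stays within the high-privacy regime required by the corollary.

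\textbf{Step 1: Per-iteration GDP accounting.} I would first show that one iteration of $\apvt$ (Lines 5--8 of Algo.~\ref{alg: priv_tune}) is $\mu$-GDP. The iteration is a sequential composition of two data-dependent steps: (i) running the base mechanism $m_t = \cM(\theta_t; S, \mu/\sqrt{2})$, which is $\mu/\sqrt{2}$-GDP by input, and (ii) releasing the noisy score $u_t = U(m_t; S) + \xi_t$ with $\xi_t \sim \cN(0, 2\Delta^2/\mu^2)$. Since $U$ has $L_2$ sensitivity $\Delta$, the Gaussian mechanism in step (ii) achieves $\Delta/(\sqrt{2}\Delta/\mu) = \mu/\sqrt{2}$-GDP. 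By adaptive composition of GDP, a single iteration is $\sqrt{(\mu/\sqrt{2})^2 + (\mu/\sqrt{2})^2} = \mu$-GDP (the sampling of $\theta_t$ in Line 5 is data-independent and does not consume budget).

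\textbf{Step 2: Check the precondition of Corollary~\ref{cor: gdp_hyper_simp}.} The corollary requires $\mu \leq 2\sqrt{2\log(1/(r\delta))}$. Plugging in $\mu = \varepsilon/(6\sqrt{2\log(1/(r\delta))})$ and using the assumed bound $\varepsilon \leq 24\log(1/(r\delta))$,
\begin{equation*}
\mu \;\leq\; \frac{24\log(1/(r\delta))}{6\sqrt{2\log(1/(r\delta))}} \;=\; 2\sqrt{2\log(1/(r\delta))},
\end{equation*}
so the precondition holds.

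\textbf{Step 3: Apply the corollary and simplify.} Having identified the per-iteration GDP budget as $\mu$ and verified the precondition, Corollary~\ref{cor: gdp_hyper_simp} yields that $\apvt$ with $Q = \mathrm{TNB}_{1,r}$ satisfies $(\varepsilon', \delta)$-DP with
\begin{equation*}
\varepsilon' \;=\; 6\mu\sqrt{2\log(1/(r\delta))} + \delta \;=\; 6 \cdot \frac{\varepsilon}{6\sqrt{2\log(1/(r\delta))}} \cdot \sqrt{2\log(1/(r\delta))} + \delta \;=\; \varepsilon + \delta,
\end{equation*}
which is the claimed guarantee.

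\textbf{Main obstacle.} There is no substantial technical obstacle once the existing accounting lemmas are invoked; the result is a calibration computation. The only subtlety is bookkeeping the GDP budget split $\mu/\sqrt{2}$ across the base mechanism and the Gaussian noise on the score so that each iteration is exactly $\mu$-GDP in the sense required by the tuning theorem of \citep{ps22} (as restated in Lemma~\ref{lem: antii_gdp}). One should also note that the reduction via Corollary~\ref{cor: gdp_hyper_simp} is tight only in the high-privacy regime $\varepsilon \leq 24\log(1/(r\delta))$; outside this regime one would revert to the quadratic form $\tfrac{3}{2}\mu^2 + 3\mu\sqrt{2\log(1/(r\delta))} + \delta$ from Lemma~\ref{lem: antii_gdp}, but this is not needed here.
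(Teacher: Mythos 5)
Your proposal is correct and follows essentially the same route as the paper: each run is decomposed into the base call at $\mu/\sqrt{2}$-GDP plus the Gaussian noise on the score at $\mu/\sqrt{2}$-GDP, composed to $\mu$-GDP, and then Corollary~\ref{cor: gdp_hyper_simp} (built on Lemma~\ref{lem: antii_gdp}) is invoked. Your explicit verification of the precondition $\mu \leq 2\sqrt{2\log(1/(r\delta))}$ and the final plug-in to obtain $\varepsilon+\delta$ are exactly the steps the paper leaves implicit.
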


\begin{proof}[Proof of Lemma~\ref{thm: gdp_hyper}]
    For any $\theta \in \Theta$, we view the instantiated ``base'' mechanism, denoted by $\cA_{\theta}$, as the adaptive composition of two mechanisms: 
    
    (1) $\cM(\theta;S,\frac{\mu}{\sqrt{2}})$ with GDP budget $\frac{\mu}{\sqrt{2}}$;\\
    (2) Gaussian mechanism with $\sigma = \frac{\Delta^2}{(\mu/\sqrt{2})^2}$, with $\Delta$ being the $L_2$ sensitivity of score function $U$. \\
    By GDP composition, $\cA_{\theta}$ is $\mu$-GDP. The remaining proof follows Corollary~\ref{cor: gdp_hyper_simp}.
\end{proof}

\subsection{ Utility guarantee for $\apvt$ (Algo.~\ref{alg: priv_tune})}

\begin{lemma}[Utility of $\apvt$ (Algo.~\ref{alg: priv_tune}), in expectation]\label{lem: select_in_expectation}
    Let $U_1,...,U_{|\Theta|}$ be possibly random score corresponding to parameters in $\Theta$, $K \sim \mathrm{TNB}_{\eta,r}$, observe $\{U_{i_t} + \xi_{i_t}\}_{t \in [K]}$\footnote{$\{i_j\}_{j \in [K]}$ is a multi-set of $\{1,2,...,|\Theta|\}$}, with $\xi_{i_t} \overset{\mathrm{iid}}{\sim} \mathcal{N}(0, \sigma^2)$, let $U_{\mathrm{out}}=\min\limits_{t \in [K]} (U_{i_t} + \xi_{i_t})$ (break tie arbitrarily) and $t_{\mathrm{out}}$ be corresponding index, then w.p. at least $1-\mathrm{PGF}_{K}(1 - 1/|\Theta|)$:
\begin{equation*}
    \mathbb{E}[U_{\mathrm{out}}] \leq \min_{\theta \in \Theta} \mathbb{E}[U_{\theta}] + \sigma \sqrt{2\log\left(\frac{\eta(1-r)}{r(1-r^\eta)}\right)}
\end{equation*}
\end{lemma}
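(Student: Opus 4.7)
The plan is to single out an optimal index and exploit the event that it gets sampled among the $K$ runs. Define $\theta^* \in \argmin_{\theta \in \Theta} \mathbb{E}[U_\theta]$ and the \emph{good event} $E := \{\theta^* \in \{i_1,\dots,i_K\}\}$. Since $i_1,\dots,i_K$ are i.i.d.\ uniform on $\Theta$ and independent of $K$, a direct computation gives
\begin{equation*}
\mathbb{P}(E^c) \;=\; \mathbb{E}_K\!\bigl[(1 - 1/|\Theta|)^K\bigr] \;=\; \mathrm{PGF}_K(1 - 1/|\Theta|),
\end{equation*}
which reproduces the failure probability stated in the lemma.

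Conditional on $E$, pick any $t^* \in [K]$ with $i_{t^*} = \theta^*$. Since $U_{\mathrm{out}}$ is defined as a minimum over all observed values,
\begin{equation*}
U_{\mathrm{out}} \;\leq\; U_{\theta^*} + \xi_{i_{t^*}} \;\leq\; U_{\theta^*} + \max_{t \in [K]} \xi_{i_t}.
\end{equation*}
Taking expectation over the Gaussian noise and the scores $\{U_\theta\}$ (both of which are independent of the index draws), the first term contributes exactly $\min_{\theta}\mathbb{E}[U_\theta]$. Lemma~\ref{lem: max_gau_tnc}, applied to i.i.d.\ centered Gaussians of random length $K \sim \mathrm{TNB}_{\eta, r}$, then bounds the second term by $\sigma\sqrt{2\log(\eta(1-r)/(r(1-r^{\eta})))}$, which is the claimed slack.

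The main obstacle is handling the conditioning on $E$ cleanly, since Lemma~\ref{lem: max_gau_tnc} is stated for unconditional $K$, while the previous step really produces $\mathbb{E}[\max_t \xi_{i_t}\mid E]$. I would address this by observing that $\max_t \xi_{i_t}$ is a function of $K$ and the noise only, so conditioning on $E$ affects only the distribution of $K$; combined with independence of the scores, the noise, and the index draws, this reduces the conditional bound to its unconditional counterpart up to the failure probability $\mathrm{PGF}_K(1-1/|\Theta|)$. A secondary subtlety is that the statement blends a high-probability quantifier with an expectation inequality, which I read as: on the good event $E$ (carrying the stated probability over $K$ and the index draws), the conditional expectation of $U_{\mathrm{out}}$ over the residual noise and base-mechanism randomness satisfies the displayed bound, which is the natural reading for its downstream use in Theorem~\ref{thm: adv_hyper_tuning}.
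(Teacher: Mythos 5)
Your choice of the good event $E=\{\theta^*\in\{i_1,\dots,i_K\}\}$ and the computation $\mathbb{P}(E^c)=\mathrm{PGF}_K(1-1/|\Theta|)$ match the paper exactly, but your central inequality bounds the wrong quantity. You read $U_{\mathrm{out}}$ literally as the noisy minimum $\min_{t\in[K]}(U_{i_t}+\xi_{i_t})$, for which $U_{\mathrm{out}}\le U_{\theta^*}+\xi_{i_{t^*}}$ is indeed immediate. However, the quantity the lemma must control — as the paper's own proof makes explicit by writing $\min_{t}\{U_{i_t}+\xi_{i_t}\}=U_{\mathrm{out}}+\xi_{t_{\mathrm{out}}}$, and as the downstream use in Lemma~\ref{lem: hyper_raw}/Theorem~\ref{thm: adv_hyper_tuning} requires (there one bounds $\mathbb{E}[\tilde{\cR}_S(\bar{\*w}_{\mathrm{out}})]$, the \emph{true} score of the returned model) — is $U_{i_{t_{\mathrm{out}}}}$, the non-noisy score at the selected index. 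For that quantity your pointwise bound fails: selection suffers a winner's-curse bias, e.g.\ with $U_{\theta^*}=0$, $\xi_{i_{t^*}}=0.1$, and a competitor with true score $5$ and noise $-10$, the competitor is selected and its true score $5$ exceeds $U_{\theta^*}+\max_t\xi_{i_t}=0.1$. The missing step is to subtract the (typically very negative) noise at the selected index: from $U_{i_{t_{\mathrm{out}}}}+\xi_{i_{t_{\mathrm{out}}}}\le U_{\theta^*}+\xi_{i_{t^*}}$ one gets $U_{i_{t_{\mathrm{out}}}}\le U_{\theta^*}+\max_t\xi_{i_t}-\min_t\xi_{i_t}$, which is exactly the paper's decomposition and is why its proof ends up with a $2\,\mathbb{E}[\max_t\xi_t]$ slack (controlled via Lemma~\ref{lem: max_gau_tnc}) rather than your single max term.

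Two smaller remarks. First, your reduction of the conditional expectation to the unconditional one is not as clean as claimed: conditioning on $E$ biases $K$ upward, so $\mathbb{E}[\max_{t\in[K]}\xi_t\mid E]$ is not automatically bounded by the unconditional $\mathbb{E}[\max_{t\in[K]}\xi_t]$; to be fair, the paper's proof asserts independence of the noise from $Y$ and leaves the same subtlety about $K$ implicit, so this is a shared looseness rather than a defect unique to your argument. Second, your reading of the statement's mix of "with probability at least $1-\mathrm{PGF}_K(1-1/|\Theta|)$" and an expectation bound agrees with the paper's (expectation over noise and algorithmic randomness, conditioned on the selection event), so that part is fine; the substantive gap is the missing $-\min_t\xi_{i_t}$ term caused by interpreting $U_{\mathrm{out}}$ as the noisy minimum.
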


\begin{proof}
    First, we notice that:
    \begin{equation*}
        \begin{aligned}
            \min_{t \in [K]} \{ U_{i_t} + \xi_{i_t} \} 
            & = U_{\mathrm{out}} + \xi_{t_{\mathrm{out}}} 
            && \note{\text{definition of $\Tilde{w}_{\mathrm{out}}$}} \\
            & \geq U_\mathrm{out}
            + \min_{t \in [K]} \xi_{i_t}
            && \note{\text{$t_{\mathrm{out}}$ belongs to $[K]$}} \\  
        \end{aligned}
    \end{equation*}
    which implies:
    \begin{equation}
        \begin{aligned}
            U_\mathrm{out} 
            &\leq \min_{t \in [K]} \{ U_{i_t} + \xi_{i_t} \} - \min_{t \in [K]} \xi_{i_t} \\
            &= \min_{t \in [K]} \{ U_{i_t} + \xi_{i_t} \} + \max_{t \in [K]} \xi_{i_t} &&\note{\text{$\xi \stackrel{d}{=} -\xi$}}\\
        \end{aligned}
    \end{equation}
    
    On the other hand:
    \begin{equation*}
        \begin{aligned}
            \min_{t \in [K]} \{ U_{i_t} + \xi_{i_t} \}    
            &\leq \min_{t \in [K]} \{ U_{i_t} \} + \max_{t \in [K]} \{ \xi_{i_t} \}  \\
        \end{aligned}
    \end{equation*}
    
    Denote $Y:=\{\mathrm{index}(\theta^*) \in \{i_j\}_{j\in[K]}\}$ , 
    where $\theta^* \in \argmin\limits_{\theta \in \Theta} \mathbb{E}[L(w_\theta)]$. Conditioned on $Y$, we have:
    
    \begin{equation*}
        \begin{aligned}
            \mathbb{E}[ \min_{t \in [K]} \{ U_{i_t} + \xi_{i_t} \} | Y] 
            & \leq \mathbb{E}[ \min_{t \in [K]} \{ U_{i_t} \} | Y] 
            + \mathbb{E}[\max_{t \in [K]} \{ \xi_{i_t} \} | Y]\\
            & \leq \min_{\theta \in \Theta} \mathbb{E}[L(w_\theta)] 
            + \mathbb{E}[ \max_{t \in [K]} \xi_{t}] 
        \end{aligned}
    \end{equation*}
    The last inequality is by (1) using Jensen's inequality and then by the fact $\theta^* \in \mathcal{I}$ under event $Y$; 
    (2) $\xi_t$ and event $Y$ are independent. Putting things together, we have:
    \begin{equation*}
        \begin{aligned}
            \mathbb{E}[U_\mathrm{out} | Y] 
            &\leq \mathbb{E}[ \min_{t \in [K]} \{U_{i_t} + \xi_{it}\} | Y ] +  \mathbb{E}[\max_{t \in [K]} \xi_{i_t} | Y ]\\
            &\leq \min_{\theta \in \Theta} \mathbb{E}[L(w_\theta)] + 2\mathbb{E}[ \max_{t \in [K]} \xi_{t}] \\
            &\leq \min_{\theta \in \Theta} \mathbb{E}[L(w_\theta)] +  \sigma \sqrt{2\log\left(\frac{\eta(1-r)}{r(1-r^\eta)}\right)} &&\note{\text{Lemma~\ref{lem: max_gau_tnc}}}\\
        \end{aligned}
    \end{equation*}
    The remaining step is by noticing that $\mathbb{P}(Y)\geq1-\mathrm{PGF}_{K}(1-1/|\Theta|)$.
\end{proof}

\subsection{Proof of Theorem~\ref{thm: adv_hyper_tuning}}\label{apx: pf_adv_hp_tune}
We first prove a utility lemma for non-private hyperparameter selection over $\gamma \in \Gamma$:
\begin{lemma}[union bound for non-private grid search]\label{lem: union_bound_on_grid}
    $\mu>0$, 
    For every $\gamma \in \Gamma$, 
    set projection dimension $k_\gamma = \bigO \left(\frac{1}{\gamma^2} \log \left(\frac{|\Gamma|(n+2)(n+1)}{\beta}\right) \right)$ and JL projection matrix $\Phi_\gamma \in \mathbb{R}^{d\times k_{\gamma}}$,
    running $\ajlgd(\Phi_\gamma, \ell_{\gamma/3}, S, \mu)$ to get averaged estimator $\Phi_{\gamma}^{\top}\bar{\*w}_{\gamma}$. Then the following holds w.p. at least $1-\beta$:
    \begin{equation}
        \min_{\gamma \in \Gamma}\mathbb{E}_{\mathcal{A}}[\Tilde{L}_{\gamma/3} (\bar{\*w}_\gamma; \Phi_{\gamma}S)] 
            \leq \min_{ \substack{\gamma \in \Gamma \\ S_{\mathrm{out}} \in \dS_{\mathrm{out}}(\gamma)}}  \bigO \left( \frac{\log^{1/2}(|\Gamma|(n+2)(n+1)/\beta)}{\gamma^2 n \mu} 
            + \frac{|S_{\mathrm{out}}|}{\gamma n} \right) \wedge 1
    \end{equation}
\end{lemma}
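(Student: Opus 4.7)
The plan is to invoke the per-$\gamma$ expected convergence guarantee of Corollary~\ref{cor: exp_conv_jlgd} inside a union bound over the finite margin grid $\Gamma$. First, fix $\gamma \in \Gamma$ and apply Corollary~\ref{cor: exp_conv_jlgd} with $b = 1$ and the failure probability rescaled from $\beta$ to $\beta/|\Gamma|$. The projection dimension $k_\gamma = \bigO\!\bigl(\tfrac{1}{\gamma^2}\log(\tfrac{|\Gamma|(n+2)(n+1)}{\beta})\bigr)$ chosen in the statement of the lemma is exactly what the corollary needs under this rescaling, so with probability at least $1 - \beta/|\Gamma|$ over the randomness of $\Phi_\gamma$ we obtain
\begin{equation*}
\mathbb{E}_{\mathcal{A}}\!\bigl[\Tilde{L}_{\gamma/3}(\bar{\*w}_\gamma; \Phi_\gamma S)\bigr] \leq \min_{S_{\mathrm{out}} \in \dS_{\mathrm{out}}(\gamma)} \bigO\!\left(\frac{\log^{1/2}(|\Gamma|(n+2)(n+1)/\beta)}{n\gamma^2 \mu} + \frac{|S_{\mathrm{out}}|}{\gamma n}\right).
\end{equation*}

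Second, since the JL matrices $\{\Phi_\gamma\}_{\gamma \in \Gamma}$ are drawn independently and the failure event for each depends only on the corresponding $\Phi_\gamma$, a union bound over the $|\Gamma|$ choices of $\gamma$ guarantees that all per-$\gamma$ inequalities above hold simultaneously with probability at least $1 - \beta$. On this joint good event, taking $\min_{\gamma \in \Gamma}$ on both sides directly yields the claimed bound (ignoring the $\wedge 1$ for now), since the minimum of the left-hand sides is upper bounded by the minimum of the right-hand sides.

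Finally, the $\wedge 1$ on the right comes essentially for free: the pair $(\gamma, S_{\mathrm{out}}) = (1, S)$ is always admissible in the outer minimum because $\emptyset \in \dS_{\mathrm{in}}(1)$ (vacuously; the min over an empty set is taken to be $+\infty$), which forces $S \in \dS_{\mathrm{out}}(1)$. Plugging in this pair contributes $\tfrac{1}{n\mu} + 1 = \cO(1)$ to the outer min, so the right-hand side is $\cO(1)$ regardless of $\mu$ and $n$, and the truncation at $1$ can be absorbed into the hidden universal constant. The only real obstacle is bookkeeping: ensuring the $|\Gamma|$ factor inside the logarithm lines up between Corollary~\ref{cor: exp_conv_jlgd} and the lemma statement, which is precisely the reason for rescaling the failure probability by $1/|\Gamma|$ and inflating $k_\gamma$ by a $\log|\Gamma|$ term when choosing the JL dimensions.
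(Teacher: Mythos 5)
Your proposal matches the paper's own proof essentially step for step: invoke the per-$\gamma$ expected convergence guarantee (Corollary~\ref{cor: exp_conv_jlgd} with $b=1$, which is indeed the right reference for the averaged estimator, even though the paper's proof nominally cites Lemma~\ref{lem: erm_fix_margin}) with failure probability rescaled to $\beta/|\Gamma|$, union bound over the independent draws $\{\Phi_\gamma\}_{\gamma\in\Gamma}$, and take the minimum over $\gamma$ on both sides. The only caveat is your justification of the $\wedge 1$: the claim that the pair $(\gamma,S_{\mathrm{out}})=(1,S)$ contributes $\tfrac{1}{n\mu}+1=\cO(1)$ ``regardless of $\mu$ and $n$'' is not right, since $\tfrac{\log^{1/2}(\cdot)}{n\mu}$ is unbounded as $\mu\to 0$; the paper's proof silently ignores the truncation as well (and downstream uses of the lemma, e.g.\ in Lemma~\ref{lem: hyper_raw}, drop the $\wedge 1$), so this does not affect the core argument, but you should not present that sub-claim as a proof of the truncated form for arbitrary $\mu>0$.
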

    \begin{proof}[proof of Lemma~\ref{lem: union_bound_on_grid}]
        By Lemm~\ref{lem: erm_fix_margin}, for any fixed $\gamma$, we have the following holds w.p. at least $1-\beta/|\Gamma|$ over the randomness of $\Phi_\gamma$:
        \begin{equation*}
            \mathbb{E}_{\mathcal{A}}[\Tilde{L}_{\gamma/3} (\bar{\*w}_\gamma; \Phi_\gamma S)] 
            \leq  \min_{S_{\mathrm{out}} \in \dS_{\mathrm{out}}(\gamma)} \bigO   
            \left( \frac{\log^{1/2}(|\Gamma|(n+2)(n+1)/\beta)}{\gamma^2 n \mu} 
            + \frac{|S_{\mathrm{out}}|}{\gamma n} \right) 
        \end{equation*}
        The remaining proof is by taking union bound over $\Gamma$ and taking minimum over $\gamma$ on both sides of the inequality above.
    \end{proof}
    
\begin{lemma}\label{lem: hyper_raw}
    Let $\Gamma$ be any finite subset of $[0,1]$, 
    $\beta = n^{-2}$, 
    $Q=\mathrm{TNB}_{1, \frac{1}{|\Gamma|(n^2 -1)}}$, 
    $\mu=\frac{\varepsilon}{6\sqrt{2\log(|\Gamma|(n^2-1)/\delta)}}$.\\
    $\cM^*$ (Algo.~\ref{alg: master2}) invokes with $S$, $Q$, $\Gamma$ and $\mu$ is
    $(\varepsilon + \delta, \delta)$-DP for any $\varepsilon \leq 24\log(|\Gamma|(n^2-1)/\delta)$.
    In addition, the output averaged estimator $\bar{\*w}_{\mathrm{out}}\in\mathbb{R}^d$ satisfies: w.p. at least $1-2\beta$,
     \begin{equation*}
        \begin{aligned}
            \mathbb{E}[\tilde{\cR}_{S}(\bar{\*w}_{\mathrm{out}})] 
            & \leq \min_{\substack{\gamma \in \Gamma \\ S_{\mathrm{out}} \in \mathcal{S}_{\mathrm{out}}(\gamma)}} 
            \bigO \left(
                \frac{\log^{1/2}(|\Gamma|(n+2)(n+1)/\beta)\log^{1/2}(|\Gamma|(n^2-1)/\delta)}{\gamma^2 \varepsilon n} + \frac{|S_{\mathrm{out}}|}{n\gamma} 
            \right) + \frac{12\log(n^2|\Gamma|/\delta)}{n\varepsilon} \\
        \end{aligned}
    \end{equation*}
\end{lemma}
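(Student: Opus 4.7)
The plan is to separate the privacy and utility analyses, each of which reduces to a short application of previously established results: privacy from Theorem~\ref{thm: gdp_hyper}, and utility from stitching Lemma~\ref{lem: erm_fix_margin_and_composition} (per-$\gamma$ hinge-loss ERM guarantee) to Lemma~\ref{lem: select_in_expectation} (in-expectation utility of $\apvt$).

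The privacy claim is essentially immediate. I would invoke Theorem~\ref{thm: gdp_hyper} with $r = 1/(|\Gamma|(n^2-1))$: the chosen $\mu = \varepsilon/(6\sqrt{2\log(|\Gamma|(n^2-1)/\delta)})$ is exactly $\varepsilon/(6\sqrt{2\log(1/(r\delta))})$, and the assumption $\varepsilon \leq 24\log(|\Gamma|(n^2-1)/\delta)$ matches the $\mu \leq 2\sqrt{2\log(1/(r\delta))}$ side condition of Corollary~\ref{cor: gdp_hyper_simp} used inside that theorem. Hence the output of Algo.~\ref{alg: master2} is $(\varepsilon+\delta,\delta)$-DP.

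For utility, I would proceed in three steps. First, for each fixed $\gamma \in \Gamma$, the instantiation of $\apvt$ passes GDP budget $\mu/\sqrt{2}$ to $\ajlgd$, and Lemma~\ref{lem: erm_fix_margin_and_composition} (with hinge parameter $\gamma/3$) gives an inlier--outlier bound on $\mathbb{E}_\cA[\tilde{L}_{\gamma/3}(\bar{\*w}_\gamma;\Phi_\gamma S)]$ that holds with probability at least $1-\beta/|\Gamma|$ over the JL matrix $\Phi_\gamma$; a union bound over $\gamma \in \Gamma$ makes this simultaneous with probability at least $1-\beta$ over $\{\Phi_\gamma\}_{\gamma \in \Gamma}$. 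Second, I would apply Lemma~\ref{lem: select_in_expectation} with $\eta=1$, $r = 1/(|\Gamma|(n^2-1))$, and Gaussian noise scale $\sigma = \sqrt{2}/(n\mu)$ (since the score $\tilde{\cR}_S$ has sensitivity $1/n$) to deduce, conditional on the event that the optimal $\gamma$ is sampled at least once, that
\begin{equation*}
\mathbb{E}[\tilde{\cR}_S(\bar{\*w}_{\mathrm{out}})] \leq \min_{\gamma \in \Gamma} \mathbb{E}_\cA[\tilde{\cR}_S(\Phi_\gamma^\top \bar{\*w}_\gamma)] + \frac{2\sqrt{\log(1/r)}}{n\mu}.
\end{equation*}
A direct computation of the probability generating function gives $\mathrm{PGF}_K(1-1/|\Gamma|) = (|\Gamma|-1)/(|\Gamma|n^2-1) \leq 1/n^2 = \beta$, so combining with the JL event by a union bound yields the stated $1-2\beta$ guarantee. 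Third, I would replace zero-one loss by hinge loss via $\tilde{\cR}_S(\Phi_\gamma^\top \bar{\*w}_\gamma) = \tilde{\cR}_{\Phi_\gamma S}(\bar{\*w}_\gamma) \leq \tilde{L}_{\gamma/3}(\bar{\*w}_\gamma;\Phi_\gamma S)$, substitute the step-one bound, then plug in $\mu$ and use $\sqrt{\log(1/r)\log(1/(r\delta))} \leq \log(|\Gamma|(n^2-1)/\delta)$ for $\delta \leq 1$ to convert $2\sqrt{\log(1/r)}/(n\mu)$ into the additive $12\log(n^2|\Gamma|/\delta)/(n\varepsilon)$ term claimed in the lemma.

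The main obstacle I anticipate is pure bookkeeping rather than any new technical difficulty: making sure the three sources of randomness (JL margin preservation, the uniform-sampling ``hit'' event in the truncated negative binomial run-time, and the Gaussian selection noise) line up to give exactly $1-2\beta = 1-2/n^2$, and that the nested logarithmic factors combine into the clean $\log^{1/2}(\cdot)\log^{1/2}(\cdot)$ form displayed in the statement. The specific calibration $r = 1/(|\Gamma|(n^2-1))$ is precisely what makes the PGF-event failure probability equal $\beta$, which is why that particular $r$---and not some other polynomial in $n$ and $|\Gamma|$---appears in the theorem.
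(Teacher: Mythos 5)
Your proposal is correct and follows essentially the same route as the paper's proof: privacy via Theorem~\ref{thm: gdp_hyper}, a union bound over $\Gamma$ of the per-$\gamma$ ERM guarantee (the paper packages this as Lemma~\ref{lem: union_bound_on_grid}), Lemma~\ref{lem: select_in_expectation} conditioned on the event that the optimal $\gamma$ is sampled, then the zero-one-to-hinge comparison and substitution of $\mu$. The only cosmetic differences are that you verify the sampling event by directly computing $\mathrm{PGF}_K(1-1/|\Gamma|)$ where the paper invokes Lemma~\ref{lem: TNB_1}, and you work with averaged (sensitivity $1/n$) rather than summed scores; both are equivalent up to the constant bookkeeping you already flag.
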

\begin{proof}[Proof of Theorem~\ref{lem: hyper_raw}]
    By Theorem~\ref{thm: gdp_hyper}, the $\cM^*$ is $(\varepsilon + \delta, \delta)$-DP.
    
    Denote event $Y:=\{\gamma^* \text{ being selected}\}$, we get $\mathbb{P}(Y) \geq 1-\beta$. Under event $Y$:
    \begin{equation*}
        \begin{aligned}
            \mathbb{E}[\hat{\cR}_{S}(\bar{\*w}_{\mathrm{out}})] 
            &\leq \min_{\gamma \in \Gamma} \mathbb{E}[\hat{L}_{\gamma/3} (\bar{\*w}_\gamma; \Phi_{\gamma}S)] + \frac{\sqrt{2}}{\mu}\sqrt{\log\left(|\Gamma|(n^2-1)\right)} \qquad\qquad \note{\text{Lemma~\ref{lem: select_in_expectation} }} \\
            &=  \min_{\gamma \in \Gamma} \mathbb{E}[\hat{L}_{\gamma/3} (\bar{\*w}_\gamma; \Phi_{\gamma}S)] + \frac{12\log^{1/2}(|\Gamma|(n^2-1))\log^{1/2}(|\Gamma|(n^2-1)/\delta)}{\varepsilon} \\
            &\leq  \min_{\gamma \in \Gamma} \mathbb{E}[\hat{L}_{\gamma/3} (\bar{\*w}_\gamma; \Phi_{\gamma}S)] + \frac{12\log(n^2|\Gamma|/\delta)}{\varepsilon} \\
            & \leq \min_{\substack{\gamma \in \Gamma \\ S_{\mathrm{out}} \in \mathcal{S}_{\mathrm{out}}(\gamma)}} 
            \bigO \left(
                \frac{\log^{1/2}(|\Gamma|(n+2)(n+1)/\beta)}{\mu\gamma^2}
                + \frac{|S_\mathrm{out}|}{\gamma} 
            \right) + \frac{12\log(n^2|\Gamma|/\delta)}{\varepsilon}
            \qquad\qquad \note{\text{Lemma~\ref{lem: union_bound_on_grid}}} \\
            & \leq \min_{\substack{\gamma \in \Gamma \\ S_{\mathrm{out}} \in \mathcal{S}_{\mathrm{out}}(\gamma)}} 
            \bigO \left(
                \frac{\log^{1/2}(|\Gamma|(n+2)(n+1)/\beta)\log^{1/2}(|\Gamma|(n^2-1)/\delta)}{\gamma^2 \varepsilon} + \frac{|S_{\mathrm{out}}|}{\gamma} 
            \right) + \frac{12\log(n^2|\Gamma|/\delta)}{\varepsilon} \\
        \end{aligned}
    \end{equation*}

    Thus, at least $1-2\beta$:
    \begin{equation}
        \begin{aligned}
            \mathbb{E}[\tilde{\cR}_{S}(\bar{\*w}_{\mathrm{out}})] 
            & \leq \min_{\substack{\gamma \in \Gamma \\ S_{\mathrm{out}} \in \mathcal{S}_{\mathrm{out}}(\gamma)}} 
            \bigO \left(
                \frac{\log^{1/2}(|\Gamma|(n+2)(n+1)/\beta)\log^{1/2}(|\Gamma|(n^2-1)/\delta)}{\gamma^2 \varepsilon n} + \frac{|S_{\mathrm{out}}|}{n\gamma} 
            \right) + \frac{12\log(n^2|\Gamma|/\delta)}{n\varepsilon} \\
        \end{aligned}
    \end{equation}
\end{proof}

\begin{proof}[Proof of Theorem~\ref{thm: adv_hyper_tuning}]
    By Lemma~\ref{lem: hyper_raw} and $|\Gamma| \leq 4\log_2 (n)$ when $n>2$, we have
    \begin{equation}
        \begin{aligned}
            \mathbb{E}[\Tilde{\cR}_{S}(\bar{\*w}_{\mathrm{out}})] 
            & \leq \min_{\substack{\gamma \in \Gamma \\ S_{\mathrm{out}} \in \mathcal{S}_{\mathrm{out}}(\gamma)}} 
            \bigOt \left(
                \frac{1}{\gamma^2 n \varepsilon}   
                + \frac{|S_{\mathrm{out}}|}{\gamma n} 
            \right) +  \frac{12\log(n^2 \log_2(n) /\delta)}{n\varepsilon} + 2\beta\\
            & = \min_{\substack{\gamma \in \Gamma \\ S_{\mathrm{out}} \in \mathcal{S}_{\mathrm{out}}(\gamma)}} 
            \bigOt \left(
                \frac{1}{\gamma^2 n \varepsilon}   
                + \frac{|S_{\mathrm{out}}|}{\gamma n} 
            \right) +  \frac{12\log(n^2 \log_2(n) /\delta)}{n\varepsilon} + \frac{2}{n^2}\\
            &= \min_{\substack{\gamma \in \Gamma \\ S_{\mathrm{out}} \in \mathcal{S}_{\mathrm{out}}(\gamma)}} 
            \bigOt \left(
                \frac{1}{\gamma^2 n \varepsilon}   
                + \frac{|S_{\mathrm{out}}|}{\gamma n} 
            \right) + \bigOt \left( \frac{1}{n\varepsilon} + \frac{1}{n^2} \right)\\
            &= \min_{\substack{\gamma \in \Gamma \\ S_{\mathrm{out}} \in \mathcal{S}_{\mathrm{out}}(\gamma)}} 
            \bigOt \left(
                \frac{1}{\gamma^2 n \varepsilon}   
                + \frac{|S_{\mathrm{out}}|}{\gamma n} + \frac{1}{n^2} 
            \right) \\
        \end{aligned}
    \end{equation}
    By the fact that $\Tilde{\cR}_{S}(\bar{\*w}_{\mathrm{out}}) \leq 1$, we have:
    \begin{equation}
        \mathbb{E}[\Tilde{\cR}_{S}(\bar{\*w}_{\mathrm{out}})] \leq \min_{ \substack{\gamma \in \Gamma \\ S_{\mathrm{out}} \in \dS_{\mathrm{out}}(\gamma)}}
            \bigOt \left(
                \frac{1}{\gamma^2 n \varepsilon} + \frac{|S_{\mathrm{out}}|}{\gamma n} + \frac{1}{n^2}
            \right) \wedge 1
    \end{equation}
    Finally, apply Theorem~\ref{lem: grid_appx},
    \begin{equation}
        \mathbb{E}[\Tilde{\cR}_{S}(\bar{\*w}_{\mathrm{out}})] \leq \min_{
        \substack{
        S_{\mathrm{out}} \subset S \\ \gamma(S \setminus S_{\mathrm{out}})>0}}
            \bigOt \left(
                \frac{1}{\gamma^2 n \varepsilon} + \frac{|S_{\mathrm{out}}|}{\gamma n} + \frac{1}{n^2}
            \right) \wedge 1
    \end{equation}    
\end{proof}

\section{Beyond unit norm assumption on data}\label{apx: beyond_unit}
For the ERM result, note that the dependence on \(b\) arises solely through the sensitivity and projection dimension, as described in Lemma~\ref{lem: erm_fix_margin}, Lemma~\ref{lem: erm_fix_margin_and_composition}, and Lemma~\ref{lem: whp_erm_margin}. Combined with the adaptivity result established in Corollary~\ref{cor: ada_b}, we obtain:

\begin{theorem}[Empirical risk bound for $\cM^*$]\label{thm: hyper_tuning_b}
    Running $\cM^*$ with $\Tilde{\cR}_S$ satisfies $(\varepsilon, \delta)$-DP, and:\\
    (1) For the averaged estimator $\bar{\*w}_{\mathrm{out}} \in \mathbb{R}^d$, we have utility guarantee in expectation: %
    \begin{equation*}
            \mathbb{E}_{\mathcal{A}}(\Tilde{\cR}_{S}(\bar{\*w}_{\mathrm{out}}))
            \leq \min_{\substack{S_{\mathrm{out}} \subset S \\ \gamma(S \setminus S_{\mathrm{out}})>0}} 
                \bigOt \left(
                \frac{b^2}{\gamma^2 n \varepsilon}   
                + \frac{b|S_{\mathrm{out}}|}{\gamma n} + \frac{1}{\varepsilon} + \frac{1}{n^2}\right) \wedge 1
    \end{equation*} 

    (2) For the last-iterate estimator $\*w_{\mathrm{out}}\in\mathbb{R}^d$, w.p. $1-3/n^2$: 
    \begin{equation*}
            \Tilde{\cR}_{S}(\*w_{\mathrm{out}}) 
            \leq \min_{ \substack{S_{\mathrm{out}} \subset S \\ \gamma(S \setminus S_{\mathrm{out}})>0}}
            \bigOt \left(
                \frac{b^2}{\gamma^2 n\varepsilon} + \frac{b|S_{\mathrm{out}}|}{\gamma n}
            + \frac{1}{n\varepsilon} + \frac{1}{n}\right) \wedge 1\\
    \end{equation*}
\end{theorem}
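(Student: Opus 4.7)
\textbf{Proof plan for Theorem~\ref{thm: hyper_tuning_b}.} The plan is to mirror the proof of Theorem~\ref{thm: hyper_tuning} almost verbatim, tracking the extra factors of $b$ that appear in (i) the sensitivity of the hinge-loss gradient, (ii) the projection dimension needed for margin preservation, and (iii) the doubling grid of candidate margins. The privacy analysis is entirely unchanged: $\arta$ composes $|\Gamma|$ calls of $\ajlgd$, each run with GDP budget $\mu/\sqrt{2|\Gamma|}$, together with one Gaussian selection step of the same budget; by Lemma~\ref{lem: priv_rta} the overall mechanism is $(\varepsilon,\delta)$-DP whenever $\mu=\varepsilon/(2\sqrt{2\log(1/\delta)})$ is allocated appropriately, and this accounting does not depend on $b$.

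For utility, I would first replace the margin grid by the $b$-rescaled doubling set $\Gamma=\{b/n,2b/n,\ldots,2^{\lfloor\log_2 n\rfloor}b/n,b\}$ of size $\lceil\log_2 n\rceil+1$, and choose projection dimension $k_\gamma=\bigO\bigl(\tfrac{b^2}{\gamma^2}\log(|\Gamma|(n+1)(n+2)/\beta)\bigr)$, which is precisely what Lemma~\ref{lem: subset_margin_preserve} asks for under the distortion rate $e=\gamma/b$. For each $\gamma\in\Gamma$, invoke Lemma~\ref{lem: erm_fix_margin_and_composition} to obtain, with probability $\ge 1-\beta/|\Gamma|$ over the JL randomness,
\[
\mathbb{E}_{\cA}\bigl[\tilde L_{\gamma/3}(\bar{\*w}_\gamma;\Phi_\gamma S)\bigr]
\le \min_{S_{\mathrm{out}}\in\cS_{\mathrm{out}}(\gamma)}
\bigOt\!\left(\frac{b^2}{n\gamma^2\mu}+\frac{b|S_{\mathrm{out}}|}{\gamma n}\right),
\]
and the analogous high-probability version via Lemma~\ref{lem: whp_erm_margin} for the last-iterate estimator.

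Next I would fold the noisy selection step into the bound. The score function is the empirical zero-one risk, whose replacement sensitivity is $1/n$ independent of $b$, so the Gaussian noise has standard deviation $\tilde O(1/(n\mu))$. Apply Lemma~\ref{lem: exp_rta} (resp.\ Lemma~\ref{lem: whp_rta}) to pick up an additive $\tilde{\bigO}(1/(n\mu))=\tilde{\bigO}(1/(n\varepsilon))$ (resp.\ with an extra $\tilde{\bigO}(1/n)$ tail term) after substituting $\mu=\Theta(\varepsilon/\sqrt{\log(n)\log(1/\delta)})$. Taking a union bound over $\Gamma$ and using the fact that hinge loss $\ell_{\gamma/3}$ upper bounds zero-one loss yields the grid-restricted bound
\[
\min_{\gamma\in\Gamma}\min_{S_{\mathrm{out}}\in\cS_{\mathrm{out}}(\gamma)}
\bigOt\!\left(\frac{b^2}{n\gamma^2\varepsilon}+\frac{b|S_{\mathrm{out}}|}{\gamma n}\right)
\]
plus the lower-order selection terms.

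The final step is to remove the restriction to $\gamma\in\Gamma$: since the grid is the $b$-rescaled doubling set, Corollary~\ref{cor: ada_b} converts the grid minimum into a minimum over all subsets $S_{\mathrm{out}}\subset S$ with $\gamma:=\gamma(S\setminus S_{\mathrm{out}})>0$, at the cost of universal constants. Clipping by $1$ (since zero-one risk is bounded by $1$) gives the stated $\wedge 1$ form. The main subtleties are (a) verifying that the $b$ factors in Lemma~\ref{lem: erm_fix_margin_and_composition} and Lemma~\ref{lem: whp_erm_margin}, which come from the post-JL clipping radius $2b$ and the hinge-loss sensitivity $\bigO(b/\gamma)$, propagate cleanly through the selection step without multiplying the $1/\varepsilon$ term by $b$; and (b) confirming that Corollary~\ref{cor: ada_b}, whose statement is for dataset $S\in(\cB^d(b)\times\{\pm 1\})^n$ with the $b$-rescaled grid, applies here with exactly the same minimum-over-subsets characterization used in the unit-norm theorem. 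Both are straightforward bookkeeping, so no genuinely new obstacle should arise.
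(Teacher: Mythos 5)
Your proposal follows the paper's intended route exactly: track the $b$-dependence through the hinge-loss gradient sensitivity ($\bigO(b/\gamma)$) and the JL projection dimension ($\bigO(b^2/\gamma^2\log(\cdot))$), mirror the proof of Theorem~\ref{thm: hyper_tuning} with the $b$-rescaled doubling grid, and invoke Corollary~\ref{cor: ada_b} in place of Lemma~\ref{lem: grid_appx} for the grid-to-subset conversion — which is precisely the one-sentence remark the paper gives in Appendix~\ref{apx: beyond_unit}. The only discrepancy is that your derivation yields an additive $\bigOt(1/(n\varepsilon))$ selection term in part (1), which is consistent with Lemma~\ref{lem: main_exp_erm} and strictly tighter than the $1/\varepsilon$ appearing in the theorem statement; that stray term in the paper appears to be a typographical omission of the $n$ factor rather than a gap in your argument.
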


For the population risk, the additional dependence on \(b^2\) arising from the VC dimension can be absorbed into the ERM bound. Consequently, we have:
\begin{theorem}\label{thm: popu_unbound}
    Running $\cM^*$ with $\Tilde{\cR}_S$ satisfies $(\varepsilon, \delta)$-DP, and w.p. $1-4/n^2$ over that last-iterate estimator $\*w_{\mathrm{out}}\in\mathbb{R}^d$:\\
\begin{equation}
    \mathcal{R}_{D}(\*w_{\mathrm{out}}) \leq \min_{\substack{S_{\mathrm{out}} \subset S \\ \gamma(S \setminus S_{\mathrm{out}})>0}} \bigOt \left( \frac{b^2}{n\gamma^2}(1+\varepsilon^{-1}) + \frac{b|S_{\mathrm{out}}|}{n\gamma} + \frac{1}{n\varepsilon} + \frac{1}{n}\right) \wedge 1
\end{equation}
\end{theorem}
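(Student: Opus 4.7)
The plan is to mirror the proof of Theorem~\ref{thm: popu_error}, substituting the generalized empirical risk guarantees from Theorem~\ref{thm: hyper_tuning_b} and tracking the extra factors of $b$ arising from (i) the sensitivity of the hinge loss, (ii) the projection dimension $k_\gamma$, and (iii) the grid approximation with doubling set $\{b/n, 2b/n, \ldots, b\}$.

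First, I would verify that Algo.~\ref{alg: master} with score defined as in Eq.~\ref{eqn: score} (the penalized empirical zero-one loss plus $\frac{5}{n}(\mathrm{VC}(\mathcal{H}_{k_\gamma})\log(2n) + \log(4/\beta))$) is $(\varepsilon,\delta)$-DP. This is identical to the unit-norm case, since the score function has unit sensitivity under replacement (the VC penalty is data-independent) and the base mechanism $\ajlgd$ remains $\mu$-GDP after GDP composition through $\arta$. Thus the same parameter choices $\mu = \varepsilon / (4\sqrt{|\Gamma|\log(1/\delta)})$ yield $(\varepsilon,\delta)$-DP by Lemma~\ref{lem: priv_rta}.

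For the utility analysis, I would apply the relative deviation bound (Lemma~\ref{lem: relative_dev_bound}) for each projected hypothesis class $\mathcal{H}_{k_\gamma}$, where now $k_\gamma = \bigO(b^2 \log(|\Gamma|(n+1)(n+2)/\beta)/\gamma^2)$ because the JL distortion parameter must scale with $b$ (Lemma~\ref{lem: subset_margin_preserve} with $e = \gamma/b$). This yields a VC-penalty of order $\bigOt(b^2/(n\gamma^2))$. Then combining with the high-probability selection guarantee (Lemma~\ref{lem: whp_rta}) over the doubling set $\Gamma = \{b/n, 2b/n, \ldots, b\}$ and substituting the high-probability empirical risk bound from Lemma~\ref{lem: whp_erm_margin} (the $b$-dependent form of Lemma~\ref{lem: erm_fix_margin}), a union bound gives, w.p. at least $1 - 4/n^2$,
\begin{equation*}
\mathcal{R}_D(\*w_{\mathrm{out}}) \leq \min_{\gamma \in \Gamma}\min_{S_{\mathrm{out}}\in \cS_{\mathrm{out}}(\gamma)}\bigOt\!\left(\frac{b^2}{n\gamma^2\varepsilon} + \frac{b|S_{\mathrm{out}}|}{n\gamma} + \frac{b^2}{n\gamma^2}\right) + \bigOt\!\left(\frac{1}{n\varepsilon} + \frac{1}{n}\right).
\end{equation*}
The $b^2/(n\gamma^2)$ VC-penalty gets absorbed into the $b^2/(n\gamma^2\varepsilon)$ term (using $\varepsilon \leq 1$), collapsing to the factor $(1 + \varepsilon^{-1})$ displayed in the theorem.

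Finally, I would invoke Corollary~\ref{cor: ada_b} to pass from the minimum over the discrete doubling grid $\Gamma$ to the minimum over all $S_{\mathrm{out}} \subset S$ with $\gamma = \gamma(S \setminus S_{\mathrm{out}}) > 0$, at the cost of only a constant factor. Clipping at $1$ (since zero-one loss is bounded) yields the stated bound. The main bookkeeping obstacle here is ensuring the $b$-dependence in the JL distortion is chosen consistently across the margin preservation lemma, the sensitivity of $\ell_{\gamma/3}$ (which becomes $\bigO(b/\gamma)$), and the projection dimension $k_\gamma$, so that all three $b$-factors line up to give the quoted $b^2/\gamma^2$ (and $b|S_{\mathrm{out}}|/\gamma$) scaling rather than some worse combination; this was already worked out in Lemma~\ref{lem: erm_fix_margin_and_composition} and Lemma~\ref{lem: whp_erm_margin}, so the remaining work is essentially mechanical.
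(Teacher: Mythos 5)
Your proposal is correct and follows essentially the same route as the paper: the paper proves Theorem~\ref{thm: popu_unbound} by noting that the $b$-dependence enters only through the hinge-loss sensitivity, the JL projection dimension, and the VC penalty (all already tracked in Lemma~\ref{lem: erm_fix_margin_and_composition} and Lemma~\ref{lem: whp_erm_margin}), then reruns the Theorem~\ref{thm: popu_error} argument and invokes Corollary~\ref{cor: ada_b} for the $b$-scaled grid, exactly as you outline. Your write-up is in fact more explicit than the paper's remark-style derivation; the only cosmetic point is that no $\varepsilon\leq 1$ assumption is needed to obtain the $(1+\varepsilon^{-1})$ factor, since it is just the sum of the two $b^2/(n\gamma^2)$ terms.
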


\section{Derivations for general distribution} \label{apx: general_distn}
\subsection{Settings}
Without loss of generality, we assume that \(\Theta\) contains a single item of interest, denoted as \(\theta^*\). We upper bound the probability that \(\theta^*\) is not selected, \(\mathbb{P}(\theta^* \text{ is not selected}) = \mathrm{PGF}(1 - 1/|\Theta|)\), by \(\beta\), and then determine the corresponding distribution parameters. For notational convenience, let \(t := 1/|\Theta|\).

\subsection{Upper bound on parameter $r$ for Truncated Negative Binomial distribution with $\eta>0$}

\begin{lemma}\label{lem: tnb_gen}
    Suppose $Q\sim\mathrm{TNB}_{\eta,r}$ and $\mathbb{P}(\theta^* \text{ is not selected})\leq \beta$, then $r \leq \frac{\beta^{1/\eta}}{|\Theta| +  \beta^{1/\eta} - |\Theta|\beta^{1/\eta}}$
\end{lemma}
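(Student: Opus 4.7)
The plan is to translate the failure-probability hypothesis into an inequality in $r$ via the closed-form PGF of $\mathrm{TNB}_{\eta, r}$ from Appendix~\ref{apx: tnb}, and then solve for $r$. Since the $K$ draws $\theta_1, \ldots, \theta_K$ are i.i.d.\ uniform on $\Theta$, conditional on $K$ the probability that $\theta^*$ is never selected is $(1 - 1/|\Theta|)^K = (1-t)^K$ with $t := 1/|\Theta|$. Averaging over $K \sim \mathrm{TNB}_{\eta, r}$ yields $\mathbb{P}(\theta^* \text{ not selected}) = \mathrm{PGF}_K(1-t)$, and using the identity $1 - (1-r)(1-t) = t + r(1-t)$ the hypothesis becomes
\begin{equation*}
\frac{(t + r(1-t))^{-\eta} - 1}{r^{-\eta} - 1} \;\leq\; \beta.
\end{equation*}

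The next step is to reduce this to the cleaner relation $(t + r(1-t))^{-\eta} \leq \beta\, r^{-\eta}$, equivalently $r / (t + r(1-t)) \leq \beta^{1/\eta} =: s$. This is linear in $r$: rearranging $r \leq s(t + r(1-t))$ gives $r(1 - s(1-t)) \leq s t$, so
\begin{equation*}
r \;\leq\; \frac{s t}{1 - s(1-t)} \;=\; \frac{\beta^{1/\eta}}{|\Theta| + \beta^{1/\eta} - |\Theta|\beta^{1/\eta}},
\end{equation*}
after substituting $t = 1/|\Theta|$ and clearing denominators, which is precisely the bound claimed in the lemma.

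The main obstacle is justifying the reduction step, since rearranging the PGF bound only gives $(t + r(1-t))^{-\eta} \leq \beta r^{-\eta} + (1-\beta)$ and the additive $(1-\beta)$ slack does not disappear from a naive relaxation. The cleanest route is to observe, by a direct substitution, that the candidate boundary value $r_0 := s t/(1 - s(1-t))$ satisfies the equality $(t + r_0(1-t))^{-\eta} = \beta\, r_0^{-\eta}$, so that $r \leq r_0$ is precisely the regime in which the stronger inequality holds; one then invokes monotonicity of $\mathrm{PGF}_K(1-t)$ as a function of $r$ (smaller $r$ yields larger $\mathbb{E}[K]$ and hence smaller failure probability) to pass between the two forms. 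Once this reduction is in place, the remaining algebra collapsing $s t / (1 - s(1-t))$ into the stated shape is a routine simplification.
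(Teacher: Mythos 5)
Your algebra tracks the paper's calculation exactly, and you correctly identified the genuine weak spot: from $\mathrm{PGF}_K(1-t) \le \beta$ one only obtains $(t + r(1-t))^{-\eta} \le \beta\,r^{-\eta} + (1-\beta)$, not the cleaner $(t + r(1-t))^{-\eta} \le \beta\,r^{-\eta}$. But your proposed fix does not close the gap, because the gap is real and the implication as literally stated in the lemma is false for $\beta<1$.

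To see why monotonicity cannot rescue necessity: write $a := (t + r(1-t))^{-\eta}$, $b := r^{-\eta}$, $F(r) := \frac{a-1}{b-1}$, and let $r_0$ be the boundary value you identified, so $a(r_0) = \beta\,b(r_0)$. Then
\begin{equation*}
F(r_0) \;=\; \frac{\beta\,b(r_0) - 1}{b(r_0) - 1} \;<\; \beta \qquad\text{whenever } \beta < 1,
\end{equation*}
so $F(r_0)$ is \emph{strictly} below the target $\beta$. Since $F$ is continuous and increasing in $r$, there is a nondegenerate interval $(r_0, r^*]$ on which $F(r) \le \beta$ still holds; thus $F(r)\le\beta$ does \emph{not} force $r\le r_0$. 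Monotonicity tells you that $\{r : F(r) \le \beta\}$ is an interval $[0,r^*]$ with $r^* > r_0$, which is the opposite of what you need.

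What the paper actually proves (and what is actually true and used downstream) is the \emph{sufficient} direction: $r \le r_0 \Rightarrow F(r) \le \beta$. The key step is the one-sided relaxation
\begin{equation*}
\frac{a-1}{b-1} \;\le\; \frac{a}{b},
\end{equation*}
which is valid precisely because $a \le b$ (indeed $t + r(1-t) \ge r$ so, with $\eta>0$, $(t+r(1-t))^{-\eta} \le r^{-\eta}$, and $b>1$ for $r<1$). Under that relaxation, the design constraint $a/b \le \beta$ is solved by $r \le r_0$, and this then \emph{implies} $F(r)\le\beta$. So the argument runs from the conclusion on $r$ to the bound on the failure probability, not the other way. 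The lemma as printed has its implication reversed (the same reversal appears in Lemma~\ref{lem: TNB_1}, whose conclusion $1/r \ge \beta/((1-\beta)(|\Theta|-1))$ should read $1/r \ge (1-\beta)(|\Theta|-1)/\beta$); the proof and the downstream discussion both want and use the sufficient direction. If you restate your argument as: ``choosing $r\le r_0$ forces $a/b\le\beta$, which dominates $F(r)$,'' you recover the paper's proof essentially verbatim, and the $(1-\beta)$ slack you flagged is exactly what the relaxation throws away in exchange for a closed-form threshold.
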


We first do the relaxation:
\begin{equation*}
    \begin{aligned}
        \mathrm{PGF}_{\mathrm{TNB}(\eta,r)}(1-t) &= \frac{(1-(1-r)(1-t))^{-\eta}-1}{r^{-\eta}-1} \\
        &\leq \frac{(r+t-rt)^{-\eta}}{r^{-\eta}}
    \end{aligned}
\end{equation*}
Then we let:
\begin{equation}\label{eqn: tnb}
    \begin{aligned}
        & \frac{(r+t-rt)^{-\eta}}{r^{-\eta}} \leq \beta \\
        & \Rightarrow r \leq \beta^{1/\eta} (r+t-rt) \\
        & \Rightarrow r \leq \frac{t\beta^{1/\eta} }{1-\beta^{1/\eta} + t\beta^{1/\eta}} \\
        & \Rightarrow r \leq \frac{\beta^{1/\eta}}{|\Theta| + \beta^{1/\eta} - |\Theta|\beta^{1/\eta}}
    \end{aligned}
\end{equation}

\begin{proof}[Proof of Lemma~\ref{lem: TNB_1}]\label{apx: pf_tnb_1}
For $\eta=1$, we can do direct calculations as follows:
    \begin{equation*}
        \begin{aligned}
            \mathrm{PGF}_{\mathrm{TNB}_{1,r}}(1 - t) &= \frac{\frac{1}{1-(1-r)(1-1/|\Theta|)}-1}{1/r-1} \\
            &= \frac{r(1-1/|\Theta|)}{1 - (1-r)(1-1/|\Theta|)} \\
            &\leq \beta
        \end{aligned}
    \end{equation*}
    which yields: 
    \begin{equation*}
        \begin{aligned}
            & r(1-\beta)(1 - 1/|\Theta|) \leq \beta/|\Theta| \\
            &\Rightarrow r \leq \frac{1}{|\Theta|-1} \cdot \frac{\beta}{1-\beta}
        \end{aligned}
    \end{equation*}
\end{proof}

\subsection{Deferred discussion}\label{apx: dist_discussion}
In this section, we set the failure probability as \( \beta = n^{-\alpha} \) for some \( \alpha > 0 \) and the hyperparameter set size as \( |\Theta| = \log(n) \). From Lemma~\ref{lem: tnb_gen}, we obtain \( \frac{1}{r} \geq \beta^{-1/\eta} |\Gamma| + (1 - |\Gamma|) = (n^{\alpha/\eta} - 1) \log(n) + 1 \), which implies that the expected number of repetitions should be at least \( \Tilde{\cO}(n^{\alpha/\eta}) \).

\section{Removal-Margin experiment}\label{apx: removal_mg_experiment}

\subsection{Setting of the experiment}
We train linear SVM classifiers on the CIFAR-10 dataset preprocessed by three methods: (1) pre-trained ViT features, (2) extracted features from pre-trained ViT, and (3) no preprocessing.

We iteratively eliminate "outliers," defined as data points with the smallest normalized margin value \footnote{before setting negative values to zero}(Defn.~\ref{defn: mg}). After each removal, we refit the SVM and recompute the updated margin.

To account for variations in data scale across different classes, we plot the normalized margin, defined as follows:

\begin{definition}[Normalized Margin]\label{defn: mg}
Given dataset $S$ and a linear classifier $h_{\*w}:(\*x,y) \mapsto \mathrm{sign}(y\langle \*w, \*x \rangle)$, we define normalized margin w.r.t. $\*w$ to be:
\begin{equation}\label{eqn: nm}
    \mathrm{Margin} (\*w; S) = \max\left\{ \min\limits_{(\*x, y) \in S} \frac{y \langle \*w, \*x \rangle }{\|\*x\|\|\*w\|}, 0\right\}
\end{equation}

\end{definition}

\subsection{Results}
While the margin remains at zero for the entire dataset, removing a few outliers reveals a clear trend: ViT-based features (green line) achieve a larger margin and grow more rapidly than ResNet-50-based features (red line) as more outliers are removed. Even after removing $1\%$ of the outliers, the dataset remains non-linearly separable, as indicated by the blue lines.
\begin{figure}[h]
\centering
\includegraphics[height=0.65\textwidth,width=1.0\textwidth]{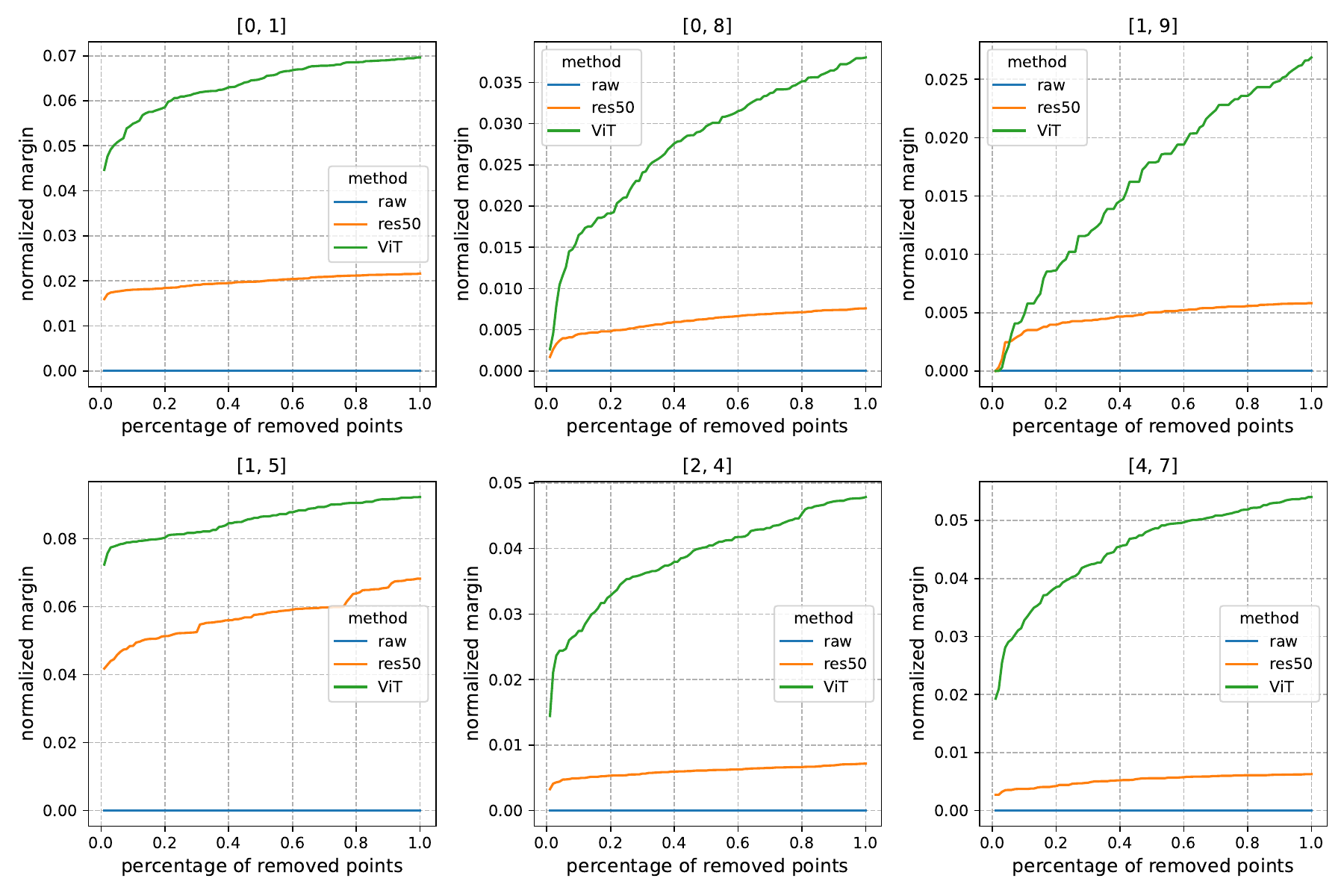}
\caption{\small{The number of removed points (in percentage of $n$) vs normalized margin. The classes from the CIFAR10 training set are labeled in each subtitle. As more points are removed, the margin increases.} }
\label{fig: all_margin_removal}
\end{figure}

\end{document}